\documentclass[twoside]{article}

\usepackage[accepted]{aistats2026}

\usepackage[round]{natbib}

\usepackage{amsmath}
\usepackage{amssymb}
\usepackage{mathtools}
\usepackage{amsthm}

\theoremstyle{plain}
\newtheorem{theorem}{Theorem}[section]
\newtheorem{proposition}[theorem]{Proposition}
\newtheorem{lemma}[theorem]{Lemma}
\newtheorem{corollary}[theorem]{Corollary}
\theoremstyle{definition}
\newtheorem{definition}[theorem]{Definition}
\newtheorem{assumption}[theorem]{Assumption}
\theoremstyle{remark}

\usepackage[table]{xcolor}
\usepackage{tikz}
\usetikzlibrary{positioning}
\newcommand{\sg}[1]{{\textcolor{violet}{\bf [{\sc SG:} #1]}}} \newcommand{\ag}[1]{{\textcolor{blue}{\bf [{\sc AG:} #1]}}} \newcommand{\ds}[1]{{\textcolor{magenta}{\bf [{\sc Dhanya:} #1]}}} \newcommand{\nh}[1]{{\textcolor{purple}{\bf [{\sc Nidhi:} #1]}}}

\renewcommand{\sg}[1]{}
\renewcommand{\ag}[1]{}
\renewcommand{\ds}[1]{}
\renewcommand{\nh}[1]{}

\usepackage{enumitem} 
\usepackage{dsfont}
\usepackage{soul}
\usepackage{bbding}

\usepackage{hyperref}
\usepackage{float}

\definecolor{mydarkblue}{rgb}{0.0, 0.0, 0.55}
\hypersetup{
  colorlinks=true,
  linkcolor=mydarkblue,
  citecolor=mydarkblue,
  filecolor=mydarkblue,
  urlcolor=mydarkblue
}

\usepackage{cleveref}
\usepackage{subcaption}
\usetikzlibrary{calc}

\begin{document}

\runningauthor{G\'{o}is, G{\"u}nl{\"u}k, Rosenfeld, Hegde, Lacoste-Julien, Sridhar}

\twocolumn[

\aistatstitle{The Role of Causal Features in Strategic Classification for Robustness and Alignment}

\vspace{-0.5cm}
\aistatsauthor{ Ant\'{o}nio~G\'{o}is$^{*\, 1}$ \And Sophia G{\"u}nl{\"u}k$^{*\, 1}$ \And Nir Rosenfeld$^{3}$}
\vspace{0.1cm}
\aistatsauthor{ Nidhi Hegde$^{2,4}$ \And Simon Lacoste-Julien$^{1, 2}$ \And Dhanya Sridhar$^{1, 2}$ }
\vspace{0.2cm}
\aistatsaddress{$^{1}$Mila \& Universit\'{e} de Montr\'{e}al \And $^{2}$Canada CIFAR AI Chair}
\vspace{-0.7cm}
\aistatsaddress{$^{3}$Faculty of Computer Science, \\ Technion - Israel Institute of Technology \And $^{4}$Dept. of Computing Science,\\ Amii \& University of Alberta, Canada
}
]

\begin{abstract}

In strategic classification, an institution (e.g., a bank) anticipates adaptation from users who change their features to increase utility in a classification task (e.g., loan repayment).
Since a key challenge is the distribution shift induced by users, we turn to causal models, which have been shown to bound the worst-case out-of-distribution (OOD) risk, and establish several new results that link causality and strategic classification.
First, we show that causal classification leads to optimal classification error after any sufficiently large adaptation, when the noise is bounded in a certain way. Second, when these assumptions do not hold, we show OOD cross-entropy risk of optimal classifiers decomposes into an OOD bias term and a term arising from
not using all observable features, allowing us to understand when causal classifiers have an advantage.
Finally, we show that the use of causal features can allow alignment of long-term incentives between institutions and users, contrasting with previous work that highlights social costs of such approaches.
We validate our theory empirically on synthetic data, finding that our results predict behavior in practice.

\end{abstract}

\section{INTRODUCTION}

As classifiers are deployed in decision-making contexts, it becomes increasingly important to study strategic classification \citep{hardt2016strategic}, where decision-makers seek to maximize accuracy as agents adapt their features in response to classifications.
As we develop better algorithms under varying assumptions about adaptation \citep{levanon2022generalized,klienberg2018incentive}, there are growing concerns about negative social impact on the agents who adapt to these systems, whether outcomes are static \citep{milli2019socialcost} or dynamic \citep{gois2025performative}.
When agents adapt, depending on the underlying causal model \citep{taleoftwoshifts,miller2020strategiccausal}, some changes improve agent outcomes while others constitute gaming the classifier, worsening classification error. 
In this paper, we study whether classifiers can maintain accuracy without sacrificing alignment with predicted agent's goals.

Taking inspiration from the link between causal models and robustness to distribution shifts \citep{peters2017elements}, we explore the impact of causal features in strategic classification, both as a reliable predictor and as an incentive.
We consider settings where an unobserved variable confounds prediction by introducing spurious features that do not cause the outcome of interest.
We first show that a causal classifier can reach optimal loss when the unobserved variable introduces ambiguity in a bounded region of the input space. Intuitively, if agents are willing to adapt enough, a causal classifier will move points away from ambiguous regions, reducing error, while classifiers that exploit spurious correlations run the risk of gaming. Such classifier is optimal not only against a specific adaptation, but to any large-enough adaptation.
Next, we show that even without this bounded influence from the latent, causal features lead to bounded error under changing distributions.
We then study how the predicted population is impacted, when an institution switches from vanilla prediction to methods that leverage agent adaptation -- dubbed a \textit{strategic institution}. Although myopic agents concerned with short-term utility may perceive a drop in utility, considering long-term consequences of gaming behavior shows that agents may be better off interacting with a strategic institution. Surprisingly, this can be the case even if the agents' short-term gain surpasses the long-term cost, when gaming. This shows that, unlike previous work where causal influence of $X$ on $Y$ was not considered, strategic classification may improve agents' utility instead of imposing a social cost.

\section{RELATED WORK}
This paper contributes to the body of work on strategic classification \citep{hardt2016strategic}, where utility-maximizing agents are incentivised to adapt their features in response to deployed models, changing the distribution of their features and potentially even outcomes \citep{klienberg2018incentive,miller2020strategiccausal, perdomo2020performative}.
Work on strategic classification largely focuses on developing algorithms in service of maintaining predictive performance \citep{dong2017, chen2019classifystrategic, ahmadi2020strategicperceptron, rosenfeld2021scpractical}, and studying the social costs and (mis)aligned incentives of strategic institutions \citep{klienberg2018incentive,bechavod2021gaminghelps, levanon2022generalized, krikamol2024causal, chen2025withheldrecourse}.
Our work most closely follows papers that consider both incentive alignment and robust prediction through the lens of causal models \citep{shavit2020learningFS, rosenfeld2020lookahead}.
Most similar to this work is \citet{taleoftwoshifts}, who show that causal classifiers face covariate shift while general classifiers face shifts due to predicted agents gaming. 
We also build upon \citet{miller2020strategiccausal} who show that incentivising improvement requires learning causal features.
We go beyond these results by establishing a richer range of implications of causal classifiers for strategic classification, from optimality under bounded ambiguity to robustness under no strong assumptions. We further link causal classification to aligned incentives. \citet{milli2019socialcost} show that strategic classification harms social welfare for the predicted, when there is no causal relation between features and outcomes. \citet{Somerstep_causal_twosided_markets} study agents who can directly manipulate their outcome, identifying conditions where they are positively impacted by strategic classification, in labour markets. We study social welfare when agents can manipulate their features, and outcomes change indirectly via a causal mechanism.

Our work also builds extensively on a long line of work on causality and out-of-distribution (OOD) generalization starting with \citet{Peters2016_invariant_prediction,peters2017elements,heinze2018invariant} that establishes that causal models have bounded OOD risk since they remain invariant to changes in the feature distribution. 
Much of work in this area focuses on using data from multiple distributions to discover causal models based on the invariance principle \citep{Arjovsky2019_invariant_risk_minimization,Perry2022_sparse_mechanism_shift,Eastwood2022_qrm,rojascarulla2018invariant}. 
However, \citet{magliacane2018causalbias} go further and show that regression under domain shift entails a trade-off: restricting to invariant features  guarantees robustness but may sacrifice predictive information, while using all features risks unbounded error in the target domain. 
Our work follows mostly closely from this contribution. We establish this tradeoff in the case of strategic classification, analyzing directly the post-adaptation cross entropy loss.

\begin{figure}[t]
    \centering \includegraphics[width=0.55\linewidth]{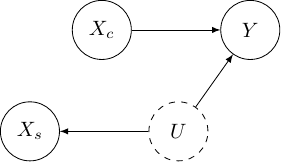}
    \caption{Causal graph for our data generating process. $U$ is an unobserved confounder between outcome $Y$ and spurious feature $X_s$. $X_c$ is a direct cause of $Y$.}
    \label{fig:causal_graph}
\end{figure}

\section{SETTING}

In strategic classification, we consider institutions that deploy classifiers to make decisions, and predicted agents that strategically respond to these classifications.
We start by considering the causal model in \Cref{fig:causal_graph}, which relates $d$ input features and the binary outcome $Y$ realized by a predicted agent.  
In this model, a set of \textbf{causal features} $X_c \in \mathcal{X}_c\subseteq \mathbb R^{d_c}$ directly influence the value of the outcome $Y \in \{0,1\}$ while \textbf{spurious features} $X_s \in \mathcal{X}_s\subseteq \mathbb R^{d_s}$ are only predictive of the outcome $Y$ due to confounding from an unobserved variable $U \in \mathcal{U}$. 
The binary outcome $Y$ is:
\begin{align}
    \begin{split}
        Y &= 1_{\{y_{\mathrm{sco}}(x_c, u) \geq 0\}} \quad \text{or}\\
        Y &\sim \mathrm{Bernoulli}\big( \sigma(y_{\mathrm{sco}}(x_c, u))\big).
    \end{split}
\end{align}
That is, the outcome is a deterministic or stochastic (potentially nonlinear) function of its causal parents, as specified by the real-valued function $y_{\mathrm{sco}}(x_c, u)$, and $\sigma$ refers to the standard sigmoid function. We refer to the set of all observed features as $X \in \mathcal{X}\subseteq \mathbb R^d$.
This assumed causal model captures many decision-making scenarios. Consider predicting whether an applicant can pay back their loan: an agent's income might be causally linked to loan defaulting (the outcome), while an agent's level of education might only spuriously predict defaulting due to latent common causes like socio-economic status. 

Typically, the institution seeks a classifier $f:\mathcal X\rightarrow \mathbb{R}$ based on \emph{all available features} and make decisions using the function $h: \mathcal X \rightarrow \{0,1\}$ such that,
\begin{align}
    \begin{split}
        \hat y = h(x) &= 1_{\{f(x) \geq 0\}}
    \end{split}
\end{align}
In standard classification, we want a classifier $\hat f(x)$ that minimizes classification errors (i.e., 0-1 loss),
\begin{align}
    \hat f \in \underset{f\in\mathcal F}{\arg\min} \underset{x,y}{\mathbb E}[\ell_{0-1}( 1_{\{f(x) \geq 0\}}, y)].
\end{align}
Since the 0-1 loss is costly to minimize directly, we typically deploy classifiers $\hat f(x)$ that minimize a surrogate loss function, e.g., the cross entropy loss that we study later. 
We also distinguish between two families of classifiers in this work: $\mathcal{F}$, the family of classifiers that use all available features, and $\mathcal{F}_{\mathrm{causal}} \left(\subset \mathcal{F}\right)$ which is a subset of classifiers that only use the causal features (essentially masking the spurious ones). 

In the strategic classification setting, however, after the institution deploys the $\hat f(x)$ and its corresponding decision rule $h(x)$, the predicted agents respond to the classification they receive. Each predicted agent gains utility $\delta$ from a positive prediction ($\hat y =1$), and 0 utility from $\hat y=0$. Hence, a utility-maximizing agent spends at most a budget $\delta$ to flip $\hat y =0$ into $\hat y =1$. Expanding on the typical assumption that $\delta$ is static, we provide results that are valid both for a static $\delta$ and for any $\delta'>\delta$, i.e. under a shift in the agents' budget. This represents, for instance, malicious or highly motivated actors -- such as an applicant who has a lot to gain by getting a loan approved -- who will go to greater lengths to adjust their features than what is observed in historical training data. Given knowledge of $\hat f$ and a cost function $c(x,x'):\mathcal X\times\mathcal X\rightarrow \mathbb R_{\geq 0}$, agents with $\hat y=0$ compute the cheapest intervention over $x$ that flips their prediction:
\begin{align}
\begin{split}
\label{eq:agent_adaptation}
    \Delta_{h}(x;\delta):  \mathcal X \times \mathcal F\times \mathbb R_{\geq 0}\rightarrow \mathcal X \triangleq \\
    \underset{x'\in \mathcal{X}}{\arg\max} \ \,\delta h(x')-c(x,x')
\end{split}
\end{align}

In the rest of the paper we assume a tie-breaking rule, making this solution set a singleton. 
Since predicted agents can take actions to change their features in order to improve their outcome, as described above, classifiers induce new distributions over the population. We denote the \textit{post-adaptation distribution} (after one step of adaptation) as $\mathcal{D}^{(f, \delta)}$, where $\mathcal{D}$ is the original distribution, $f$ is a classifier that induces the adaptation, and $\delta$ is the agents' budget. Formally, $\mathcal{D}^{({f}, \delta)}$ is obtained by mapping each point $(x,y)$ from $\mathcal{D}$ to its adapted counterpart, i.e. \begin{align}(x,y) \mapsto (\Delta_h(x), y(\Delta_h(x),u)),\end{align} 
where the notation $y(\Delta_h(x),u)$ is the counterfactual outcome after adaptation given observed features $x$ and the unobserved value $u$. 
In the running example, this counterfactual reflects whether a particular loan applicant would repay their loan if they increased their salary from $x$ to $x'$, given their unobserved socioeconomic status $u$. 

An institution that is \textbf{strategic} anticipates the responses of the predicted agents, and seeks a classifier $f^*(x)$ that minimizes the classification error after \emph{one step of adaptation.} That is, the institution wants to minimize their post-adaptation 0-1 loss,
\begin{align}
   \mathbb E_{x,u} [\ell_{0-1}(h(\Delta_h(x)),y(\Delta_h(x),u))].
\end{align}

\textbf{Main idea.} The goal of this paper is to shed light on how the causal model (in \Cref{fig:causal_graph}) of the outcome is related to the optimal post-adaptation classifier $f^*(x)$.
At a high-level, we leverage two key insights to develop results about the role of causal features in strategic classification. First, when predicted agents change their causal features, they can \textbf{improve} their post-adaptation outcome $y(\Delta_h(x),u)$ \citep{miller2020strategiccausal}. Second, causal features are related to the outcome $y$ by the mechanism $y_\mathrm{sco}(x_c, u)$ that remains invariant to distribution shifts. In contrast, the relationship between spurious features and the outcome changes as predicted agents adapt these features~\citep{magliacane2018causalbias}.
We show that by using causal features, institutions can obtain good post-adaptation loss and align their incentives with those of the predicted agents.

\section{OPTIMALITY OF CAUSAL CLASSIFIERS AFTER ADAPTATION}
\label{section_01_opt}

We start by considering the setting where the outcome $Y$ is a deterministic function of the causal features $X_c$ and the unobserved variable $U$, i.e.,
$y=\mathds 1\{y_\mathrm{sco}(x_c,u) \geq 0\}$.
For ease, in this section, we will directly refer to the decision function $h(x) = 1\{f(x) \geq 0\}$ as the classifier. 
In this setting, we study the impact for an institution when it deploys a causal classifier: a classifier that uses only the causal features $X_c$, variables that are invariant predictors of the outcome. We prove that, under some assumptions, causal classifiers $h(x_c)$ lead to zero post-adaptation 0-1 loss ($\underset{x_c,u}{\mathbb{E}} [\ell_{0-1}(h(\Delta_h(x_c;\delta)),y)]=0$). Furthermore they remain optimal to any budget $\delta'>\delta$, and are in this sense robust to all large-enough adaptations.
To show this result, intuitively, we note that the unobserved variable $U$ creates ambiguity in the outcome values. If this ambiguity can be limited to a specific region in the input space $\mathcal{X}\subseteq \mathbb R^d$, and predicted agents have a sufficient budget $\delta$ to move out of this region, then by deploying a causal classifier $h(x_c)$, an institution induces previously misclassified agents to improve into true positives. Conversely, if the institution deploys a classifier that uses spurious features, predicted agents can waste their adaptation budget $\delta$ on feature changes that do not affect their true outcome value, thereby gaming and only contributing to false positives post-adaptation. We show that this optimality holds even when we consider post-adaptation cross-entropy loss.

We first define the region of causal feature space $\mathcal{X}_c$ where the outcome $Y$ is uncertain, to characterize its boundaries and conditions to move points outside it. 

\begin{definition}
    (Domain with ambiguous outcome) 
    The domain with ambiguous outcome, $\mathcal{X}_{\mathrm{ambiguous}} \subseteq \mathcal{X}_c$, is the subset of causal features where the unobserved $u$ can flip the outcome's sign. A point $x_c$ is considered ambiguous if we can find two latent states, $u, u' \in \mathcal{U}$, such that $y_{\mathrm{sco}}(x_c, u) \geq 0$ but $y_{\mathrm{sco}}(x_c, u') < 0$.

\end{definition}

In what follows we'll study the scenario where this region is bounded, i.e., $\mathcal X_c\setminus \mathcal X_\mathrm{ambiguous}\not= \varnothing$.
We begin by introducing an assumption on the distance from any ambiguous $x_c$ to a non-ambiguous $x_c$, in $L_p$-norm.

\begin{assumption}
\label{ass:ambiguity_comp}
    (Ambiguity compensation by $\delta$) We assume that any ambiguous point can be pushed to a non-negative outcome using a perturbation of bounded size. Formally, there exists a finite constant $\delta \ge 0$ such that for any ambiguous feature $x_c \in \mathcal{X}_{\mathrm{ambiguous}}$, there is a vector $v \in \mathbb{R}^d$ bounded by $\|v\|_p \leq \delta$ that satisfies $y_{\mathrm{sco}}(x_c+v, u) \geq 0$ for all $u$.
\end{assumption}

The intuition for this assumption is that each data point has the possibility to improve its true outcome given enough effort. Improvement is only possible when effort is applied to features $x_c$ with a causal impact on the outcome $y$. Alternatively, applying effort to features $x_s$ that do not affect $y$ but change the prediction consists in gaming. This is close in spirit to recourse, but our assumption applies to the true generative process whereas in recourse it is with respect to the classifier.

To help characterize an optimal classifier, we rely on the concept of an $O_s$-nondecreasing function \citep{boyd2004convex} defined with respect to an orthant $O_s$ in $\mathbb{R}^d$ (where an orthant is a subset of $\mathbb{R}^d$ as detailed in Appendix~\ref{app:opt_01_high_shift}). We define a partial ordering such that $x \preceq_{O_s} y$ if and only if the difference $y-x$ belongs to $O_s$. A function $f$ is then considered $O_s$-nondecreasing when $x \preceq_{O_s} y$ implies $f(x) \leq f(y)$. Similarly, we write $x \prec_{O_s} y$ to indicate strict inequality, meaning $y-x$ lies in the interior of the orthant, denoted $\mathrm{int}(O_s)$.

\begin{figure}
    \centering
    \includegraphics[width=0.9\linewidth]{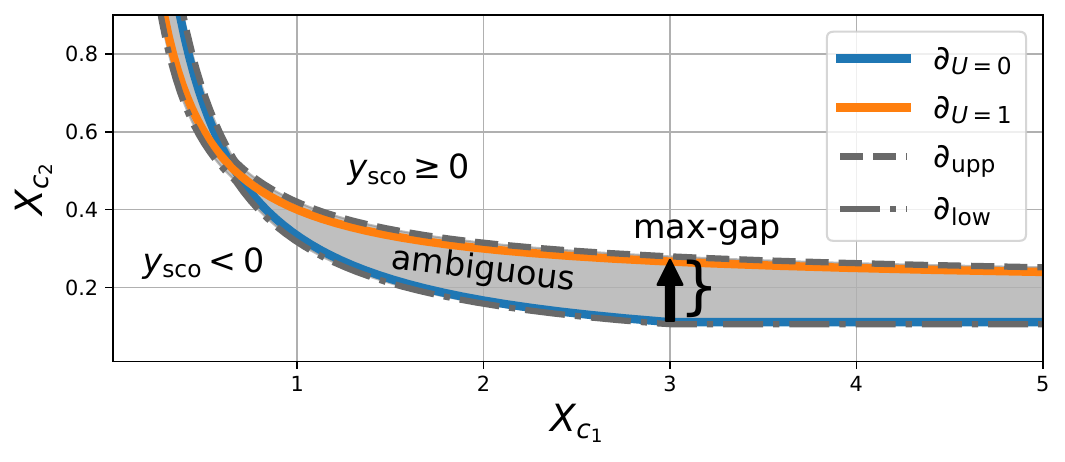}
    \caption{
    Example of bounds $\partial$ for an O-nondecreasing $y_\mathrm{sco}(x_c,u)$. Here $O_s$ is the positive orthant, so increasing either $X_{C_1}$ or $X_{C_2}$ leads to an improvement in $y_\mathrm{sco}$. The unobserved $U\in\{0,1\}$ is a binary variable in this example. $\partial_{U=0}$ and $\partial_{U=1}$ show the boundary for each $u$, where $y_\mathrm{sco}$ flips between negative and positive values. From them we can determine the overall boundaries $\partial_\mathrm{upp}, \partial_\mathrm{low}$, delimiting the ambiguous region where $U$ can flip the sign of $y_\mathrm{sco}$. The \textit{max-gap} is determined from $\partial_\mathrm{upp}$ and $\partial_\mathrm{low}$.}
    \label{fig:bounds}
\end{figure}

Assuming $y_\mathrm{sco}$ is $O_s$-nondecreasing with respect to $x_c$, we define the decision boundary $\partial_u$ for a specific latent state $u$ as the set of features $x_c$ where the score is exactly zero ($y_{\mathrm{sco}}(x_c,u)=0$).
Note that while $y_\mathrm{sco}$ does not need to be $O_s$-nondecreasing with respect to $u$, the orthant $O_s$ must remain the same across all values of $U$. We then let $B = \bigcup_u \partial_u$ denote the union of all such boundaries over $u$. This allows us to define the bounds of the ambiguous domain:

\begin{definition}
    (Upper and lower bound of $\mathcal{X}_\mathrm{ambiguous}$)
    We define the upper bound of the ambiguous domain, denoted $\partial_\mathrm{upp}$, as the subset of boundary points $x_\mathrm{upp} \in B$ for which no strictly larger point belongs to $B$. Formally, the intersection of the shifted interior orthant $\mathrm{int}(x_\mathrm{upp}+O_s)$ with the boundary union $B$ is empty. 
    Similarly, the lower bound $\partial_\mathrm{low}$ is the subset of boundary points $x_\mathrm{low} \in B$ for which no strictly smaller point belongs to $B$, meaning the intersection $\mathrm{int}(x_\mathrm{low}-O_s) \cap B$ is empty.
\end{definition}

Using the previous assumptions together with the continuity of $y_\mathrm{sco}$, we can show that $\partial_\mathrm{upp}$ and $\partial_\mathrm{low}$ separate the ambiguous domain from two non-empty regions of definitive outcomes.

\begin{lemma}
\label{lem:upp-low-bound-sec4}
    (Partition of $\mathcal X_c$ through $\partial_\mathrm{upp}$ and $\partial_\mathrm{low}$) 
    Assume $y=\text{sign}(y_{\text{sco}})$, the score $y_{\text{sco}}$ is continuous and $O_s$-nondecreasing with respect to $x_c$, and Assumption~\ref{ass:ambiguity_comp} holds.  
    
    The bounds $\partial_\mathrm{upp}$ and $\partial_\mathrm{low}$ partition the causal feature space $\mathcal{X}_c$ into three disjoint, non-empty subsets: the ambiguous domain $\mathcal{X}_\mathrm{ambiguous}$, an unambiguously positive region, and an unambiguously negative region. Specifically, for any point $x_c \in \mathcal{X}_c$:
    \begin{enumerate}
        \item If there exists a point $x_\mathrm{upp} \in \partial_\mathrm{upp}$ such that $x_\mathrm{upp} \preceq_{O_s} x_c$, then $x_c$ is outside the ambiguous domain and yields a non-negative score for all latent states ($y_\mathrm{sco}(x_c,u) \geq 0$ for all $u$).
        \item If there exists a point $x_\mathrm{low} \in \partial_\mathrm{low}$ such that $x_c \prec_{O_s} x_\mathrm{low}$, then $x_c$ is outside the ambiguous domain and yields a strictly negative score for all latent states ($y_\mathrm{sco}(x_c,u) < 0$ for all $u$).
    \end{enumerate}
    The proof is provided in Appendix~\ref{app:opt_01_high_shift}, Corollary~\ref{cor:two_bounds_app}.
\end{lemma}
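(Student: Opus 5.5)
The strategy is to exploit monotonicity along the orthant to push sign information from the boundary set $B$ outward, and to use continuity together with an intermediate-value argument to exclude points of the wrong sign. I will work with the two items separately. Non-emptiness of the three regions then follows from Assumption~\ref{ass:ambiguity_comp} combined with $\mathcal X_c\setminus\mathcal X_\mathrm{ambiguous}\neq\varnothing$.

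\textbf{Item 1.} Fix $x_\mathrm{upp}\in\partial_\mathrm{upp}$ with $x_\mathrm{upp}\preceq_{O_s}x_c$, and suppose for contradiction that $y_\mathrm{sco}(x_c,u')<0$ for some $u'$. Since $x_\mathrm{upp}\in B$, there is some $u_0$ with $y_\mathrm{sco}(x_\mathrm{upp},u_0)=0$.

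The first step is to find a strictly larger point $z$, with $x_\mathrm{upp}\prec_{O_s}z$, such that $y_\mathrm{sco}(z,u')<0$. I take $z=x_c+\epsilon e$, where $e\in\mathrm{int}(O_s)$ is the sign vector of the orthant. Continuity in $x_c$ gives $y_\mathrm{sco}(z,u')<0$ for small $\epsilon$, and $z-x_\mathrm{upp}=(x_c-x_\mathrm{upp})+\epsilon e\in\mathrm{int}(O_s)$.

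The second step uses Assumption~\ref{ass:ambiguity_comp} to find a point $w\succeq_{O_s}z$ where $y_\mathrm{sco}(w,u')\ge0$. Here I plan to use a large move along $e$. The assumption only guarantees some perturbation $v$, not one in $O_s$, so I would argue as follows. If $z$ is ambiguous, the assumption gives a point $z+v$ that is nonnegative for all $u$. If $z$ is unambiguously negative, I would either appeal to the fact that the level sets of an $O_s$-monotone continuous function are unbounded along $e$ or replace $v$ by its projection onto $O_s$, using monotonicity; I would adopt this as the working interpretation.

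Given such a $w$, I consider the segment from $z$ to $w$. If $w-z\in O_s$, then every point on the segment is strictly larger than $x_\mathrm{upp}$ in the $\prec_{O_s}$ order. By the intermediate value theorem applied to the continuous map $t\mapsto y_\mathrm{sco}(z+t(w-z),u')$, some point on the segment lies in $\partial_{u'}\subseteq B$. That point lies in $\mathrm{int}(x_\mathrm{upp}+O_s)\cap B$, contradicting the definition of $\partial_\mathrm{upp}$. Hence $y_\mathrm{sco}(x_c,u)\ge0$ for all $u$, so $x_c\notin\mathcal X_\mathrm{ambiguous}$.

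\textbf{Item 2.} This is symmetric. Take $x_c\prec_{O_s}x_\mathrm{low}$ and suppose $y_\mathrm{sco}(x_c,u')\ge0$. Here I move downward: with $e$ as above, monotonicity gives $y_\mathrm{sco}(x_c-te,u')$ nonincreasing in $t$. I need the score to eventually become negative. This follows from the existence of an unambiguously negative region together with monotonicity, or from the same level-set argument as in Item 1.

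The intermediate value theorem then produces a point of $\partial_{u'}$ in $\mathrm{int}(x_\mathrm{low}-O_s)$, contradicting the definition of $\partial_\mathrm{low}$. One edge case is the score being exactly $0$ at $x_c$. Then $x_c$ itself lies in $B$ and satisfies $x_c\prec_{O_s}x_\mathrm{low}$, which is already a contradiction. The strict inequality in this item is what makes that edge case work.

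\textbf{Main obstacle.} The delicate step is guaranteeing, for the fixed $u'$, a sign change reachable by moving monotonically within the orthant. Assumption~\ref{ass:ambiguity_comp} controls the norm of the perturbation but not its direction, and only applies to ambiguous points. I would first try to show that the assumption, combined with $O_s$-monotonicity, lets one replace $v$ by $v^+$, its componentwise projection onto $O_s$. I would also verify that the set of points where the score is nonnegative for all $u$ is $O_s$-upward closed, which monotonicity gives directly.
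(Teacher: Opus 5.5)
Item 1 is the paper's argument, and both of your hedges close at once. You introduced $u_0$ with $y_\mathrm{sco}(x_\mathrm{upp},u_0)=0$ but never used it. Monotonicity gives $y_\mathrm{sco}(z,u_0)\ge 0>y_\mathrm{sco}(z,u')$, so $z$ is ambiguous and Assumption~\ref{ass:ambiguity_comp} applies. The ``unambiguously negative $z$'' branch is therefore vacuous. That is fortunate, because a continuous nondecreasing score need not vanish anywhere along $x+te$ (e.g.\ $\arctan(x_1)-2$), so drop that fallback.

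Moreover, $\|v\|_\infty\le\|v\|_p\le\delta$ gives $s_i(\delta e_i-v_i)=\delta-s_iv_i\ge 0$ for every $i$. Hence $\delta e-v\in O_s$, and $z+\delta e\succeq_{O_s}z+v$ is nonnegative for all $u$. So the ``large move along $e$'' works with length exactly $\delta$. Moving along $e$ rather than $v^+$ has a bonus: the first zero $z+t^*e$ also meets the non-thickness condition in the appendix's definition of $\partial_{u'}$. This holds because every point strictly below it is $\preceq_{O_s}z+(t^*-\eta)e$ for some small $\eta>0$, where the score is negative.

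The genuine gap is in Item 2, which is not symmetric to Item 1. Assumption~\ref{ass:ambiguity_comp} only supplies upward perturbations, and neither of your reasons for $y_\mathrm{sco}(x_c-te,u')$ eventually turning negative works. The level-set claim is false. The existence of an unambiguously negative region is not yet available: $\mathcal X_c\setminus\mathcal X_\mathrm{ambiguous}\neq\varnothing$ says nothing about the negative region, and you only planned to derive non-emptiness afterwards.

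The missing idea is to use the assumption contrapositively, as a bound $\delta$ on the thickness of the ambiguous band.
\begin{enumerate}
\item Take $u_1$ with $x_\mathrm{low}\in\partial_{u_1}$. Then $y_\mathrm{sco}(x_c,u_1)<0$: monotonicity gives $\le 0$, and a zero at $x_c$ would itself lie in $B\cap\mathrm{int}(x_\mathrm{low}-O_s)$.
\item Hence $x_c-\delta e$ is negative for $u_1$. It cannot be ambiguous, since otherwise the computation above would make $x_c=(x_c-\delta e)+\delta e$ nonnegative for all $u$.
\item So $x_c-\delta e$ is unambiguously negative; in particular $y_\mathrm{sco}(x_c-\delta e,u')<0$.
\item The first zero of $u'$ on the segment from $x_c-\delta e$ to $x_c$ is $\preceq_{O_s}x_c\prec_{O_s}x_\mathrm{low}$. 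It therefore lies in $\mathrm{int}(x_\mathrm{low}-O_s)\cap B$, a contradiction.
\end{enumerate}
This also absorbs your separate edge case $y_\mathrm{sco}(x_c,u')=0$. The paper's own proof dismisses Item 2 with ``similarly'', so this is exactly the step it leaves implicit.

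The same device fixes your non-emptiness claim, which does not follow from what you cite. What is needed is $\mathcal X_\mathrm{ambiguous}\neq\varnothing$, and this must be assumed (a singleton $\mathcal U$ makes it empty). Given an ambiguous $a$, the assumption puts $a+v$ in the positive region, and the argument above puts $a-\delta e$ in the negative one. The paper's proof does not address non-emptiness at all.
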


Additionally, we define the \textit{max-gap} as the maximum $L_p$ distance from any point in the ambiguous domain to the upper bound $\partial_\mathrm{upp}$. Formally, this is given by $\underset{u}{\max}\,\underset{x_\mathrm{low}\in \partial_u}{\max}\,\underset{x_\mathrm{upp}\in\partial_\mathrm{upp}}{\min}\,\|x_\mathrm{upp} - x_\mathrm{low}\|_p$. In the following lemma, we establish that this gap is strictly bounded:

\begin{lemma}
    (Bounded max-gap in $\mathcal X_\mathrm{ambiguous}$) 
    For every point $x_c$ in the ambiguous domain $\mathcal{X}_\mathrm{ambiguous}$, the shortest $L_p$ distance to the upper bound $\partial_\mathrm{upp}$ is at most the max-gap, which is guaranteed to be finite. That is, for any $x_c\in\mathcal X_\mathrm{ambiguous}$:

    \begin{gather*}    \underset{x_\mathrm{upp}\in\partial_\mathrm{upp}}{\min}||x_\mathrm{upp}-x_c||_p\leq
    \\
    \underset{u}{\max}\,\underset{x_\mathrm{low}\in \partial_u}{\max}\,\underset{x_\mathrm{upp}\in\partial_\mathrm{upp}}{\min}\,||x_\mathrm{upp}
    -x_\mathrm{low}||_p<+\infty
    \end{gather*}
    
    The proof is provided in Appendix~\ref{cor:bounded_dist}.
\end{lemma}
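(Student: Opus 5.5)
The plan is to prove the two inequalities separately: first compare an arbitrary ambiguous point with a boundary point that dominates it, then bound the max-gap using Assumption~\ref{ass:ambiguity_comp}.

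\textbf{First inequality.} Fix $x_c \in \mathcal{X}_\mathrm{ambiguous}$, and pick $u,u'$ with $y_\mathrm{sco}(x_c,u)\ge 0 > y_\mathrm{sco}(x_c,u')$.

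Move from $x_c$ along a direction $e\in\mathrm{int}(O_s)$. Assumption~\ref{ass:ambiguity_comp} together with monotonicity ensures that $y_\mathrm{sco}(\cdot,u')$ eventually becomes non-negative along this ray. Continuity and the intermediate value theorem then give a point $x_{u'} = x_c + t e \in \partial_{u'}$ with $t\ge 0$, so $x_c \preceq_{O_s} x_{u'}$.

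Next, take $x_\mathrm{upp}^\star \in \partial_\mathrm{upp}$ attaining the inner minimum for $x_{u'}$. I want $x_\mathrm{upp}^\star$ to serve at least as well for $x_c$, i.e. $\|x_\mathrm{upp}^\star - x_c\|_p \le \|x_\mathrm{upp}^\star - x_{u'}\|_p$. This is where $x_c$ lying "below" the boundary point should be used.

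A cleaner route is to reparametrize. Instead of fixing a direction $e$, let $x_{u'}$ be the point of $\partial_{u'}$ closest to $x_c$, and argue geometrically.

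If the direct inequality is awkward, I would instead use the robust form: $x_c$ is itself a point where $y_\mathrm{sco}(x_c,u)\ge0$. By continuity, the segment from $x_c$ toward the closest upper-bound point crosses $\partial_u$. Then I would compare the distances of $x_c$ and of $\partial_u$ points to $\partial_\mathrm{upp}$, using the max over $x_\mathrm{low}\in\partial_u$.

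This comparison step is the main obstacle. Minimal distance to a set is not monotone along the orthant order for general $p$-norms. So I would probably need to argue that the nearest point in $\partial_\mathrm{upp}$ lies in $x_{u'}+O_s$, using Lemma~\ref{lem:upp-low-bound-sec4}. Then $x_\mathrm{upp}-x_c = (x_\mathrm{upp}-x_{u'}) + (x_{u'}-x_c)$, with both summands in $O_s$.

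For orthant-aligned vectors $a,b\in O_s$, a coordinate-wise monotonicity argument gives $\|a\|_p\le\|a+b\|_p$. This is the wrong direction, so the decomposition should instead place $x_c$ on a segment, or the chosen boundary point should be one maximizing the gap. Concretely, I would choose $x_\mathrm{low}$ as the point of $\bigcup_u\partial_u$ reached from $x_c$ by moving in $-O_s$ direction. In that case $x_c$ lies between $x_\mathrm{low}$ and the upper bound, and the absolute-value monotonicity of $p$-norms coordinatewise gives the desired inequality.

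\textbf{Finiteness.} Apply Assumption~\ref{ass:ambiguity_comp} to each boundary point $x_\mathrm{low}\in\partial_u$. Such a point is either ambiguous or lies on the upper bound; in the latter case its distance to $\partial_\mathrm{upp}$ is $0$. For an ambiguous $x_\mathrm{low}$, there is a $v$ with $\|v\|_p\le\delta$ such that $y_\mathrm{sco}(x_\mathrm{low}+v,u)\ge0$ for all $u$.

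By Lemma~\ref{lem:upp-low-bound-sec4} and continuity, the segment from $x_\mathrm{low}$ to $x_\mathrm{low}+v$ meets $\partial_\mathrm{upp}$ (or passes into the positive region, whose boundary is $\partial_\mathrm{upp}$). Hence the inner minimum is at most $\delta$.

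Since this bound is uniform over $u$ and $x_\mathrm{low}$, the max-gap is at most $\delta<+\infty$. If the max/min are not attained, the same argument goes through with sup/inf.
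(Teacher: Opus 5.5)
Your final route is the paper's. You sandwich $x_c$ between a point $x_\mathrm{low}\in\bigcup_u\partial_u$ with $x_\mathrm{low}\preceq_{O_s}x_c$ and an upper-bound point, and use $\|a\|_p\le\|a+b\|_p$ for $a,b\in O_s$. You then get finiteness (indeed max-gap $\le\delta$) from Assumption~\ref{ass:ambiguity_comp} and the segment to $x_\mathrm{low}+v$ crossing $\partial_\mathrm{upp}$. The earlier attempts via $\partial_{u'}$ above $x_c$ can be dropped.

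The gap is the decisive sentence, ``$x_c$ lies between $x_\mathrm{low}$ and the upper bound'', which is not justified. Let $x^*\in\partial_\mathrm{upp}$ be the point nearest to $x_\mathrm{low}$. Your norm inequality gives $\|x^*-x_c\|_p\le\|x^*-x_\mathrm{low}\|_p$ only if $x_c\preceq_{O_s}x^*$. That does not follow from $x_\mathrm{low}\preceq_{O_s}x_c$ and $x_\mathrm{low}\preceq_{O_s}x^*$. For instance, take the positive orthant, $p=2$, $y_\mathrm{sco}(x,u_1)=\max(x_1,x_2)-3$ and $y_\mathrm{sco}(x,u_2)=\max(x_1,x_2)+1$. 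Then $\partial_\mathrm{upp}=\{\max(x_1,x_2)=3\}$, and $x_\mathrm{low}=(-1,-1)\in\partial_{u_2}$ has nearest upper points $(3,-1)$ and $(-1,3)$. Neither dominates the ambiguous point $x_c=(0,0)$. The paper instead takes the nearest upper point of $x_c$, which bounds only one particular distance from $x_\mathrm{low}$, not the minimum over $\partial_\mathrm{upp}$. So the paper's write-up has the same hole.

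The missing idea is translation rather than betweenness. With $x^*$ as above, set $z=x^*+(x_c-x_\mathrm{low})$. Then $x^*\preceq_{O_s}z$, so $y_\mathrm{sco}(z,u)\ge0$ for all $u$ by Lemma~\ref{lem:upp-low-bound-sec4}, while $\|z-x_c\|_p=\|x^*-x_\mathrm{low}\|_p$. Hence it suffices that the segment from the ambiguous point $x_c$ to $z$ meets $\partial_\mathrm{upp}$.

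Your finiteness step uses the same crossing fact, so both halves now rest on it, and it should be proved rather than asserted. Let $w$ be the first point of the segment with $y_\mathrm{sco}(w,u)\ge0$ for all $u$. For finitely many $u$ (or enough uniformity in $u$), $w$ is a topological boundary point of some up-set $\{y_\mathrm{sco}(\cdot,u^*)\ge0\}$. Monotonicity and continuity then force $w\in\partial_{u^*}$. No point of $\mathrm{int}(w+O_s)$ lies in $B$, because $w$ lies strictly below it with all scores non-negative, so $w\in\partial_\mathrm{upp}$.

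Finally, the existence of your $x_\mathrm{low}$ needs a line. Take $u$ with $y_\mathrm{sco}(x_c,u)\ge0$ and $e\in\mathrm{int}(O_s)$. The ray $x_c-te$ eventually passes strictly below a point of $\partial_\mathrm{low}$, where the score is negative by Lemma~\ref{lem:upp-low-bound-sec4}. The last $t$ with $y_\mathrm{sco}(x_c-te,u)\ge0$ then gives a point of $\partial_u$.
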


Note that this result is well-defined following Lemma \ref{lem:upp-low-bound-sec4}, which shows that the boundaries $\partial_{\text{upp}}, \partial_{\text{low}}$, and each specified subset of $\mathcal{X}_c$ are non-empty sets. We show a visualization of
the concept of $\mathcal{X}_\mathrm{ambiguous}$ and its boundaries in \Cref{fig:bounds} for the setting of $\mathcal{X}_c=\mathbb R^2$ and $\mathcal U=\{0,1\}$.

Further, assuming the cost of adapting features is an $L_p$-norm, we prove there exists a causal classifier which achieves zero $\ell_{0-1}$ post-adaptation, for all adaptations where $\delta$ is high enough. This is achieved by moving points away from $\mathcal{X}_\mathrm{ambiguous}$ and anticipating the adaptation's impact on post-adaptation outcome $y$:

\begin{theorem}
    (Causal $\ell_{0-1}$-optimality) 
    \label{the:causal_01opt}
    Suppose the assumptions of Lemma~\ref{lem:upp-low-bound-sec4} hold and the adaptation cost is an $L_p$-norm. Then there exists a finite threshold $e \in \mathbb{R}$ and a causal classifier $h_c \in \mathcal{H}_\mathrm{causal} \subset \mathcal{H}$ (whose outputs are unaffected by spurious features $x_s$) such that for any adaptation budget $\delta \geq e$, the expected post-adaptation $0$-$1$ loss is strictly zero. That is,
    \[
        \mathbb E_{x,u}\left[\ell_{0-1}\left(h_c(\Delta_{h_c}(x;\delta)),\mathrm{sign}(y_{\mathrm{sco}}(\Delta_{h_c}(x;\delta),u))\right)\right] = 0.
    \]
    The proof is provided in Appendix~\ref{app:opt_01_high_shift}, Theorem~\ref{the:causal_01opt_app}.
\end{theorem}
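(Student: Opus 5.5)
I will prove the theorem by explicitly constructing the causal classifier as the indicator of the unambiguously positive region given by Lemma~\ref{lem:upp-low-bound-sec4}. Define
\[
h_c(x) = \mathds 1\{\exists\, x_\mathrm{upp}\in\partial_\mathrm{upp} : x_\mathrm{upp}\preceq_{O_s} x_c\},
\]
which depends only on $x_c$. I take the threshold $e$ to be the max-gap,
\[
e=\max_u\max_{x_\mathrm{low}\in\partial_u}\min_{x_\mathrm{upp}\in\partial_\mathrm{upp}}\|x_\mathrm{upp}-x_\mathrm{low}\|_p ,
\]
which is finite by the bounded max-gap lemma. The goal is to show that every point ends, after adaptation, at a location where $h_c$ and $\mathrm{sign}(y_\mathrm{sco}(\cdot,u))$ agree for every $u$. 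I will split the population into three cases.

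\textbf{Case 1: $h_c(x)=1$ before adaptation.} The agent does not move, since any move has nonnegative cost and cannot raise the utility. By part 1 of Lemma~\ref{lem:upp-low-bound-sec4}, $y_\mathrm{sco}(x_c,u)\ge 0$ for all $u$. Hence the prediction $1$ is correct.

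\textbf{Case 2: $x_c\in\mathcal X_\mathrm{ambiguous}$, or more generally $h_c(x)=0$ and $x_c$ lies within distance $\delta$ of $\partial_\mathrm{upp}$.} The bounded max-gap lemma gives some $x_\mathrm{upp}\in\partial_\mathrm{upp}$ with $\|x_\mathrm{upp}-x_c\|_p\le e\le\delta$. The cost is an $L_p$ norm and moving $x_s$ is useless, because $h_c$ ignores $x_s$. So the agent's optimal response $\Delta_{h_c}(x;\delta)$ is a point $x'$ with $h_c(x')=1$ and cost at most $\delta$. I need to check that the argmax lands inside the positive region rather than merely near it. For this I will argue that the positive region $\{x_c: \exists x_\mathrm{upp}\preceq_{O_s}x_c\}$ contains its boundary points $x_\mathrm{upp}$ themselves, so the minimum is attained at a point with $h_c=1$. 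Ties at exactly cost $\delta$ are resolved by the assumed tie-breaking rule in favour of moving. Part 1 of Lemma~\ref{lem:upp-low-bound-sec4} then gives $y_\mathrm{sco}(x',u)\ge0$ for all $u$, so the counterfactual outcome is $1$ and matches the prediction.

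\textbf{Case 3: $h_c(x)=0$ and the distance to the positive region exceeds $\delta$.} The agent does not move, so $x$ is not in $\mathcal X_\mathrm{ambiguous}$ (by Case 2). I then need $y_\mathrm{sco}(x_c,u)<0$ for all $u$, which I get from the partition in Lemma~\ref{lem:upp-low-bound-sec4}. Any non-ambiguous point is either in the positive region, which would give $h_c=1$, or in the negative region. To apply this I must show that a non-ambiguous point with all scores nonnegative is dominated by some $x_\mathrm{upp}$. Here I will use $O_s$-monotonicity and continuity: walk from $x_c$ in the $-\mathrm{int}(O_s)$ direction until reaching $B$, then move to a maximal element of $B$ below $x_c$.

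I expect the main obstacle to be exactly this last point, together with attainment in Case 2. Both amount to showing that the three regions of Lemma~\ref{lem:upp-low-bound-sec4} exhaust $\mathcal X_c$, and that the positive region is closed. For exhaustiveness I would first try to appeal directly to the partition statement in the lemma (``partition into three disjoint subsets'') and the corollary it cites in the appendix. Only if that is insufficient would I rebuild the argument via monotone paths and continuity of $y_\mathrm{sco}$. With the three cases covered, the loss is zero pointwise, so its expectation is zero for every $\delta\ge e$.
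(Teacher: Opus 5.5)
Your proposal matches the paper's proof essentially step for step. You use the same classifier $h_c$ (positive exactly on the points that $O_s$-dominate some element of $\partial_\mathrm{upp}$), the same threshold $e$ equal to the max-gap, and the same three cases: static positives, points that move into $\{h_c=1\}$ and are correct by part~1 of Lemma~\ref{lem:upp-low-bound-sec4}, and non-moving points that must be negative for every $u$.

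The step you flag as the main obstacle is exactly what the paper's third case takes from that lemma's partition claim without further argument, so appealing to it puts you at the paper's level of rigor. If you rebuild it for finite $\mathcal U$, note that the first point of $B$ met on the downward ray from an all-nonnegative $x_c$ already has $y_\mathrm{sco}\ge 0$ for every $u$, and hence lies in $\partial_\mathrm{upp}$. The extra move to ``a maximal element of $B$ below $x_c$'' is therefore unnecessary, and a maximal element of $B\cap(x_c-O_s)$ need not lie in $\partial_\mathrm{upp}$.
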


The essence of this result is that the causal classifier works by being overly demanding but incentivising false negatives in $\mathcal{X}_\mathrm{ambiguous}$ to adapt, turning them into true positives. Intuitively, a spurious classifier wastes effort in the sense that users use their budget to game (introducing false positives) rather than improve to true positives.
Hence this post-adaptation optimal classifier $h_c(x)$ remains optimal even if agents adapt with $\delta^\prime > \delta$ at a subsequent time point.  This is further validated in the empirical studies, and in Appendix~\ref{app:example_section} we also illustrate this mathematically with an example, showing that the causal classifier characterized by Theorem~\ref{the:causal_01opt} can remain optimal even outside
the range $\delta\in[\mathrm{max\textit{-}gap},+\infty)$ proved here. In that example we show a phase transition, where a causal classifier suddenly becomes optimal as $\delta$ increases.

Additionally, we note that the same optimality result holds for cross-entropy in this setting. For a learned probability estimator $\hat f(x):\mathcal{X}\rightarrow [0,1]$, define $\ell_\mathrm{CE}(\hat f(x), y)\triangleq - \big[ y \log \hat f(x) + (1-y) \log (1-\hat f(x)) \big]$.

\begin{corollary}
\label{the:the:causal_CE_opt_detY}
    (Causal cross-entropy optimality under bounded $\mathcal{X}_\mathrm{ambiguous}$) Under the assumptions of Theorem~\ref{the:causal_01opt}, we have zero $\ell_\mathrm{CE}(\hat f(x),y)$ for all post-adaptation points, using a large-enough $\delta$. Proof in Appendix~\ref{app:opt_CE_high_shift_detY}.
\end{corollary}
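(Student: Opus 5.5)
The plan is to reuse the construction behind Theorem~\ref{the:causal_01opt}, replacing the hard decision rule by a probability estimator that is saturated at the right values. Let $h_c$ and the threshold $e$ be as given by Theorem~\ref{the:causal_01opt}; concretely, $h_c(x)=1$ exactly when $x_c$ lies in the unambiguously positive region of Lemma~\ref{lem:upp-low-bound-sec4}, i.e.\ $x_\mathrm{upp}\preceq_{O_s}x_c$ for some $x_\mathrm{upp}\in\partial_\mathrm{upp}$. Define the causal probability estimator $\hat f(x)\triangleq h_c(x)\in\{0,1\}$, so $\hat f(x)=1$ on the positive region and $\hat f(x)=0$ elsewhere. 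Since $\hat f$ depends only on $x_c$, it belongs to the causal family. The decision rule it induces, $1\{\hat f(x)\geq 1/2\}$, coincides with $h_c$, so agents adapt exactly as in Theorem~\ref{the:causal_01opt}: $\Delta(x;\delta)=\Delta_{h_c}(x;\delta)$ for every $\delta\geq e$.

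The key step is to show that, post-adaptation, every point has $\hat f(\Delta(x))=y(\Delta(x),u)$ exactly. From the proof of Theorem~\ref{the:causal_01opt}, each adapted point falls into one of two cases. In the first case, $h_c(\Delta(x))=1$, so $\Delta(x)_c$ lies in the positive region; then part~1 of Lemma~\ref{lem:upp-low-bound-sec4} gives $y_\mathrm{sco}\geq 0$ for all $u$, hence $y=1=\hat f$. In the second case, $h_c(\Delta(x))=0$, which happens only when the point did not move. Zero 0-1 loss then forces $y=0$ for every $u$ in the support, so $\hat f=0=y$. In both cases the cross-entropy term is $-\log 1=0$. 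The indeterminate terms $0\cdot\log 0$ that appear when $\hat f\in\{0,1\}$ are taken to be $0$ by the standard convention.

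The main obstacle is that the second case needs more than zero 0-1 loss in expectation. Expected zero loss only rules out misclassification almost surely, so in principle a measure-zero set could carry $\hat f=0$ while $y=1$, which would contribute infinite loss pointwise. I would handle this in two ways. First, I would state the claim almost surely, or equivalently as zero expected cross-entropy, where null sets contribute nothing. Second, I would revisit the case analysis in Theorem~\ref{the:causal_01opt}, which I expect is already pointwise. There, points left at $h_c=0$ after adaptation are exactly those that cannot reach the positive region within budget $\delta$. Since $\delta\geq e\geq$ max-gap (bounded lemma), every ambiguous point can reach $\partial_\mathrm{upp}$, so the points that stay are strictly in the unambiguously negative region, and there $y_\mathrm{sco}<0$ for all $u$ by part~2 of Lemma~\ref{lem:upp-low-bound-sec4}.

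A remaining subtlety is the points $x_c$ that are neither in $\mathcal X_\mathrm{ambiguous}$ nor strictly below some $x_\mathrm{low}$. I would argue these either reach $\partial_\mathrm{upp}$ within budget, by the same max-gap bound, or are unambiguously negative. Taking $e$ at least the max-gap makes every non-negative-outcome point reachable. This leaves only unambiguously negative points with $\hat f=0$, and the result follows.
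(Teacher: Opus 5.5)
Your proposal is correct and follows the paper's argument. The paper also takes the saturated estimator $\hat f_c = h_c$ and splits the post-adaptation cross-entropy integral over $\mathcal X_\mathrm{low}$, $\mathcal X_\mathrm{ambiguous}$ and $\mathcal X_\mathrm{upp}$: the two outer terms vanish for every $\delta$ because the label is deterministic there and matches $\hat f_c$, and the middle term vanishes once $\delta \geq e$ empties the ambiguous region. That is your case split on $h_c(\Delta(x))$, organized by region instead. The paper simply asserts $h_c = 1$ on $\mathcal X_\mathrm{upp}$, so your almost-sure reduction ($\ell_\mathrm{CE} = \infty\cdot\ell_{0-1}$ for a $\{0,1\}$-valued estimator, combined with the theorem) is the cleaner way to settle your final subtlety. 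Your appeal to the max-gap bound there does not work, since that lemma only controls distances from ambiguous points, not from unambiguously positive ones.
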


\section{ROBUSTNESS OF CAUSAL CLASSIFIERS IN STOCHASTIC SETTINGS}
\label{sec:ce_robust}

We now consider the setting where the outcome $Y$ is a stochastic function of the causal features $X_c$ and the unobserved variable $U$, i.e., there is some function of causal parents $g(x_c, u) \triangleq P(Y=1 \vert x_c, u) = \sigma(y_\mathrm{sco}(x_c, u))$.
Unlike in the previous setting where we consider a bounded ambiguous region where the latent variable $U$ influences the value of the outcome $Y$, here, without further assumptions, the latent variable is informative about the outcome in all regions of the input.
This means that we cannot simply deploy a causal classifier $f \in \mathcal F_{\mathrm{causal}}$ to move predicted agents out of the ambiguous region to achieve optimal loss post-adaptation.
Nevertheless, in this setting, we show a different advantage of causal classifiers: if we only have access to historical data before observing agents' adaptation, by training causal classifiers, we can incur bounded CE loss after the classifier is deployed, while spurious classifiers risk arbitrarily bad post-adaptation CE loss.

We start by defining optimal classifiers,
\begin{align}
\label{eqn:optimal_class}
\begin{split}
    \hat{f} &= \underset{f \in \mathcal{F}}{\arg\min} \ \mathbb{E}_{D}\left[\mathrm{KL} \left(g(x_c, u) \| f(x) \right) \right]\\ 
    \hat{f}^*_\delta &= \underset{f^* \in \mathcal{F}}{\arg\min} \ \mathbb{E}_{\mathcal{D}^{(\hat{f}, \delta)}}\left[\mathrm{KL} \left(g(x_c, u) \| f^*(x) \right) \right]
\end{split}
\end{align}
The classifier $\hat f$ minimizes the cross entropy loss on the training data -- that is, the historical data obtained before agents adapt to a deployed classifier.
The classifier $\hat f^*$ refers to the classifier that minimizes cross entropy loss on the distribution \emph{induced by deploying the classifier $\hat f$}. That is, we consider samples from the distribution $\mathcal{D}^{(\hat{f}, \delta)}$.

With classifiers defined this way, we can now analyze the impact of minimizing CE loss on \textit{training} data to generalize to samples that result from predicted agents adapting to classifications. First, we show that the CE loss post-adaptation follows a decomposition that gives us insights into the behavior of causal classifiers. 

\begin{lemma}
   (Decomposition of CE loss post-adaptation, informal) Under mild assumptions on measurability and support of the classifiers,
   post-adaptation cross-entropy loss of a classifier $\hat{f}$ that was trained on pre-adaptation data with family $\mathcal{F}$ can be decomposed as: 
    \begin{align*}\begin{split}\mathcal{L}_{\mathrm{CE}}(\hat{f}, \delta) 
&=  \underbrace{\mathbb{E}_{ \mathcal{D}^{(\hat{f}, \delta)}} \left[\mathrm{KL} \left(g(x_c, u) \| \hat{f}^*_\delta(x_c, x_s) \right) \right]}_{\text{incomplete information error}} 
\\[1ex] &+ 
   \mathbb{E}_{ \mathcal{D}^{(\hat{f}, \delta)}} \left[
     g(x_c, u) \log\frac{\hat{f}^*_\delta(x_c, x_s)}{\hat{f}(x_c, x_s)}
    \right. 
    \\ &  \underbrace{\hspace{4em}\left.+ (1-g(x_c, u)) \log\frac{1-\hat{f}^*_\delta(x_c, x_s)}{1-\hat{f}(x_c, x_s)}
   \right]}_{\text{transfer error}} 
   \\[1ex] &+
  \underbrace{\mathbb{E}_{\mathcal{D}^{(\hat{f}, \delta)}} \left[H \left(g(x_c, u) \right) \right]}_{\text{entropy of post-adapt distribution}}\end{split}
    \end{align*}

    where $H(X)$ is the entropy of $X$. 
   \label{lem:ce_decomp}
\end{lemma}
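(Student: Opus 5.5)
The decomposition is an exact algebraic identity, verified pointwise and then integrated. I will first fix the meaning of the left-hand side. The post-adaptation cross-entropy loss of $\hat f$ is
\[
\mathcal{L}_{\mathrm{CE}}(\hat f,\delta)=\mathbb{E}_{\mathcal{D}^{(\hat f,\delta)}}\big[-g(x_c,u)\log \hat f(x) - (1-g(x_c,u))\log(1-\hat f(x))\big].
\]
Here the expectation over the Bernoulli outcome $Y$ given $(x_c,u)$ has already been taken, using $P(Y=1\mid x_c,u)=g(x_c,u)$. This step uses that the causal mechanism $g$ is invariant under adaptation: on the adapted distribution the outcome is still generated by $g$ evaluated at the adapted $x_c$ and the same $u$.

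Next, at each point $(x,u)$ I write the cross-entropy as entropy plus KL divergence:
\[
-g\log\hat f-(1-g)\log(1-\hat f)=H(g)+\mathrm{KL}(g\,\|\,\hat f).
\]
I then add and subtract $\hat f^*_\delta$ inside the logarithms of the KL term:
\[
\mathrm{KL}(g\,\|\,\hat f)=\mathrm{KL}(g\,\|\,\hat f^*_\delta)+g\log\frac{\hat f^*_\delta}{\hat f}+(1-g)\log\frac{1-\hat f^*_\delta}{1-\hat f}.
\]
This follows by expanding $\log\frac{g}{\hat f}=\log\frac{g}{\hat f^*_\delta}+\log\frac{\hat f^*_\delta}{\hat f}$, and doing the same for the complementary terms. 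Taking expectations under $\mathcal{D}^{(\hat f,\delta)}$ and using linearity yields the three terms of the lemma: the incomplete-information error, the transfer error, and the entropy of the post-adaptation distribution.

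The main obstacle is making the linearity step legitimate, and this is where the "mild assumptions" enter. Splitting the expectation of a sum into a sum of expectations requires each term to be integrable, or at least not to produce an $\infty-\infty$ form.

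I would assume the following:
\begin{itemize}
\item $\hat f$ and $\hat f^*_\delta$ are measurable and take values in $(0,1)$ almost surely on the support of $\mathcal{D}^{(\hat f,\delta)}$. This is the support condition, and it makes every logarithm finite pointwise.
\item The map $x\mapsto\Delta_{\hat h}(x;\delta)$ is measurable, so the pushforward $\mathcal{D}^{(\hat f,\delta)}$ is well defined.
\item The minimizer $\hat f^*_\delta$ exists.
\end{itemize}

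Under these assumptions, the entropy term is bounded by $\log 2$, hence integrable. The term $\mathrm{KL}(g\,\|\,\hat f^*_\delta)$ is nonnegative and, by optimality of $\hat f^*_\delta$, no larger than the corresponding quantity for any competitor in $\mathcal F$. In particular it is finite whenever the loss of some member of $\mathcal F$ is finite, for example a constant classifier, whose KL is at most $\log 2$ plus a bounded term.

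The transfer term can then be defined as the difference $\mathcal{L}_{\mathrm{CE}}(\hat f,\delta)-\mathbb{E}[H]-\mathbb{E}[\mathrm{KL}(g\,\|\,\hat f^*_\delta)]$. This is valid in the extended reals, since the subtracted terms are finite, and it gives the identity even when $\mathcal{L}_{\mathrm{CE}}(\hat f,\delta)=+\infty$. That case is exactly the unbounded post-adaptation loss that the paper goes on to attribute to spurious classifiers.
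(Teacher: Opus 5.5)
Your proof is correct and follows the same route as the paper. The paper also adds and subtracts $g\log g+(1-g)\log(1-g)$ to write the cross-entropy as KL plus entropy, then adds and subtracts $g\log\hat f^*_\delta+(1-g)\log(1-\hat f^*_\delta)$ to split off the transfer term, and integrates under $\mathcal{D}^{(\hat f,\delta)}$.

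Your integrability discussion goes beyond the paper, which only states measurability and support-compatibility. One tightening: the transfer integrand equals $\mathrm{KL}(g\|\hat f)-\mathrm{KL}(g\|\hat f^*_\delta)$ pointwise, so its negative part is dominated by the integrable $\mathrm{KL}(g\|\hat f^*_\delta)$. Its expectation therefore genuinely equals the difference, even when the loss is $+\infty$, so you do not need to define the transfer term as that difference.
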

We derive this decomposition in Appendix~\ref{app:CE_loss_decomposition}.
Here, we extend results from \citet{magliacane2018causalbias} who derive a similar tradeoff when considering the bias of classifiers that are trained on different distributions.
In the next result, we consider what happens when we restrict our hypothesis class to functions $f \in \mathcal F_{\mathrm{causal}}$ versus when we consider all functions including those that use spurious features.

\begin{theorem}
    \label{the:robustness}
    (Robustness of causal classifiers) For a classifier $\hat{f}$ in the family $\mathcal{F}_{causal}$ of causal classifiers, assuming that it is the optimal classifier in the sense of \Cref{eqn:optimal_class}, the post-adaption cross-entropy loss $\mathcal{L}_{\mathrm{CE}}(\hat{f}, \delta)$ is bounded by the sum of entropy terms $H(U) + H(Y\vert X)$. The post-adaptation loss of an optimal classifier $\hat{f}$ in the family of spurious classifiers $\mathcal{F}_{all}$ 
    cannot be bounded due to non-zero transfer error.

\end{theorem}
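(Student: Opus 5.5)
The plan is to start from the decomposition of Lemma~\ref{lem:ce_decomp} and bound each of its three terms separately for a causal $\hat f\in\mathcal F_{\mathrm{causal}}$. The key structural fact is that $\hat f$ depends only on $x_c$. Moreover, the mechanism $g(x_c,u)=\sigma(y_\mathrm{sco}(x_c,u))$ is invariant under adaptation, since agents intervene on $x$ but not on the mechanism. Hence the optimal causal predictor is $\hat f(x_c)=\mathbb E[g(x_c,U)\mid x_c]=P(Y=1\mid x_c)$ under the training distribution.

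\textbf{Step 1.} Restrict both minimizations in \Cref{eqn:optimal_class} to $\mathcal F_{\mathrm{causal}}$, and argue that the post-adaptation optimum $\hat f^*_\delta$ coincides with $\hat f$. The adaptation $\Delta_h$ maps $x_c\mapsto x_c'$ deterministically and leaves $u$ untouched, because $U$ is exogenous and not a child of $X$ in \Cref{fig:causal_graph}. So for the causal classifier, the conditional law of $U$ given the adapted $x_c'$ is governed by the pushforward of $P(U\mid x_c)$.

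\textbf{Step 2.} Here I must be careful. If $\Delta_h$ merges several $x_c$ into one $x_c'$, the conditional $P(U\mid x_c')$ can change. So rather than claim the transfer error is exactly zero, I would bound the whole loss directly. For any $x_c'$, the cross-entropy of $\hat f(x_c')$ against $g(x_c',u)$ is the log-loss of predicting $Y$ from $x_c'$ when the true conditional also involves $u$. Adding and subtracting the Bayes predictor $g(x_c',u)$ gives a term $H(Y\mid X_c,U)\le H(Y\mid X)$ plus a KL term. I will then bound the KL term by the mutual information $I(Y;U\mid X_c)\le H(U)$. The bound works because the mixture $\mathbb E_u[g]$ is the optimal $u$-free predictor, and its excess log-loss over the $u$-aware predictor equals a conditional mutual information. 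That mutual information is at most $H(U)$ regardless of how the marginal over $x_c$ shifts.

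\textbf{Main obstacle.} The delicate point is ensuring that the pre-adaptation $\hat f$ is still close enough to that mixture after the shift. I would use invariance of $P(U\mid X_c)$ under covariate shift on $X_c$ to handle the points that keep their $x_c$. For points that moved, I would use the fact that the relevant quantity is still a mixture over $u$ of $g(x_c',u)$, so it is bounded by $H(U)$ via the same log-sum argument. Steps 1 and 2 together give $\mathcal L_{\mathrm{CE}}\le H(U)+H(Y\mid X)$.

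\textbf{Step 3.} For the spurious family, I would give a construction showing the transfer error is unbounded. Take $\hat f$ on training data to exploit a strong correlation between $X_s$ and $U$, so that $\hat f(x_c,x_s)\to 0$ or $1$ in regions of $x_s$. Agents then cheaply shift $x_s$ into a region where $\hat f\approx 1$ while their true $g(x_c,u)$ remains small. The term $(1-g)\log\frac{1-\hat f^*_\delta}{1-\hat f}$ then diverges as $1-\hat f\to 0$, while $\hat f^*_\delta$, refit on the adapted data, stays bounded away from $1$. Scaling the spurious correlation makes this arbitrarily large. This yields the claimed non-boundedness, driven by the nonzero transfer error term of Lemma~\ref{lem:ce_decomp}.
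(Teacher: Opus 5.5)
\textbf{Causal family.} Here you take a genuinely different route from the paper. The paper argues that only the marginal of $X_c$ shifts, so $\hat f=\hat f^*_\delta$ and the transfer error is exactly zero. It then identifies the incomplete-information error with $I(Y;U\mid X_c)\le H(U\mid X_c)=H(U)$ and bounds the entropy term separately.

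You instead bound the whole excess term $\mathbb E_{\mathcal D^{(\hat f,\delta)}}[\mathrm{KL}(g\,\|\,\hat f)]$ at once. This is more robust: it survives best responses that couple $x_c'$ to $x_s$ (non-separable costs, tie-breaking), which is the only way $P(U\mid X_c')$ can actually change. Merging points alone does not change it, since a deterministic function of $X_c$ stays independent of $U$.

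To make this rigorous, drop the stayer/mover split and use one pointwise inequality. Because $U\perp X_c$ in training data, $\hat f(x_c)=\sum_v P(v)\,g(x_c,v)$. Hence $\hat f\ge P(u)g(x_c,u)$ and $1-\hat f\ge P(u)(1-g(x_c,u))$, so $\mathrm{KL}(g(x_c',u)\,\|\,\hat f(x_c'))\le\log\frac{1}{P(u)}$ at every post-adaptation point. Averaging, and using that adaptation leaves the marginal of $U$ unchanged, gives exactly $H(U)$.

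Bounding the movers' share by $H(U)$ on its own is not valid: their average of $\log\frac{1}{P(U)}$ exceeds $H(U)$ when movement favours rare values of $u$. In that same situation the stayers' $P(U\mid x_c)$ is not invariant either. Your entropy bound $H(Y\mid X_c,U)\le H(Y\mid X)$, via $Y\perp X_s\mid X_c,U$, is correct and matches the statement better than the paper's bound by $1$. What your route gives up is the paper's structural reading: zero transfer error, and information loss equal to $I(Y;U\mid X_c)$.

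\textbf{Spurious family.} Step 3 has a genuine gap. In the paper's best-response model, agents do not shift ``into a region where $\hat f\approx 1$''. They move to the cheapest point with $h=1$, i.e.\ onto $\partial_h$, where a continuous $\hat f$ equals the decision threshold $\tau$ ($1/2$ by default). So on adapted points $1-\hat f=1-\tau$ is pinned. Strengthening the spurious correlation (a steeper score) does not change the value of $\hat f$ on its own boundary.

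Moreover, off $\partial_h$ the post-adaptation distribution is just the pre-adaptation one restricted to non-movers. Hence
$\mathcal L_{\mathrm{CE}}(\hat f,\delta)\le \mathbb E_{\mathcal D}[\ell_{\mathrm{CE}}(\hat f(x),y)]+\max\{\log\frac{1}{\tau},\log\frac{1}{1-\tau}\}$,
so with a fixed threshold your construction cannot diverge.

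The missing ingredient is the paper's thresholded analysis. With $h=\mathbf{1}\{\hat f\ge\tau\}$, every adapted point has $\hat f=\tau$, and the adapted part of the transfer error contains $-(1-\bar g)\log(1-\tau)$. Gaming through cheap changes of $x_s$ keeps $\bar g<1$, so this term blows up as $\tau\to 1^-$. Alternatively, you would have to explicitly arrange a jump of $\hat f$ across its decision boundary.
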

We derive this result in \Cref{app:ce_error_analysis_causal,app:ce_error_analysis_spurious}.
When we consider classifiers that only use causal features, i.e. $f \in \mathcal{F}_{causal}$, the pre-adaptation optimal classifier $\hat f$ is the same as the post-adaptation optimal classifier $\hat f^*_\delta$, avoiding transfer error. 
Intuitively, this is because the causal relationship between $X_c$ and the outcome $Y$ remains invariant to changing the feature distribution $\mathbb{P}(X_c)$, a fact that we formalize in \Cref{app:transfer_error_causal}.
However, the optimal causal classifier $\hat f$ incurs incomplete information error by discarding spurious features,
which is equivalent to the conditional mutual information between the latent variable $U$ and $Y$ given causal features $X_c$, and bounded by the entropy of $U$. 

In contrast, in \Cref{app:ce_error_analysis_spurious} we show that a classifier $\hat f$ that uses all the features avoids some incomplete information error (bounded by the entropy of $U$ conditioned on $X_s$ in the post-adaptation distribution). This is because even after predicted agents intervene on their spurious features $X_s$, changing its association with the latent variable $U$, it may still remain informative about $U$ and thus, the outcome $Y$ (which relies on the value of $U$).
However, a classifier that uses all features incurs transfer error exactly because of the intervention that agents perform on their features: by changing their spurious features $X_s$, they can arbitrarily change how well the outcome is predicted by these features. 
Thus, the transfer error of a spurious classifier cannot be bounded. In the  \Cref{app:transfer_error_spurious_analysis}, we further analyze thresholded classifiers and establish conditions under which transfer error can be made arbitrarily large.

Note that defining the classifiers $\hat{f}$ and $\hat{f}^*$ as optimal classifiers before and after adaptation is key to this interpretation, since if these classifier were not optimal, the transfer error could be nonzero even for a causal classifier. 
Finally, note that this analysis can be extended to consider the post-adaptation loss of classifiers $\hat f$ that are trained on data after historical deployments where agents had a different budget $\delta^\prime$, which we show in \Cref{app:ce_decomp_extend}.

\section{INCENTIVE ALIGNMENT}

We now consider the question of how agents are impacted in a strategic setting and whether their interests are aligned with the institution. If $Y$ is static (i.e. there is no causal effect of $X$ on $Y$) and agents seek positive predictions $h(\Delta_h(x))=1$, \citet{milli2019socialcost} show that strategic classification harms utility of predicted agents.
\citet{levanon2022generalized} study the case where alignment is built-in -- the predicted agents gain from accuracy just like the institution, and $Y$ is static. \citet{miller2020strategiccausal} provide an equivalency between agent improvement and causal discovery, but leave unclear whether the agents and/or the institution benefit from this improvement. This leaves unanswered the question of whether strategic classification can benefit the predicted agents, when $X$ causally affects $Y$ and agents seek positive predictions $\hat Y=1$.

We begin by defining the long-term goals of the institution and of the predicted agents. An agent with a positive prediction ($h(\Delta_{h}(x))=1$) has an immediate gain, but may suffer a loss in the long-term if its true outcome is negative ($h(\Delta_{h}(x))=1\land y(\Delta_{h}(x),u)=0$). For instance an agent can get a home loan approved but later lose it by defaulting, or be accepted to college but then fail courses.
Analogously an institution aims to minimize $\ell_{0-1}$, but in the long-term may benefit more from true positives (TPs) than true negatives (TNs) — banks need to identify good borrowers and universities need good students.
We now define long-term rewards or utilities.

\begin{definition}
\label{def:longterm_pred_util}
    (Predicted agents' long-term utility)
Agents gain $\delta$ from a positive prediction, and we let $\delta_2$ denote the loss when agents obtain a false positive.  Setting $\delta_2=0$ represents a short-term goal.

    \begin{align*}
     r_p(h,x,u)=&\delta h (\Delta_{h}(x))-c(x,\Delta_{h}(x))
     \\& -\delta_2 h(\Delta_{h}(x))(1-y(\Delta_{h}(x),u))  
    \end{align*}
\end{definition}

\begin{definition}
\label{def:longterm_inst_util}
    (Institution's long-term utility)
    Institution gains from lowering post-adaptation 0-1 loss $\ell_{0-1}$ and, among correct predictions, prefers true positives over true negatives over the long term.  We call this an $\epsilon$-advantage where $\epsilon$ denotes a loss in the institution utility due to true negatives.  When $\epsilon=0$, the utility represents a short-term goal.
    \begin{align*}
        r_i(h,x,u)=&- \mathds{1}\{h(\Delta_{h}(x))\neq y(\Delta_{h}(x),u)\}
        \\& -\epsilon \mathds{1}\{h(\Delta_{h}(x))=y(\Delta_{h}(x),u)=0\}
    \end{align*}
\end{definition}

We are interested in understanding how these two goals interact in the strategic setting, where agents react to predictions by modifying $X$ and the institution anticipates agent modifications.  For this analysis we introduce the notion of $h$-change, the expected change in utility when the classifier is changed.

\begin{definition}
    ($h$-change) Expected change in utility when switching from classifier $h$ into $h'$, for role $k$, where $k\in\{p,i\}$ is predicted agent $p$ or institution $i$:
    \begin{align*}
        \Delta r_{k}(h',h)&= \mathbb E_{x,u}
        [r_{k}(h',\Delta_{h'}(x),u)]
        \\& \quad -\mathbb E_{x,u}[r_{k}(h,\Delta_{h}(x),u)]
    \end{align*}

\end{definition}

Before we consider the case of strategic agents, we build intuition by first considering static agents that can never adapt, $\Delta_h(x)=x$ ($c(\cdot)\rightarrow+\infty$), and the setting of short-term goals ($\delta_2=\epsilon=0$).

\begin{proposition}
    (Static alignment) Let $\Delta r_{k|Y=y}(h',h)$ be the $h$-change for the subpopulation $Y=y$ (\ref{def:cond_h_change}). Assume $\Delta_h(x)=x$ and $\delta_2=\epsilon=0$. For any pair of classifiers $(\hat f',\hat f)$, institution's goals match the goals of agents whose $Y=1$ but not whose $Y=0$, in the following sense (proof in Appendix~\ref{pro:static_align}):
\begin{align*}
    \Delta r_{p|Y=1}(h',h)>0\iff \Delta r_{i|Y=1}(h',h)>0
    \\
    \Delta r_{p|Y=0}(h',h)<0\iff \Delta r_{i|Y=0}(h',h)>0
\end{align*}
\end{proposition}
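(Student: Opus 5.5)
The plan is to substitute $\Delta_h(x)=x$ and $\delta_2=\epsilon=0$ into the two utility definitions. Both utilities then become simple functions of $h(x)$ and $y(x,u)$, and the claim reduces to comparing positive-prediction rates within each subpopulation.

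\textbf{Agent utility.} With no adaptation the cost term is $c(x,x)=0$, since the cost is a norm or at least vanishes at the identity; it also cannot be incurred when $c\to+\infty$ off the diagonal. Hence $r_p(h,x,u)=\delta\,h(x)$.

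\textbf{Institution utility.} We have $r_i(h,x,u)=-\mathds 1\{h(x)\neq y(x,u)\}$. Since the features do not move, the outcome is the original $y$.

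\textbf{Conditional $h$-change.} Taking the $h$-change conditional on $Y=y$ (Definition \ref{def:cond_h_change}), the expectation is over $(x,u)$ restricted to that subpopulation. This subpopulation is the same for $h$ and $h'$, because agents do not adapt and so $Y$ does not change.

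\textbf{Case $Y=1$.} Here $-\mathds 1\{h(x)\neq 1\}=h(x)-1$. Therefore
\[
\Delta r_{i|Y=1}(h',h)=\mathbb E[h'(x)-h(x)\mid Y=1],
\]
while $\Delta r_{p|Y=1}(h',h)=\delta\,\mathbb E[h'(x)-h(x)\mid Y=1]$. Since $\delta>0$, the two quantities have the same sign, which gives the first equivalence.

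\textbf{Case $Y=0$.} Here $-\mathds 1\{h(x)\neq 0\}=-h(x)$. Therefore
\[
\Delta r_{i|Y=0}(h',h)=-\mathbb E[h'(x)-h(x)\mid Y=0]=-\tfrac1\delta\,\Delta r_{p|Y=0}(h',h).
\]
So $\Delta r_{p|Y=0}<0$ if and only if $\Delta r_{i|Y=0}>0$.

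\textbf{Main obstacle.} The only step needing care is the claim that the adaptation cost vanishes. I would handle it by reading $c(\cdot)\to+\infty$ as ``no move is affordable'', so every agent stays at $x$ and pays $c(x,x)=0$. I would also state the assumption $\delta>0$, which is needed for the sign equivalence; if $\delta=0$, both agent changes vanish and the equivalences fail.
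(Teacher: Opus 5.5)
Your proof is correct and follows the same approach as the paper. Both substitute the static, short-term utilities and split on $Y$: the paper argues through the sign table for the FN$\leftrightarrow$TP and FP$\leftrightarrow$TN transitions, while you derive the explicit identities $\Delta r_{p|Y=1}=\delta\,\Delta r_{i|Y=1}$ and $\Delta r_{p|Y=0}=-\delta\,\Delta r_{i|Y=0}$, which make explicit the proportionality the paper's pointwise argument needs to conclude about expectations (and your caveat that $\delta>0$ is required is correct).
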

Intuitively, agents in the $Y=0$ group are either TNs or false positives (FPs). Since in this static setting, agents cannot improve their outcomes, institution only gains utility from an $h^\prime$ that switches FPs into TNs, which strictly lowers agents' utility.

We now consider the more general case where agents can adapt $\Delta_h(x)\neq x$, and hence change $y(\Delta_h(x),u)$. The institution becomes strategic when it switches from a classifier that wrongly assumes agents are static ($h^{\text{pre}}$), into one considering agents' adaptation ($h^{\text{post}}$). Note that, by definition, \mbox{$\Delta r_i(h^{\text{post}},h^{\text{pre}})\geq0$}.
There is alignment if, as the institution becomes strategic, predicted agents also benefit.

\begin{definition}
\label{def:aligned_incentives}
    (Aligned incentives) Consider $h^{\text{pre}}$, the classifier that maximizes
    $\mathbb E_{x,u}[r_i(h,x,u)]$ 
    wrongly assuming that $\Delta_{h}(x)=x$, and $h^{\text{post}}$
    maximizing
    $\mathbb E_{x,u}[r_i(h,\Delta_h(x),u)]$ 
    with the correct $\Delta_{h}(x)$ (i.e. post-adaptation). We say that incentives are aligned if:
    \begin{align*}
        \Delta r_p(h^{\text{post}},h^{\text{pre}})\geq0
    \end{align*}
\end{definition}

To study short-term alignment when agents are strategic, we first consider the set of pre-adaptation $x$ that adapted towards a point $\Delta_h(x)$. We define it as its preimage $\Delta^{-1}_{h}(x;\delta):=\{ x'\in\mathbb R^d: \Delta_{h}( x';\delta)=x\}$.

The next lemma shows that by assuming $\mathcal{X}$ has support over all values that could have adapted given $\delta$, short-term goals ($\delta_2=\epsilon=0$), and a flexible enough  $\mathcal H$, fewer points receive $h(x)=1$ post-adaptation.

\begin{lemma}
\label{pro:positive_y_support_maintext}
    (Support over positive predictions)
    For $h(x)=\mathds 1\{f(x)\geq0\}$ let its boundary be $\partial_h=\{x\in\mathcal X:f(x)=0\}$. Assume full support over points that can adapt towards $h^\mathrm{pre}$ and $h^\mathrm{post}$: $\Delta^{-1}_{h^{\mathrm{pre}}}(\partial_{h^{\mathrm{pre}}}) \cup \Delta^{-1}_{h^{\mathrm{post}}}(\partial_{h^{\mathrm{post}}})\subset \mathcal X$, and $\delta_2=\epsilon=0$. For a flexible enough hypothesis family $\mathcal{H}$ we have (proof in \ref{app:dynamic_align}):

    \begin{align*}
       \{x\in\mathcal X: h^{\mathrm{post}}(x)=1\}\subset 
        \{x\in\mathcal X: h^{\mathrm{pre}}(x)=1\} 
    \end{align*}
    
\end{lemma}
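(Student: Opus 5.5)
Since $\delta_2=\epsilon=0$, the institution's utility is simply $-\ell_{0-1}$. My plan is to compare $h^{\mathrm{pre}}$ and $h^{\mathrm{post}}$ through the sets of points that end up labelled positive \emph{after} adaptation. The argument starts with the adaptation map $\Delta_h$ under an $L_p$-type cost. An agent with $h(x)=0$ moves to the cheapest point of the positive region $\{h=1\}$, which lies on $\partial_h$, whenever that cost is at most $\delta$; otherwise it stays put. Agents with $h(x)=1$ never move. Hence the post-adaptation positive set of $h$ is the $\delta$-enlargement
\[
A_\delta(h):=\{x:\exists x' \text{ with } h(x')=1,\ c(x,x')\le\delta\},
\]
and every point of $A_\delta(h)\setminus\{h=1\}$ is mapped onto $\partial_h$. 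The full-support assumption on $\Delta^{-1}_{h^{\mathrm{pre}}}(\partial_{h^{\mathrm{pre}}})\cup\Delta^{-1}_{h^{\mathrm{post}}}(\partial_{h^{\mathrm{post}}})$ guarantees that these enlargements actually occur inside $\mathcal X$. Without it, adaptation could be vacuous and the two classifiers would coincide.

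Next I use optimality together with flexibility of $\mathcal H$. $h^{\mathrm{pre}}$ is the Bayes-type rule $\{x: P(Y=1\mid x)\ge 1/2\}$ under the pre-adaptation distribution. By contrast, $h^{\mathrm{post}}$ must make the realised set of post-adaptation positives correct, and it can only shape that set through $A_\delta$. The key claim is that the positive region $P^\star$ that $h^{\mathrm{post}}$ ``wants'' the adapted agents to occupy is no larger than $\{h^{\mathrm{pre}}=1\}$. I would then show that $h^{\mathrm{post}}$ is obtained by \emph{shrinking} its acceptance region, erosion by $\delta$, so that the dilation $A_\delta(h^{\mathrm{post}})$ reproduces the desired set. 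Formally, I would argue by contradiction. Suppose there is $x$ with $h^{\mathrm{post}}(x)=1$ but $h^{\mathrm{pre}}(x)=0$. Flexibility lets me construct $\tilde h$, equal to $h^{\mathrm{post}}$ except that a neighbourhood of such points is removed from its positive region. Points that were accepted there either remain rejected, which the pre-adaptation optimality says is at least as accurate, or adapt to the new nearby boundary. I would then show that $\mathbb E[r_i]$ weakly increases, and strictly increases on a set of positive measure by full support, contradicting optimality of $h^{\mathrm{post}}$.

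The main obstacle is this exchange step. Removing points from $\{h^{\mathrm{post}}=1\}$ changes $\Delta_{\tilde h}$ globally: new agents may now adapt into the modified boundary, and their counterfactual outcomes $y(\Delta(x),u)$ may differ from those at the original location. I would try to handle this by performing the removal only in the interior, at distance greater than $\delta$ from the rest of $\partial_{h^{\mathrm{post}}}$, and by invoking that $h^{\mathrm{pre}}$ already certifies that the removed points have $P(Y=1\mid x)<1/2$. Failing that, I would restrict to the monotone ($O_s$-nondecreasing) setting of Lemma~\ref{lem:upp-low-bound-sec4}. There adaptation only increases $y_{\mathrm{sco}}$ and the boundaries can be compared pointwise along the orthant direction, so the inclusion reduces to showing that $\partial_{h^{\mathrm{post}}}$ lies weakly above $\partial_{h^{\mathrm{pre}}}$.
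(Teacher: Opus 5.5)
Your dilation bookkeeping with $A_\delta(h)$ is fine: the inclusion you want would indeed give $A_\delta(h^{\mathrm{post}})\subseteq A_\delta(h^{\mathrm{pre}})$. But the exchange step that is supposed to carry the proof does not go through. Moreover, your ``key claim'' that the desired positive region is no larger than $\{h^{\mathrm{pre}}=1\}$ is just the lemma restated.

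The Bayes certificate from $h^{\mathrm{pre}}$ only gives $P_{\mathcal D}(Y=1\mid x)<1/2$ for the \emph{original} agents at $x$. It does not control three groups:
\begin{enumerate}
\item Agents inside the removed set who walk to its new edge. Any part of the hole within $\delta$ of its edge is not turned into rejections, only relocated, and is then scored with the counterfactual $y(\Delta_{\tilde h}(x),u)$.
\item The post-adaptation mass sitting on $\partial_{h^{\mathrm{post}}}$. This consists of adapted agents whose outcome law is not $P_{\mathcal D}(Y\mid x)$.
\item Outside agents whose best response changes with the accepted region.
\end{enumerate}
The last point is substantive. A region accepted by $h^{\mathrm{post}}$ but rejected by $h^{\mathrm{pre}}$ can pay off precisely because it diverts nearby agents from a cheap gaming move in $x_s$ to an improving move in $x_c$. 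Deleting it alone can therefore strictly lower $\mathbb E[r_i]$, so any contradiction has to come from a coordinated global change of the boundary, not from local removal. Restricting the removal to points at distance $>\delta$ from $\partial_{h^{\mathrm{post}}}$ only treats the deep interior. The disagreement between the two classifiers typically lives in a strip next to the boundary.

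Your fallback, reducing to showing that $\partial_{h^{\mathrm{post}}}$ lies weakly above $\partial_{h^{\mathrm{pre}}}$ along $O_s$, names exactly the step that needs an idea, and supplies none. Monotonicity is not optional here: the appendix version of the lemma explicitly assumes $y_{\mathrm{sco}}$ is $O_s$-nondecreasing and $h^{\mathrm{post}}\neq h^{\mathrm{pre}}$.

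The paper fills this step with a global, qualitative argument rather than an exchange:
\begin{enumerate}
\item Every adaptation is one of improvement, wasted effort, gaming, or reversed incentive.
\item With $\epsilon=0$, improvement and reversed incentive leave $r_i$ unchanged. So a flexible $h^{\mathrm{post}}$ deviates from $h^{\mathrm{pre}}$ only to curb gaming or to exploit wasted effort.
\item Under monotonicity, both are achieved by moving every new boundary point into $x_{\mathrm{pre}}+(O_s\setminus\{0\})$ for some $x_{\mathrm{pre}}\in\partial_{h^{\mathrm{pre}}}$.
\end{enumerate}
Full support is then used in a specific way your proposal does not exploit. Each point of $\partial_{h^{\mathrm{pre}}}$ has a preimage agent at cost exactly $\delta$, and that agent loses its positive label once the boundary moves up. No agent gains one, because the boundary only became more demanding. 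This is also what yields strictness of the inclusion, which your argument never addresses.
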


Under the assumptions above, less agents receive a positive prediction under $h^\mathrm{post}$ than under $h^\mathrm{pre}$. With Lemma~\ref{pro:positive_y_support_maintext} and the assumptions above, we now show misalignment in short-term goals, where agent utility in the short-term decreases when institution becomes strategic.

\begin{proposition}
    (Short-term misalignment) 
    Let $\Delta r_{p\mathrm{\textit{-}short}}$ be the $h$-change for $\delta_2=0$, and $h^\mathrm{post\textit{-}short}$ be optimal for $\epsilon=0$. Under the assumptions of Lemma~\ref{pro:positive_y_support_maintext}, we have (proof in \ref{app:dynamic_align}):

    \begin{align*}
        \Delta r_{p\mathrm{\textit{-}short}}(h^\mathrm{post\textit{-}short},h^\mathrm{pre})<0.
    \end{align*}
\end{proposition}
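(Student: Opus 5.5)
We prove the proposition by writing each agent's short-term utility ($\delta_2=0$) in closed form and integrating over the population. Under cost $c$ and budget $\delta$, an agent facing classifier $h$ receives
\begin{align*}
r_p(h,x,u)=\max\bigl(0,\ \delta-\min_{x':h(x')=1}c(x,x')\bigr).
\end{align*}
If $h(x)=1$, the agent stays put and earns $\delta$. If $h(x)=0$ and the positive region is within reach, the agent earns $\delta-c(x,\Delta_h(x))$. Otherwise the agent earns $0$. In every case this equals $\max(0,\delta-d_h(x))$, where $d_h(x):=\inf_{x':h(x')=1}c(x,x')$ is the cost-distance to the positive region. This quantity depends only on $x$ and $h$, not on $u$, because $\delta_2=0$ removes the dependence on the outcome.

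\textbf{Step 1 (pointwise weak inequality).} By Lemma~\ref{pro:positive_y_support_maintext}, $\{h^{\mathrm{post\textit{-}short}}=1\}\subset\{h^{\mathrm{pre}}=1\}$; with $\epsilon=0$ this is exactly the classifier the lemma covers. An infimum over a subset is at least the infimum over the superset, so $d_{h^{\mathrm{post\textit{-}short}}}(x)\ge d_{h^{\mathrm{pre}}}(x)$ for every $x$. Since $t\mapsto\max(0,\delta-t)$ is nonincreasing, $r_p(h^{\mathrm{post\textit{-}short}},x,u)\le r_p(h^{\mathrm{pre}},x,u)$ pointwise, and taking expectations gives $\Delta r_{p\mathrm{\textit{-}short}}\le 0$.

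\textbf{Step 2 (strictness; the main obstacle).} We need a set of positive probability on which the inequality is strict, and this is where the support assumption enters. The inclusion in Lemma~\ref{pro:positive_y_support_maintext} is strict, so there is a point $x^\ast$ with $h^{\mathrm{pre}}(x^\ast)=1$ and $h^{\mathrm{post\textit{-}short}}(x^\ast)=0$. I would work near the boundary $\partial_{h^{\mathrm{pre}}}$. The assumption $\Delta^{-1}_{h^{\mathrm{pre}}}(\partial_{h^{\mathrm{pre}}})\subset\mathcal X$ guarantees that the agents who move onto this boundary under $h^{\mathrm{pre}}$ are present in the population. For these agents $d_{h^{\mathrm{pre}}}(x)<\delta$, so they obtain a strictly positive utility $\delta-d_{h^{\mathrm{pre}}}(x)>0$.

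The key point is that $h^{\mathrm{post}}$ raises the bar: the proof of Lemma~\ref{pro:positive_y_support_maintext} shows that $h^{\mathrm{post}}$ shifts the boundary inward to absorb the agents' gaming, so $\partial_{h^{\mathrm{pre}}}$ lies strictly outside $\{h^{\mathrm{post}}=1\}$. Under an $L_p$-type cost this separation means $d_{h^{\mathrm{post\textit{-}short}}}(x)>d_{h^{\mathrm{pre}}}(x)$ for these agents. Since $d_{h^{\mathrm{pre}}}(x)<\delta$, the strict increase in $d$ yields strictly smaller utility. This holds whether the agent still reaches the new positive region, paying more, or now gives up and earns $0<\delta-d_{h^{\mathrm{pre}}}(x)$.

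If the assumptions guarantee positive mass for this preimage set rather than mere inclusion, integrating the strict gap over it gives $\Delta r_{p\mathrm{\textit{-}short}}<0$. The delicate part is exactly this: I would need the set of boundary-reaching agents to have positive measure and the two boundaries to be separated by positive distance, not just differ on a null set. I would extract both from the flexibility of $\mathcal H$ together with the full-support assumption, as used in the proof of the lemma.
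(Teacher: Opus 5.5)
Your Step 1 is correct, and it is essentially the paper's argument in a tighter form. The paper argues in two pieces. First, agents who move pay strictly more, because $\partial_{h^{\mathrm{post}}}$ is an $O_s$-shift of $\partial_{h^{\mathrm{pre}}}$. Second, by the lemma, fewer agents end up receiving $\delta$. Your identity $r_p=\max(0,\delta-d_h(x))$ replaces both pieces. It holds because with $\delta_2=0$ the utility is exactly the agent's optimized objective, given that best responses exist as the paper assumes. The identity turns the lemma's set inclusion directly into a pointwise weak inequality. That inequality covers every agent type at once and needs no orthant or $L_p$ geometry beyond what the lemma already uses.

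The genuine gap is strictness, which you leave conditional. The route you sketch for it is harder than necessary and partly overstated.

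\textbf{The overstated claim.} The lemma's proof does not show that $\partial_{h^{\mathrm{pre}}}$ lies strictly outside $\{h^{\mathrm{post}}=1\}$. It only says that points of $\partial_{h^{\mathrm{post}}}\setminus\partial_{h^{\mathrm{pre}}}$ are $O_s$-shifts of points of $\partial_{h^{\mathrm{pre}}}\setminus\partial_{h^{\mathrm{post}}}$. So the two boundaries may share pieces, and agents who target a shared piece lose nothing. Elsewhere, a strict increase of $d$ requires every nearest point of $\{h^{\mathrm{pre}}=1\}$ to lie outside the closure of $\{h^{\mathrm{post}}=1\}$, because nearest points need not be unique.

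\textbf{The direct witness.} You already extracted it: $x^\ast$. Take any $x\in\{h^{\mathrm{pre}}=1\}\setminus\{h^{\mathrm{post}}=1\}$. Under $h^{\mathrm{pre}}$ it earns $\delta$ without moving. Under $h^{\mathrm{post}}$ it either stays, with utility $0$, or moves to some $x'\neq x$ at cost $\|x'-x\|_p>0$. Either way its utility drops strictly below $\delta$. So the strict inequality follows once some positive-mass set of agents suffers a strict drop. The simplest sufficient condition is $\mathbb P(\{h^{\mathrm{pre}}=1\}\setminus\{h^{\mathrm{post}}=1\})>0$, and a positive-mass condition of this kind is the missing ingredient.

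\textbf{The paper has the same gap.} It reads ``fewer agents receive $\delta$'' off the lemma, but the lemma only exhibits individual points. Its witness is an agent whose cost of reaching $\partial_{h^{\mathrm{pre}}}$ is exactly $\delta$, i.e.\ one with zero surplus to lose. So your insistence on $d_{h^{\mathrm{pre}}}(x)<\delta$ is well founded. To finish, state a positive-mass requirement explicitly, for example positive density on an open part of the region where the two classifiers differ. The set-inclusion form of the support hypothesis does not provide this on its own.
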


Having shown that there is misalignment in short-term goals, we now show that considering long-term goals allows incentives to be aligned. In particular, we show that maintaining a flexible $\mathcal{H}$ with $\epsilon=0$ for institution, and a high-enough $\delta_2$ for agents, enables alignment. Consider all four post-adaption cases, when switching from $h^\mathrm{pre}$ into a more demanding $h^\mathrm{post}$ (\ref{app:long-term-align}):

\begin{itemize}
    \item (maint) Maintained FP or TP at higher cost;
    \item (impr) Switched from gaming to improvement (FP$\rightarrow$ TP);
    \item (TP$\rightarrow$ N) Switched from TP into FN or TN;
    \item (FP$\rightarrow$ TN) Switched from FP into TN.
\end{itemize}

For each of these cases, denote their densities as: $\mathbb P(\mathrm{maint}):=\int_{x\in \{\mathrm{maint}\}} \mathbb P(x)$, and similarly $\mathbb P(\mathrm{impr})$, $\mathbb P(\mathrm{TP\rightarrow N})$, $\mathbb P(\mathrm{FP\rightarrow TN})$. Define $ \Delta c(x)\triangleq c(x,\Delta_{h^{\mathrm{post}}}(x))-c(x,\Delta_{h^{\mathrm{pre}}}(x))$. Their total costs are $c(\mathrm{all}):=\int_{x\in \mathcal X} \Delta c(x)\mathbb P(x)$.

Our next result shows a lower bound for the cost of FP, such that agents value enough switching from gaming to improvement, in order to have alignment.

\begin{proposition}
\label{pro:long-term-align}
    (Long-term alignment) Considering a flexible enough $\mathcal H$ and $\epsilon=0$, having alignment requires (proof in \ref{app:dynamic_align}):

    $$\delta_2>\frac{c(\mathrm{all}) 
     +\delta(\mathbb P(\mathrm{TP\rightarrow N})
     +\mathbb P(\mathrm{FP\rightarrow TN}))}
     {\mathbb P(\mathrm{impr})+\mathbb P(\mathrm{FP\rightarrow TN})}$$
\end{proposition}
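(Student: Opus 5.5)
We write the agents' $h$-change $\Delta r_p(h^{\mathrm{post}},h^{\mathrm{pre}})$ explicitly from Definition~\ref{def:longterm_pred_util}, partition the population into the four post-adaptation cases listed above, and show that $\Delta r_p\geq 0$ holds exactly when $\delta_2$ exceeds the stated ratio. Thus the displayed bound is the condition for alignment in Definition~\ref{def:aligned_incentives}.

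\textbf{Step 1: the three terms.} The per-agent difference of utilities is
\[
\delta\big(h^{\mathrm{post}}(\Delta_{h^{\mathrm{post}}}(x))-h^{\mathrm{pre}}(\Delta_{h^{\mathrm{pre}}}(x))\big)-\Delta c(x)-\delta_2\big(\mathrm{FP}^{\mathrm{post}}(x,u)-\mathrm{FP}^{\mathrm{pre}}(x,u)\big),
\]
where $\mathrm{FP}^{\cdot}$ indicates a false positive after adaptation to the corresponding classifier. Integrating the cost term over all $x$ gives $c(\mathrm{all})$ directly. For the remaining two terms we use Lemma~\ref{pro:positive_y_support_maintext}: $h^{\mathrm{post}}$ is more demanding than $h^{\mathrm{pre}}$ (the flexible $\mathcal H$ and $\epsilon=0$ are inherited). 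Hence no agent gains a positive prediction when switching to $h^{\mathrm{post}}$. Every agent is then either in one of the four cases or unaffected (negative under both classifiers, with no change in prediction, false-positive status, or cost). I will justify this enumeration carefully. In particular, an FP cannot stay an FP at no cost while $h^{\mathrm{post}}$ is strictly more demanding; that case is exactly ``maint''. It should also be checked that no TN becomes an FP.

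\textbf{Step 2: per-case contributions.} I tabulate the prediction and FP changes for each case.
\begin{itemize}
\item In maint, the prediction stays positive and the FP status is unchanged, so the contribution is only the extra cost, which is already included in $c(\mathrm{all})$.
\item In impr, the prediction stays positive and the FP indicator drops from $1$ to $0$, contributing $+\delta_2\,\mathbb P(\mathrm{impr})$.
\item In TP$\rightarrow$N, a positive prediction is lost, contributing $-\delta\,\mathbb P(\mathrm{TP\rightarrow N})$, with no FP change.
\item In FP$\rightarrow$TN, the agent loses $\delta$ but no longer incurs $\delta_2$, contributing $(\delta_2-\delta)\,\mathbb P(\mathrm{FP\rightarrow TN})$.
\end{itemize}
For the cost bookkeeping, agents who stop adapting may have $\Delta c<0$; this is absorbed into $c(\mathrm{all})$ since it integrates over all of $\mathcal X$. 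Summing the contributions gives
\[
\Delta r_p=\delta_2\big(\mathbb P(\mathrm{impr})+\mathbb P(\mathrm{FP\rightarrow TN})\big)-\delta\big(\mathbb P(\mathrm{TP\rightarrow N})+\mathbb P(\mathrm{FP\rightarrow TN})\big)-c(\mathrm{all}).
\]

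\textbf{Step 3: solve for $\delta_2$.} Requiring $\Delta r_p\geq 0$ and dividing by $\mathbb P(\mathrm{impr})+\mathbb P(\mathrm{FP\rightarrow TN})$ gives the bound. The division needs this quantity to be positive. If it were zero, then $\Delta r_p\le 0$ by Step 2, with equality only when the switch changes nothing. Otherwise the strict inequality follows whenever $h^{\mathrm{post}}\neq h^{\mathrm{pre}}$ changes some agent's outcome. The agents' outcome $y$ depends on $u$, so the case densities should be read as joint probabilities over $(x,u)$. I will note this interpretation rather than change the notation.

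The main obstacle is Step 1's exhaustive case split. It relies on the containment in Lemma~\ref{pro:positive_y_support_maintext} and on a careful argument that each case's change in the FP indicator is as claimed. The algebra in Steps 2 and 3 is routine.
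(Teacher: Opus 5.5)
Your proposal is correct and follows the paper's proof essentially step for step. You expand $\Delta r_p$ from Definition~\ref{def:longterm_pred_util}, assign the per-case changes $-\Delta c$, $\delta_2-\Delta c$, $-\delta-\Delta c$ and $-(\delta-\delta_2)-\Delta c$ to maint, impr, TP$\rightarrow$N and FP$\rightarrow$TN (relying on the same ``more demanding $h^{\mathrm{post}}$'' enumeration from the lemma), sum, and solve for $\delta_2$.

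One small correction: $\Delta r_p\geq 0$ only gives $\delta_2\geq$ the ratio. Your sentence upgrading this to the strict inequality whenever some agent's outcome changes does not work, because the case $\Delta r_p=0$ gives equality. The strict version needs $\Delta r_p>0$; the paper's ``$\Leftrightarrow$'' makes the same slip.
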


We provide an example in Appendix~\ref{app:longterm_alignment_example} where alignment occurs for any $\delta_2>c$, for a constant $c<\delta$. If we additionally consider $\epsilon>0$, interactions between $\delta_2$, $\epsilon$ and alignment become more complex. We resort to simulations to illustrate their behaviour in Figure~\ref{fig:incentives_align}, and analyze mathematically an example in Appendix~\ref{app:align_simulations}.

Note that, by our definition, we study alignment of optimal classifiers, which are causal classifiers under the conditions presented in \S~\ref{section_01_opt}. Our theory does not exclude alignment with classifiers using both causal and spurious features, under different conditions. If no causal features are present, \citet{milli2019socialcost} show that alignment is not possible for $\delta_2=0$.

\section{EXPERIMENTS}
\label{sec:xps}

\begin{figure}[ht]
        \centering
          \includegraphics[width=0.95\linewidth]{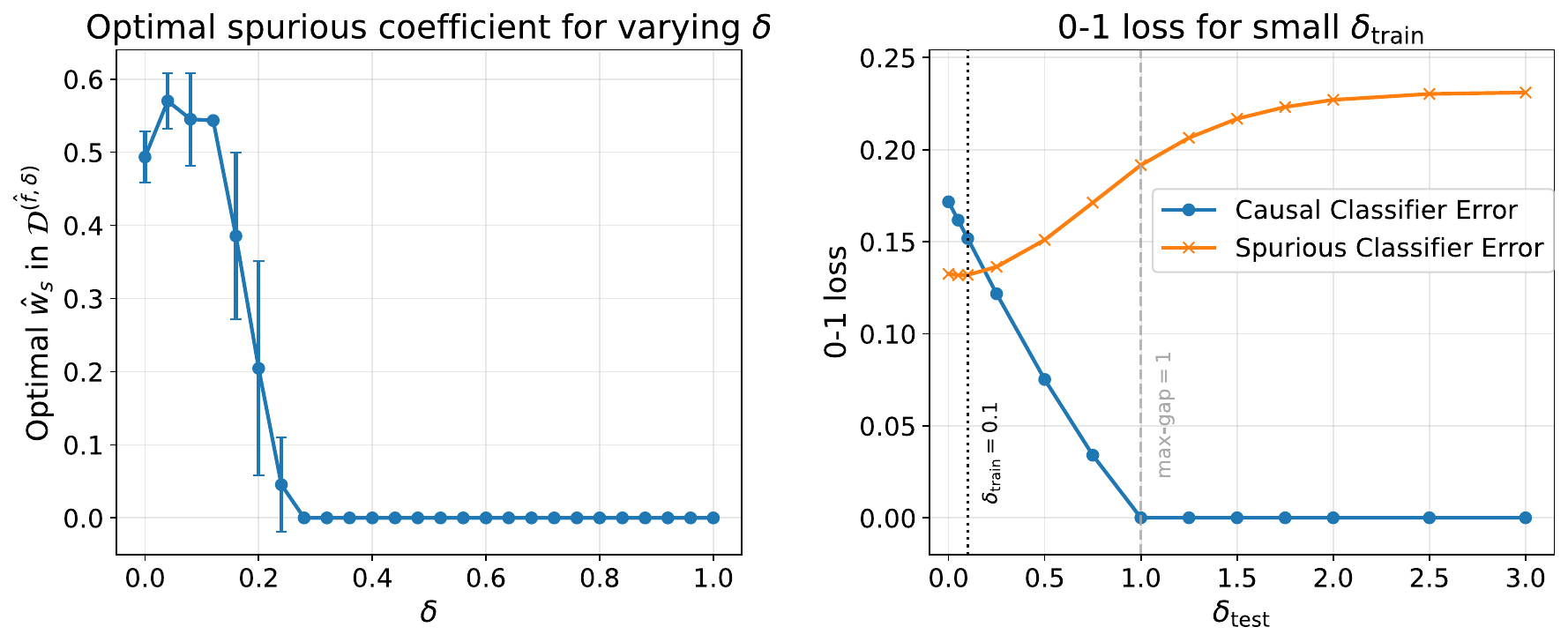}
          \caption{0-1 Loss. Left: As $\delta$ increases, the optimal classifier on simulated post-adaptation data puts progressively less weight on spurious feature $X_s$, until arriving at the optimal causal classifier characterized in Theorem~\ref{the:causal_01opt}. We averaged the optimal weight over three datasets of size $N=20000$ to obtain the error bars. Right: When training and evaluation involve different levels of strategic shift $\delta_{\mathrm{train}}$ and $\delta_{\mathrm{test}}$, a classifier restricted to the causal feature recovers this optimal post-adaptation classifier, while one that exploits the spurious feature leads to poor performance when the shift is larger than anticipated.}
          \label{fig:01_robust}
\end{figure}

We empirically validate our theoretical results with simulated data from the causal model in \Cref{fig:causal_graph} parameterized by a linear outcome and two features: 
\begin{align*}
  u &\sim \mathrm{Bernoulli}(p_{u}) \\
  x_c &\sim \mathcal{N}(0,\sigma_c^2) \\
  x_s &= w_{u\to s}\,u + \varepsilon_s,\quad \varepsilon_s \sim \mathcal{N}(0,\sigma_s^2) \\
    y_\mathrm{sco} &\triangleq  w_c x_c + w_u u + b
    \\
    \text{(det)}\,\,\, y&=\mathds 1\{y_\mathrm{sco}(x_c,u)\}
    \\
    \text{(stoch)}\,\,\, y &\sim \mathrm{Bernoulli}\big( \sigma_{\tau}(y_{\mathrm{sco}}(x_c, u))\big).
\end{align*}

We assume $L_2$-norm for the agents' adaptation cost $c(x',x)\triangleq||x'-x||_2$, and they adapt following \ref{eq:agent_adaptation}.

\textbf{Optimality under bounded ambiguity.}
First, we study \Cref{the:causal_01opt}, which says that with a bounded ambiguous region, a causal classifier can incur zero $\ell_{0-1}$ loss post-adaptation given that agents have enough budget $\delta$ to adapt away from the ambiguous region. 
To validate this result, we plot the coefficients of the classifier $f^* \in \mathcal F$ that achieves optimal post-adaptation $\ell_{0-1}$. Because the post-adaptation data distribution involves computing agents' best response to a classifier, we cannot use gradient-based optimization to minimize post-adaptation loss. Instead, we perform a grid search over the coefficients of linear classifiers $f$ and select the model that minimizes the loss on the post-adaptation distribution generated by some adaptation budget $\delta$.
The left panel in \Cref{fig:01_robust} shows that with sufficient $\delta$, optimal classifiers post-adaptation choose to ignore the spurious feature, validating the result.

\textbf{Robustness to changing $\mathbf{\delta}$.}
The optimality result of \Cref{the:causal_01opt} also holds if, after deployment, agents increase their adaptation budget so that $\delta^\prime > \delta$. We study this in the right panel of \Cref{fig:01_robust}, by performing the same grid search procedure over two classifier families, one that considers all features and another that considers only causal features, both optimized for a fixed adaptation budget we refer to as $\delta_{\mathrm{train}}$. We evaluate the best classifier in each family on data generated by deploying that classifier but with a different, unseen $\delta_{\mathrm{test}}$. We see that when $\delta_{\mathrm{test}} > \delta_{\mathrm{train}}$, causal classifiers remain optimal as expected while spurious classifiers incur error due to differences between the train and test distributions. Additionally, the causal classifier is able to achieve \textit{zero} loss after sufficiently large $\delta_{\mathrm{train}}$ (i.e. $ \geq \mathrm{max\textit{-}gap}$, marked by the light grey line).

In \Cref{app:robustness_add_exp}, we also evaluate the implications of the result on robustness in \Cref{the:robustness}, finding settings where causal classifiers’ zero transfer loss translates into advantages post-adaptation.

\textbf{Incentive Alignment.} We explore how alignment (Definition~\ref{def:aligned_incentives}) is impacted
as we vary properties of agent utility $r_p$ (Definition~\ref{def:longterm_pred_util}) and institution utility $r_i$ (Definition~\ref{def:longterm_inst_util}). Specifically, as we increase $\delta_2$, agents increasingly prefer to avoid false positives in the long term, and as we increase $\epsilon$, institutions prefer true positives over true negatives.
Following Proposition~\ref{pro:long-term-align} we observe in Figure~\ref{fig:incentives_align} there is a minimum $\delta_2$ to have alignment ($\Delta r_p>0$) for $\epsilon=0$. This is due to $h^\mathrm{post}$ preventing more gaming than $h^\mathrm{pre}$ (TN$\rightarrow$FP), which is more valued by agents with higher $\delta_2$. As $\epsilon$ increases, $h^\mathrm{pre}$ avoids TN by increasing FP, since it cannot anticipate agent adaptation. $h^\mathrm{post}$ can avoid TN by encouraging points to improve (TN$\rightarrow$TP) instead of allowing gaming (TN$\rightarrow$FP). A switch from gaming to improvement is valued by agents when $\delta_2$ is higher, whereas low $\delta_2$ makes agents care only about the total count of positive predictions (FP+TP). 
The dynamics between alignment and utilities are further explored in Appendix~\ref{app:align_simulations}.

\begin{figure}[!h]
    \centering
    \includegraphics[width=0.85\linewidth]{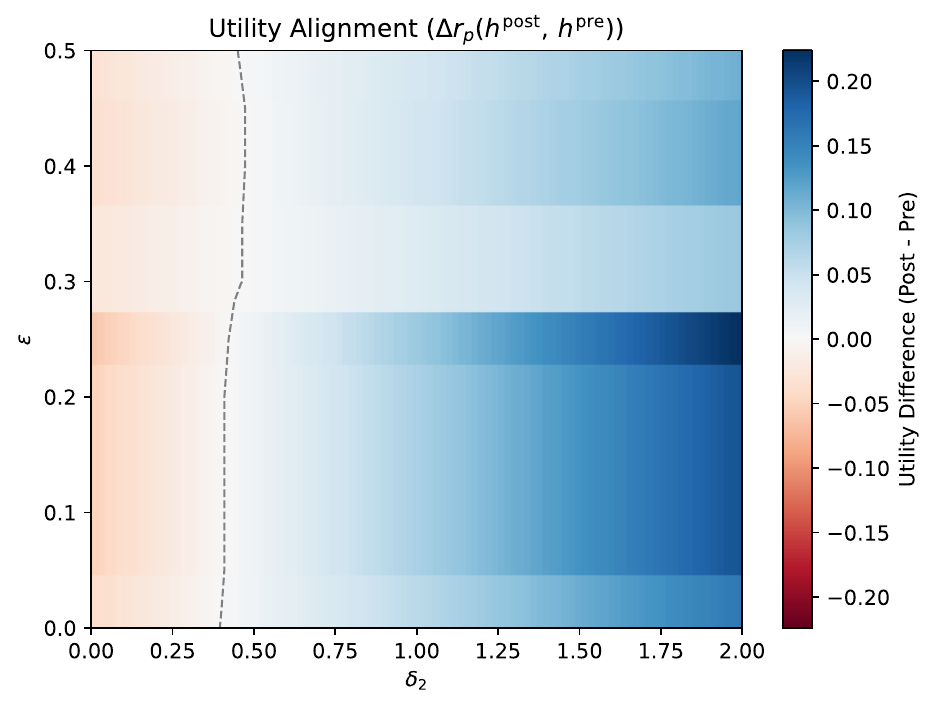}
    \caption{Simulation for $\delta = 0.3$. As the institution anticipates agent adaptation to maximize $r_i$ (i.e., becomes strategic) agents' utility $r_p$ can increase as well, depending on $\delta_2$ and $\epsilon$. The grey dashed line indicates where $r_p$ stays constant, and to its right there is alignment (i.e. an increase in $r_p$).
    }
    \label{fig:incentives_align}
\end{figure}

 While these experiments were performed on fully synthetic data, we also have semi-synthetic results using real-world data for the observed feature in \Cref{app:semi_synthetic_add_exp} which also confirms our theoretical findings hold under real-world observed distributions.

    \label{fig:optimalt_delta}

\section{CONCLUSION}

In this work we characterize the role of causal variables in classification, when agents adapt to predictions. Strategic classification is inherently a causal problem, since whether feature adaptation translates to outcomes depends on the underlying causal model. While previous work connects causality to robustness, we take a step further and identify conditions where causal classifiers are simultaneously optimal to a range of adaptations. Additionally, we show a nuanced picture of how the welfare of predicted agents is affected by strategic classification. While existing work highlights a social burden imposed on the population, we identify conditions where both predicted and predictor can be better off under strategic classification. This is possible due to the causal modeling aspect of our generative process. Assuming the existence of a bounded feature region where outcomes are ambiguous, and an acceptable effort that moves points out of such region, are limitations of our analysis. Therefore, future work should characterize optimality inside ambiguous regions under adaptation. Developing practical methods for strategic classification is also an important line of research. In this direction, designing learning algorithms that encourage data points to move outside ambiguous regions is also a promising avenue for future work.

\subsubsection*{Acknowledgements}
This research was supported in part by the Canada CIFAR AI Chair program, by a grant from Samsung Electronics Co., Ltd., an unrestricted gift from Google, an NSERC Discovery Grant (RGPIN-2023-04869) and the Israel Science Foundation (grant no. 278/22). Simon Lacoste-Julien is a CIFAR Associate Fellow in the Learning in Machines \& Brains program. We would like to thank Pedram Khorsandi for their feedback.

\bibliography{refs}

\begin{thebibliography}{}

\bibitem[Ahmadi et~al., 2020]{ahmadi2020strategicperceptron}
Ahmadi, S., Beyhaghi, H., Blum, A., and Naggita, K. (2020).
\newblock The strategic perceptron.
\newblock {\em CoRR}, abs/2008.01710.

\bibitem[Arjovsky et~al., 2019]{Arjovsky2019_invariant_risk_minimization}
Arjovsky, M., Bottou, L., Gulrajani, I., and Lopez-Paz, D. (2019).
\newblock Invariant risk minimization.
\newblock In {\em arXiv preprint arXiv:1907.02893}.

\bibitem[Bechavod et~al., 2021]{bechavod2021gaminghelps}
Bechavod, Y., Ligett, K., Wu, Z.~S., and Ziani, J. (2021).
\newblock Gaming helps! learning from strategic interactions in natural dynamics.
\newblock In {\em Proceedings of the 38th International Conference on Machine Learning (ICML)}, volume 139, pages 756--765. PMLR.

\bibitem[Boyd and Vandenberghe, 2004]{boyd2004convex}
Boyd, S.~P. and Vandenberghe, L. (2004).
\newblock {\em Convex optimization}.
\newblock Cambridge university press.

\bibitem[Chen et~al., 2025]{chen2025withheldrecourse}
Chen, Y., Estornell, A., Vorobeychik, Y., and Liu, Y. (2025).
\newblock To give or not to give? the impacts of strategically withheld recourse.
\newblock In {\em Proceedings of the 28th International Conference on Artificial Intelligence and Statistics (AISTATS)}. PMLR.

\bibitem[Chen et~al., 2019]{chen2019classifystrategic}
Chen, Y., Liu, Y., and Podimata, C. (2019).
\newblock Grinding the space: Learning to classify against strategic agents.
\newblock {\em CoRR}, abs/1911.04004.

\bibitem[Dong et~al., 2017]{dong2017}
Dong, J., Roth, A., Schutzman, Z., Waggoner, B., and Wu, Z.~S. (2017).
\newblock Strategic classification from revealed preferences.
\newblock {\em CoRR}, abs/1710.07887.

\bibitem[Eastwood et~al., 2022]{Eastwood2022_qrm}
Eastwood, C., Robey, A., Singh, S., von K{\"u}gelgen, J., Hassani, H., Pappas, G.~J., and Sch{\"o}lkopf, B. (2022).
\newblock Probable domain generalization via quantile risk minimization.
\newblock In {\em Proceedings of the 36th Conference on Neural Information Processing Systems (NeurIPS)}.

\bibitem[G{\'o}is et~al., 2025]{gois2025performative}
G{\'o}is, A., Mofakhami, M., Santos, F.~P., Lacoste-Julien, S., and Gidel, G. (2025).
\newblock Performative prediction on games and mechanism design.
\newblock In {\em International Conference on Artificial Intelligence and Statistics}, pages 1855--1863. PMLR.

\bibitem[Hardt et~al., 2016]{hardt2016strategic}
Hardt, M., Megiddo, N., Papadimitriou, C., and Wootters, M. (2016).
\newblock Strategic classification.
\newblock In {\em Proceedings of the 2016 ACM conference on innovations in theoretical computer science}, pages 111--122.

\bibitem[Heinze-Deml et~al., 2018]{heinze2018invariant}
Heinze-Deml, C., Peters, J., and Meinshausen, N. (2018).
\newblock Invariant causal prediction for nonlinear models.
\newblock {\em Journal of Causal Inference}, 6(2).

\bibitem[Horowitz and Rosenfeld, 2018]{taleoftwoshifts}
Horowitz, G. and Rosenfeld, N. (2018).
\newblock Causal strategic classification: A tale of two shifts.
\newblock In {\em Proceedings of the 40th International Conference on Machine Learning (ICML 2023)}.

\bibitem[Kleinberg and Raghavan, 2018]{klienberg2018incentive}
Kleinberg, J.~M. and Raghavan, M. (2018).
\newblock How do classifiers induce agents to invest effort strategically?
\newblock {\em CoRR}, abs/1807.05307.

\bibitem[Levanon and Rosenfeld, 2021]{rosenfeld2021scpractical}
Levanon, S. and Rosenfeld, N. (2021).
\newblock Strategic classification made practical.
\newblock {\em CoRR}, abs/2103.01826.

\bibitem[Levanon and Rosenfeld, 2022]{levanon2022generalized}
Levanon, S. and Rosenfeld, N. (2022).
\newblock Generalized strategic classification and the case of aligned incentives.
\newblock In {\em International Conference on Machine Learning}, pages 12593--12618. PMLR.

\bibitem[Magliacane, 2018]{magliacane2018causalbias}
Magliacane, S. (2018).
\newblock Domain adaptation by using causal inference to predict invariant conditional distributions.
\newblock In {\em 32nd Conference on Neural Information Processing Systems (NIPS 2018)}.

\bibitem[Miller et~al., 2020]{miller2020strategiccausal}
Miller, J., Milli, S., and Hardt, M. (2020).
\newblock Strategic classification is causal modeling in disguise.
\newblock In {\em International Conference on Machine Learning}, pages 6917--6926. PMLR.

\bibitem[Milli et~al., 2019]{milli2019socialcost}
Milli, S., Miller, J., Dragan, A.~D., and Hardt, M. (2019).
\newblock The social cost of strategic classification.
\newblock In {\em Proceedings of the conference on fairness, accountability, and transparency}, pages 230--239.

\bibitem[Perdomo et~al., 2020]{perdomo2020performative}
Perdomo, J., Zrnic, T., Mendler-D{\"u}nner, C., and Hardt, M. (2020).
\newblock Performative prediction.
\newblock In {\em International Conference on Machine Learning}, pages 7599--7609. PMLR.

\bibitem[Perry et~al., 2022]{Perry2022_sparse_mechanism_shift}
Perry, R., von K{\"u}gelgen, J., and Sch{\"o}lkopf, B. (2022).
\newblock Causal discovery in heterogeneous environments under the sparse mechanism shift hypothesis.
\newblock In {\em Proceedings of the 36th Conference on Neural Information Processing Systems (NeurIPS)}.

\bibitem[Peters et~al., 2016]{Peters2016_invariant_prediction}
Peters, J., B{\"u}hlmann, P., and Meinshausen, N. (2016).
\newblock Causal inference using invariant prediction: identification and confidence intervals.
\newblock In {\em Journal of the Royal Statistical Society, Series B (Statistical Methodology)}, volume~78, pages 947--1012.

\bibitem[Peters et~al., 2017]{peters2017elements}
Peters, J., Janzing, D., and Sch{\"o}lkopf, B. (2017).
\newblock {\em Elements of Causal Inference: Foundations and Learning Algorithms}.
\newblock MIT Press, Cambridge, MA.

\bibitem[Rojas-Carulla et~al., 2018]{rojascarulla2018invariant}
Rojas-Carulla, M., Sch{\"o}lkopf, B., Turner, R., and Peters, J. (2018).
\newblock Invariant models for causal transfer learning.
\newblock In {\em Proceedings of the 21st International Conference on Artificial Intelligence and Statistics (AISTATS)}, volume~84 of {\em Proceedings of Machine Learning Research}, pages 164--173.

\bibitem[Rosenfeld et~al., 2020]{rosenfeld2020lookahead}
Rosenfeld, N., Hilgard, S., Ravindranath, S.~S., and Parkes, D.~C. (2020).
\newblock From predictions to decisions: Using lookahead regularization.
\newblock In {\em 34th Conference on Neural Information Processing Systems (NeurIPS 2020)}. Curran Associates, Inc.

\bibitem[Shavit et~al., 2020]{shavit2020learningFS}
Shavit, Y., Edelman, B.~L., and Axelrod, B. (2020).
\newblock Learning from strategic agents: Accuracy, improvement, and causality.
\newblock {\em ArXiv}, abs/2002.10066.

\bibitem[Somerstep et~al., 2024]{Somerstep_causal_twosided_markets}
Somerstep, S., Sun, Y., and Ritov, Y. (2024).
\newblock Learning in reverse causal strategic environments with ramifications on two sided markets.
\newblock In Kim, B., Yue, Y., Chaudhuri, S., Fragkiadaki, K., Khan, M., and Sun, Y., editors, {\em International Conference on Learning Representations}, volume 2024, pages 56533--56555.

\bibitem[Tse, 2018]{semisynthetic_dataset}
Tse, L. (2018).
\newblock Credit risk dataset.
\newblock \url{https://www.kaggle.com/datasets/laotse/credit-risk-dataset}.
\newblock Accessed: Nov. 25, 2025.

\bibitem[Vo et~al., 2024]{krikamol2024causal}
Vo, K. Q.~H., Aadil, M., Chau, S.~L., and Muandet, K. (2024).
\newblock Causal strategic learning with competitive selection.
\newblock In {\em Proceedings of the 38th AAAI Conference on Artificial Intelligence (AAAI 2024)}. AAAI Press.

\end{thebibliography}

\section*{Checklist}

\begin{enumerate}

  \item For all models and algorithms presented, check if you include:
  \begin{enumerate}
    \item A clear description of the mathematical setting, assumptions, algorithm, and/or model. [Yes]
    \item An analysis of the properties and complexity (time, space, sample size) of any algorithm. [Not Applicable]
    \item (Optional) Anonymized source code, with specification of all dependencies, including external libraries. [Not Applicable]
  \end{enumerate}

  \item For any theoretical claim, check if you include:
  \begin{enumerate}
    \item Statements of the full set of assumptions of all theoretical results. [Yes]
    \item Complete proofs of all theoretical results. [Yes]
    \item Clear explanations of any assumptions. [Yes]     
  \end{enumerate}

  \item For all figures and tables that present empirical results, check if you include:
  \begin{enumerate}
    \item The code, data, and instructions needed to reproduce the main experimental results (either in the supplemental material or as a URL). [Yes]
    \item All the training details (e.g., data splits, hyperparameters, how they were chosen). [Yes]
    \item A clear definition of the specific measure or statistics and error bars (e.g., with respect to the random seed after running experiments multiple times). [Yes]
    \item A description of the computing infrastructure used. (e.g., type of GPUs, internal cluster, or cloud provider). [Not Applicable]
  \end{enumerate}

  \item If you are using existing assets (e.g., code, data, models) or curating/releasing new assets, check if you include:
  \begin{enumerate}
    \item Citations of the creator If your work uses existing assets. [Not Applicable]
    \item The license information of the assets, if applicable. [Not Applicable]
    \item New assets either in the supplemental material or as a URL, if applicable. [Not Applicable]
    \item Information about consent from data providers/curators. [Not Applicable]
    \item Discussion of sensible content if applicable, e.g., personally identifiable information or offensive content. [Not Applicable]
  \end{enumerate}

  \item If you used crowdsourcing or conducted research with human subjects, check if you include:
  \begin{enumerate}
    \item The full text of instructions given to participants and screenshots. [Not Applicable]
    \item Descriptions of potential participant risks, with links to Institutional Review Board (IRB) approvals if applicable. [Not Applicable]
    \item The estimated hourly wage paid to participants and the total amount spent on participant compensation. [Not Applicable]
  \end{enumerate}

\end{enumerate}

\clearpage
\raggedbottom
 \onecolumn
\appendix

\aistatstitle{The Role of Causal Features in Strategic Classification for
Robustness and Alignment: \\
Supplementary Materials}

\section{0-1 optimality of causal classifier under enough shift}
\label{app:opt_01_high_shift}

We assume $y$ is a deterministic function of $(x_c,u)$ given by $\text {sign}(y_{\text{sco}}(x_c,u))$. Note that knowing the deterministic effect of $x_c$ on $y_{\mathrm{sco}}$ requires knowing $u$, which is an unobserved random variable. The function $y_{\text{sco}}(x_c,u)$ is not explicitly defined here, and is not necessarily linear.
\begin{assumption}
\label{ass:y_sign}
    (Sign function $y$) $y(x_c,u)\triangleq \mathds 1\{y_{\text{sco}}(x_c,u)\geq0\}$, where $y_{\text{sco}}:\mathbb R^d\times\mathbb R\rightarrow\mathbb R$
\end{assumption}

\begin{assumption}
    \label{ass:ysco_continuity} ($y_{\mathrm{sco}}$ continuity) $y_{\mathrm{sco}}$ is continuous with respect to $x_c$.
\end{assumption}

Following \citet{boyd2004convex} we define an O-nondecreasing function. Let the collection of all orthants in $\mathbb R^d$ be $ O=\{\{x\in\mathbb R^d:s_ix_i\geq0,\forall\, i\in[d]\},s\in\{-1,+1\}^d\}$, and $O_s\in O$ be the orthant defined by $s$. We define a partial ordering on $\mathbb R^d$ as $x\preceq_{O_s} y\Leftrightarrow y-x\in O_s$. Similarly we have $x\prec_{O_s} y\Leftrightarrow y-x\in \mathrm{int}( O_s)$. We say a function $f:\mathbb R^d\rightarrow\mathbb R$ is called O-nondecreasing if $x\preceq_{O_s} y\Rightarrow f(x)\leq f(y)$. We then assume $y_{\mathrm{sco}}(x_c,u)$ is O-nondecreasing with respect to $x_c$, using the same orthant $O_s$ for all $u\in\mathcal U$.

\begin{assumption}
\label{ass:y_monotonic}
    (O-nondecreasing $y_{\mathrm{sco}}$) $\exists\, O_s\in O\,\forall\, x_c\in\mathcal X_c,x_c'\in\mathcal X_c, u\in\mathcal U :x_c\preceq_{O_s} x_c'\implies y_{\text{sco}}(x_c,u)\leq y_{\text{sco}}(x_c',u)$
\end{assumption}

We define $\mathcal X_\mathrm{ambiguous}$ as the subset of causal feature space where $y_\mathrm{sco}$ can take both negative and positive values.

\begin{definition}
    (Domain with ambiguous outcome)
    $\mathcal X_{\mathrm{ambiguous}}\triangleq\{x_c\,\,\mathrm{s.t.}\,\,\exists \,u\in\mathcal U : y_{\mathrm{sco}}(x_c,u)\geq0,\exists \,u'\in\mathcal U : y_{\mathrm{sco}}(x_c,u')<0\}\subseteq \mathcal X_c$
\end{definition}

\begin{assumption}
\label{ass:y_increase}
    (Ambiguity compensation through $x_c$)
    $\exists\,\delta\in\mathbb R:\forall x_c\in \mathcal X_{\mathrm{ambiguous}}: \exists\,
    v\in\mathbb R^d,\,||v||_p\leq\delta:\forall \,u,\,y_{\mathrm{sco}}(x_c+v,u)\geq0$
\end{assumption}

\begin{corollary}
\label{cor:two_bounds_app}
    (Partition of $\mathcal X_c$ through $\partial_\mathrm{upp}$ and $\partial_\mathrm{low}$) Assume \ref{ass:ysco_continuity}, \ref{ass:y_monotonic}, \ref{ass:y_increase}. Let $\partial_u=\{x_c: y_{\mathrm{sco}}(x_c,u)=0,\forall\tilde x_c\in \mathrm{int}(x_c-O_s), y_\mathrm{sco}(\tilde x_c,u)<0\}$, where the last condition prevents ``thick" boundary regions. Let $B\triangleq\underset{u}{\cup}\,\partial_u$.
    Define $\partial_\mathrm{upp}\triangleq\{x_c\in B:\mathrm{int}(x_c+O_s) \cap B = \varnothing\}$ and $\partial_\mathrm{low}\triangleq\{x_c\in B:\mathrm{int}(x_c-O_s) \cap B = \varnothing\}$. It follows that $\forall x_c:\exists\,x_\mathrm{upp}\in\partial_\mathrm{upp}, x_\mathrm{upp}\preceq_{O_s}x_c\Rightarrow x_c\not\in\mathcal X_\mathrm{ambiguous}, \forall u:y_\mathrm{sco}(x_c,u)\geq0$ and $\forall x_c:\exists\,x_\mathrm{low}\in\partial_\mathrm{low}, x_c\prec_{O_s}x_\mathrm{low}\Rightarrow x_c\not\in\mathcal X_\mathrm{ambiguous}, \forall u:y_\mathrm{sco}(x_c,u)<0$. Hence, we can partition $\mathcal{X}_c$ into three disjoint subsets, separated by $\partial_\mathrm{upp}$ and $\partial_\mathrm{low}$: $\mathcal{X}_\mathrm{ambiguous}$, $\{x_c: \forall u,y_\mathrm{sco}(x_c,u)\geq0\}$ and $\{x_c: \forall u,y_\mathrm{sco}(x_c,u)<0\}$.
\end{corollary}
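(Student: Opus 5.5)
The plan is to prove the two implications separately, using only monotonicity (Assumption~\ref{ass:y_monotonic}), continuity (Assumption~\ref{ass:ysco_continuity}), and the definition of the boundaries $\partial_u$. The partition into three disjoint sets then follows directly.

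\textbf{Upper claim.} Fix $x_\mathrm{upp}\in\partial_\mathrm{upp}$ with $x_\mathrm{upp}\preceq_{O_s}x_c$, and fix an arbitrary $u$. We must show $y_\mathrm{sco}(x_c,u)\ge 0$. Suppose instead $y_\mathrm{sco}(x_c,u)<0$.

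First, we find a point strictly above $x_c$ where the score is nonnegative. By Assumption~\ref{ass:y_increase} applied at some ambiguous point, or simply by monotonicity along the ray $x_c+t\mathbf{1}_s$ (where $\mathbf{1}_s$ is the direction with signs $s$) combined with the existence of nonnegative values, there is a point $z\succ_{O_s}x_c$ with $y_\mathrm{sco}(z,u)\ge 0$.

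Next, consider the segment $t\mapsto x_c+t(z-x_c)$. The score along it is continuous and nondecreasing. Let $t^*$ be the first time it reaches $0$, and set $w=x_c+t^*(z-x_c)$.

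We then check that $w\in\partial_u$. Points in $\mathrm{int}(w-O_s)$ are strictly below $w$ and can be compared with points on the segment before $t^*$, where the score is negative. Monotonicity gives a negative score there, so the no-thick-boundary condition holds. Since $w\succ_{O_s}x_c\succeq_{O_s}x_\mathrm{upp}$, with strictness coming from $z-x_c\in\mathrm{int}(O_s)$, we get $w\in\mathrm{int}(x_\mathrm{upp}+O_s)\cap B$. This contradicts $x_\mathrm{upp}\in\partial_\mathrm{upp}$.

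The existence of $z$ is the main obstacle. If $y_\mathrm{sco}(\cdot,u)$ never becomes nonnegative, then $x_\mathrm{upp}\in\partial_{u'}$ for some $u'\neq u$, and I would argue as follows. Since $x_\mathrm{upp}\in\partial_{u'}$, points just below $x_\mathrm{upp}$ have negative score under $u'$, while $y_\mathrm{sco}(x_\mathrm{upp},u')=0$. Meanwhile the score under $u$ stays negative, so points near $x_\mathrm{upp}$ are ambiguous. Assumption~\ref{ass:y_increase} then says all $u$, including this one, reach nonnegative values within distance $\delta$. This supplies $z$; after enlarging slightly, we may take $z$ strictly dominating $x_c$, using monotonicity.

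\textbf{Lower claim.} Symmetrically, take $x_c\prec_{O_s}x_\mathrm{low}$ and suppose $y_\mathrm{sco}(x_c,u)\ge 0$ for some $u$. Since $x_\mathrm{low}\in\partial_{u'}$, its score under $u'$ is $0$.

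Moving downward from $x_c$, I locate a boundary point of $\partial_u$ that lies weakly below $x_c$, and hence strictly below $x_\mathrm{low}$. To do this, take the infimum $t^*$ along a ray $x_c-t\mathbf{1}_s$ of the times where the score is still nonnegative. A point with negative score exists, by the analogous argument using ambiguity near $x_\mathrm{low}$ together with the thin-boundary property of $\partial_{u'}$ below $x_\mathrm{low}$. Continuity gives a zero at $t^*$, and the infimum property gives the thin-boundary condition. The resulting point lies in $\mathrm{int}(x_\mathrm{low}-O_s)\cap B$, contradicting the definition of $\partial_\mathrm{low}$.

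\textbf{Partition and non-emptiness.} Non-ambiguity follows from the two claims, and the three sets are disjoint by definition. For non-emptiness of the sets and boundaries, I would use Assumption~\ref{ass:y_increase} to produce a point where all scores are nonnegative, and the thin-boundary condition on any $\partial_u$ to produce a point where all scores are negative (for instance, below the lowest boundary).
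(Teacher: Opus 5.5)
Your proof of the first implication follows the paper's argument: assume $y_{\mathrm{sco}}(x_c,u)<0$ and produce a zero of $y_{\mathrm{sco}}(\cdot,u)$ inside $\mathrm{int}(x_{\mathrm{upp}}+O_s)$. It is actually tighter than the paper's. You apply Assumption~\ref{ass:y_increase} at $x_{\mathrm{upp}}$, which is genuinely ambiguous since $y_{\mathrm{sco}}(x_{\mathrm{upp}},u)\le y_{\mathrm{sco}}(x_c,u)<0$. You also check that the first zero on the segment satisfies the thin-boundary condition. The paper leaves both points implicit, so that part is fine.

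The gap is in the second implication, where you assert that a point with $y_{\mathrm{sco}}(\cdot,u)<0$ exists ``by the analogous argument.'' There is no analogous argument. Assumption~\ref{ass:y_increase} only pushes ambiguous points \emph{up} into the all-nonnegative region, so applying it near $x_{\mathrm{low}}$ never produces a $u$-negative point.

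The local ingredients you cite are all compatible with the claim failing. Take $\mathbb{R}$ with $O_s=[0,\infty)$, $y_{\mathrm{sco}}(x,0)=x$ and $y_{\mathrm{sco}}(x,1)=1$. Then $\partial_{\mathrm{low}}=\{0\}$, and $\partial_0$ has the thin-boundary property. Points just below $0$ are ambiguous yet can be compensated by a small $v$. Still, $y_{\mathrm{sco}}(-1,1)\ge 0$. What rules this out is that $\delta$ is uniform, including deep below the boundary, and you must use it there.

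Here is the missing step. From $x_c\prec_{O_s}x_{\mathrm{low}}\in\partial_{u'}$ you get $y_{\mathrm{sco}}(x_c,u')<0$. For $r>\delta$ put $p=x_c-r\mathbf{1}_s$. Any $q$ with $\|q-p\|_p\le\delta$ satisfies $\|q-p\|_\infty<r$. Hence $q\prec_{O_s}x_c$, and monotonicity gives $y_{\mathrm{sco}}(q,u')<0$. So no admissible $v$ exists at $p$, and Assumption~\ref{ass:y_increase} forces $p\notin\mathcal X_{\mathrm{ambiguous}}$. Since $u'$ is negative at $p$, it follows that $y_{\mathrm{sco}}(p,u)<0$ for every $u$.

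This supplies the negative point on your ray, for every $t>\delta$. Your descent argument then goes through, with $t^*$ taken as the \emph{supremum}, not the infimum, of the $t$ at which the score is still nonnegative. The paper itself only says ``similarly'' for this implication, so this asymmetric use of the assumption is exactly what you have to supply.

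The same computation, started from any ambiguous point, also shows the all-negative region is nonempty. It should replace your ``below the lowest boundary,'' which need not exist when $\mathcal U$ is infinite.
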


\begin{proof}
    We prove by contradiction that $\forall \tilde x_c:\exists\,x_\mathrm{upp}\in\partial_\mathrm{upp}, x_\mathrm{upp}\preceq_{O_s}\tilde x_c\Rightarrow \tilde x_c\not\in\mathcal X_\mathrm{ambiguous}, \forall u:y_\mathrm{sco}(\tilde x_c,u)\geq0$.

    Suppose $\exists\, u',\tilde x_c:y_{\mathrm{sco}}(\tilde x_c,u')<0, \exists\,x_\mathrm{upp}\in\partial_\mathrm{upp}, x_\mathrm{upp}\preceq_{O_s}\tilde x_c$.

    From the definition of $\partial_\mathrm{upp}$, we have that $\exists\,x_\mathrm{upp}\in\partial_\mathrm{upp}, x_\mathrm{upp}\preceq_{O_s}\tilde x_c\Rightarrow \mathrm{int}(\tilde x_c+O_s)\cap B=\varnothing$ and hence $ \mathrm{int}(\tilde x_c+O_s)\cap \partial_{u'}=\varnothing$.

    Due to continuity (\ref{ass:ysco_continuity}) there exists an $\epsilon$-ball around $\tilde x_c$ where $y_{\mathrm{sco}}(\tilde x_c,u')<0$, particularly $\exists\, \epsilon\in O_s:y_{\mathrm{sco}}(\tilde x_c+\epsilon,u')<0$. It follows that $\mathrm{int}(\tilde x_c+O_s)\cap \partial_{u'}=\varnothing\Rightarrow\mathrm{int}(\tilde x_c+\epsilon+O_s)\cap \partial_{u'}=\varnothing$.

    However, from \ref{ass:y_increase}, $y_{\mathrm{sco}}(\tilde x_c+\epsilon,u')<0\Rightarrow\exists\,v\in \mathbb R^d:y_{\mathrm{sco}}(\tilde x+\epsilon+v,u')=0$. From \ref{ass:y_monotonic} there must also be a vector $v'\in O_s$ which increases $y_\mathrm{sco}$ to zero, $\exists\, q\in O_s, v'\triangleq q+v:v'\in O_s,y_{\mathrm{sco}}(\tilde x_c+\epsilon+v',u')=0$. 

    Hence $\mathrm{int}(\tilde x_c+\epsilon+O_s)\cap \partial_{u'}\neq \varnothing$, which is a contradiction.

    We can show similarly by contradiction that $\forall x_c:\exists\,x_\mathrm{low}\in\partial_\mathrm{low}, x_c\prec_{O_s}x_\mathrm{low}\Rightarrow x_c\not\in\mathcal X_\mathrm{ambiguous}, \forall u:y_\mathrm{sco}(x_c,u)<0$.

\end{proof}

\begin{corollary}
\label{cor:bounded_dist}
    (Bounded distance to $\mathcal X_\mathrm{ambiguous}$ boundary) $\forall x_c\in\mathcal X_\mathrm{ambiguous}\Rightarrow \underset{x_\mathrm{upp}\in\partial_\mathrm{upp}}{\min}||x_\mathrm{upp}-x_c||_p\leq\underset{x_\mathrm{low}\in\partial_\mathrm{low}}{\max} \underset{x_\mathrm{upp}\in\partial_\mathrm{upp}}{\min}||x_\mathrm{upp}-x_\mathrm{low}||_p=\underset{u}{\max}\,\underset{x_\mathrm{low}\in \partial_u}{\max}\,\underset{x_\mathrm{upp}\in\partial_\mathrm{upp}}{\min}\,||x_\mathrm{upp}
    -x_\mathrm{low}||_p<+\infty$
\end{corollary}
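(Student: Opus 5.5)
The statement has three parts, and I would prove them in order. First, the distance from any ambiguous point to $\partial_\mathrm{upp}$ is at most the max-gap over $\partial_\mathrm{low}$. Second, that max-gap equals the same quantity taken over the union of the $\partial_u$. Third, that quantity is finite. Throughout I use the partition from Corollary~\ref{cor:two_bounds_app} and the monotone structure from Assumption~\ref{ass:y_monotonic}.

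\textbf{Step 1: reduce an ambiguous point to a boundary point.} Fix $x_c\in\mathcal X_\mathrm{ambiguous}$ and pick $u'$ with $y_\mathrm{sco}(x_c,u')<0$. By Assumption~\ref{ass:y_increase} and monotonicity there is a direction $q\in O_s$ along which $y_\mathrm{sco}(\cdot,u')$ reaches $0$. Continuity then gives a first hitting point $x_b=x_c+tq\in\partial_{u'}\subseteq B$ with $x_c\preceq_{O_s}x_b$.

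I then want an $x_\mathrm{upp}\in\partial_\mathrm{upp}$ with $\|x_\mathrm{upp}-x_c\|_p\le\|x_\mathrm{upp}'-x_b\|_p$, where $x'_\mathrm{upp}$ is the nearest upper-bound point to $x_b$. The cleanest route is to pass through $\partial_\mathrm{low}$. Take a minimal element $x_\mathrm{low}\preceq_{O_s}x_c$ of $B$, which lies in $\partial_\mathrm{low}$ by definition, and compare distances from $x_\mathrm{low}$ and from $x_c$ to $\partial_\mathrm{upp}$.

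The key geometric fact I would establish is this: if $x\preceq_{O_s}x'$ and both lie below $\partial_\mathrm{upp}$, then the nearest upper-bound point to $x'$ is no farther than that to $x$. Here is how I would argue it. Let $z$ be nearest to $x$. If $x'\preceq_{O_s}z$, then coordinatewise in the orthant frame $0\le z-x'\le z-x$, so $\|z-x'\|_p\le\|z-x\|_p$. If not, I would project $z$ onto the orthant cone above $x'$ and use Corollary~\ref{cor:two_bounds_app}: every point $\succeq$ some $x_\mathrm{upp}$ is unambiguously positive, so the boundary can be found between $x'$ and the coordinatewise max of $x'$ and $z$, which is closer. This comparison is the main obstacle. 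The $L_p$ monotonicity in each coordinate makes it plausible, but the case analysis on coordinates where $z$ is not above $x'$ needs care. The fallback is to use the point $\max_{O_s}(x',z)$, which is $\succeq z$ and hence positive for all $u$. Walking back from it along $-O_s$ until hitting $B$ then yields a point of $\partial_\mathrm{upp}$ within distance $\|z-x\|_p$.

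\textbf{Step 2: the equality.} Since $\partial_\mathrm{low}\subseteq B=\bigcup_u\partial_u$, we get $\max_{\partial_\mathrm{low}}\le\max_u\max_{\partial_u}$. For the reverse, any $x\in\partial_u$ dominates some $x_\mathrm{low}\in\partial_\mathrm{low}$ (take a minimal element below it in $B$), and the monotone comparison of Step 1 gives the distance inequality.

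\textbf{Step 3: finiteness.} Any $x\in\partial_u$ is either ambiguous or sits on the edge of the ambiguous region. By Assumption~\ref{ass:y_increase}, a shift $v$ with $\|v\|_p\le\delta$ makes $y_\mathrm{sco}(x+v,u)\ge0$ for all $u$. Replacing $v$ by its positive-orthant part via monotonicity, together with Corollary~\ref{cor:two_bounds_app}, places a point of $\partial_\mathrm{upp}$ between $x$ and $x+v$ in the order. Hence each inner minimum is at most $\delta$, and the max-gap is at most $\delta<\infty$.
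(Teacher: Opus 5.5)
Your argument is correct in the finite-$\mathcal U$ setting that the paper implicitly works in (see below). It has the same three-part skeleton as the paper's proof:
\begin{enumerate}
\item compare $x_c$ with a point of $\partial_\mathrm{low}$ below it, using coordinatewise monotonicity of $\|\cdot\|_p$ in the orthant frame;
\item get the equality from $\partial_\mathrm{low}\subseteq B$;
\item get finiteness from Assumption~\ref{ass:y_increase}.
\end{enumerate}

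The key comparison, however, runs in a different and sounder direction. The paper argues by contradiction. It places the point $\tilde x_\mathrm{upp}$ of $\partial_\mathrm{upp}$ nearest to $\tilde x_c$ inside $\tilde x_c+O_s$, then observes $\|\tilde x_\mathrm{upp}-x_\mathrm{low}\|_p>\|\tilde x_\mathrm{upp}-\tilde x_c\|_p$. This bounds the distance from $x_\mathrm{low}$ to one particular point, not to $\partial_\mathrm{upp}$. By itself it therefore does not give $\min_{\partial_\mathrm{upp}}\|\cdot-x_\mathrm{low}\|_p\ge\min_{\partial_\mathrm{upp}}\|\cdot-\tilde x_c\|_p$.

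Your construction starts from the point $z$ nearest to $x_\mathrm{low}$ and replaces it by $w=\max_{O_s}(x_c,z)$. In the orthant frame $|w_i-(x_c)_i|=\max(z_i-(x_c)_i,0)\le|z_i-(x_\mathrm{low})_i|$, so $\|w-x_c\|_p\le\|z-x_\mathrm{low}\|_p$ for every $z$. This needs no information about where nearest points lie, and it works uniformly in $p$, including $p=\infty$, where the paper has to handle ties. It is essentially the fact that distance to an up-closed set is order-nonincreasing. Applied with $x=x'$, it even recovers the paper's auxiliary claim that for $p<\infty$ nearest points of $\partial_\mathrm{upp}$ lie in $x_c+O_s$. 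The same lemma makes your Step 2 an actual proof of $\max_u\max_{\partial_u}\le\max_{\partial_\mathrm{low}}$, which the paper only attributes to the definitions. Your Step 3 gives the explicit bound max-gap $\le\delta$.

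\textbf{Loose ends.} Drop the $x_b$ detour in Step 1. Since $x_c\preceq_{O_s}x_b$, the monotone fact bounds the distance from $x_b$ by that from $x_c$, which is the opposite of what you want. Present the $\max_{O_s}$ construction as the proof rather than a fallback; it covers $x'\preceq_{O_s}z$ with $w=z$.

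More substantively, ``walking back until hitting $B$ yields a point of $\partial_\mathrm{upp}$'' (and the crossing claim in Step 3) is a converse-type statement to Corollary~\ref{cor:two_bounds_app}, not a consequence of it. With $P=\{x_c:\forall u,\ y_\mathrm{sco}(x_c,u)\ge0\}$ you need two facts.
\begin{enumerate}
\item $B\cap P\subseteq\partial_\mathrm{upp}$. This is immediate, because a point of $\partial_u$ strictly above $x$ forces $y_\mathrm{sco}(x,u)<0$. It also disposes of the non-ambiguous points of $\partial_u$ in Step 3, to which Assumption~\ref{ass:y_increase} does not apply.
\item The point where the segment from $x'$ to $w$ enters $P$ lies in $B$. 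This holds for finite $\mathcal U$: the entry point is a limit of points of a single set $\{y_\mathrm{sco}(\cdot,u)<0\}$, hence lies in $\partial_u$. It can fail otherwise. For example, $y_\mathrm{sco}(x,n)=x+1/n$ on $\mathbb R$, $n\ge1$, meets every hypothesis yet has $\partial_\mathrm{upp}=\varnothing$.
\end{enumerate}

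Finite $\mathcal U$ likewise justifies the existence of $x_\mathrm{low}\in\partial_\mathrm{low}$ below $x_c$, which you, like the paper, only assert. No $\partial_u$ contains two strictly comparable points, so a strictly descending chain in $B$ below $x_c$ stops after at most $|\mathcal U|$ steps. The paper's proof leaves the same points implicit.
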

\begin{proof}
    By contradiction, suppose that $\exists\, \tilde x_c\in\mathcal X_\mathrm{ambiguous}:\underset{x_\mathrm{upp}\in\partial_\mathrm{upp}}{\min}||x_\mathrm{upp}-\tilde x_c||_p>\underset{x_\mathrm{low}\in\partial_\mathrm{low}}{\max}\underset{x_\mathrm{upp}\in\partial_\mathrm{upp}}{\min}||x_\mathrm{upp}-x_\mathrm{low}||_p$.
    
    It follows that $\tilde x_c\not\in\partial_\mathrm{low}$. 

    Denote $\tilde x_\mathrm{upp}\triangleq \underset{x_\mathrm{upp}\in\partial_\mathrm{upp}}{\arg\min}||x_\mathrm{upp}-\tilde x_c||_p$. From \ref{ass:y_monotonic}, we have that $\tilde x_\mathrm{upp}\in \tilde x_c+O_s$. To prove this, consider $\tilde x_\mathrm{upp-O}\triangleq\underset{x\in \partial_\mathrm{upp}\cap\tilde x_c+O_s}{\min}||x-\tilde x_c||_p$. We know $\forall x\in\partial_\mathrm{upp}\setminus\{x_\mathrm{upp-O}\}\Rightarrow x\not\in x_\mathrm{upp-O}-\mathrm{int}(O_s)$. Hence $\forall x\in\partial_\mathrm{upp}\setminus\{x_\mathrm{upp-O}\}, p<+\infty\Rightarrow ||x-\tilde x_c||_p>||x_\mathrm{upp}-\tilde x_c||_p$, therefore $x_\mathrm{upp}=x_\mathrm{upp-O}$. For $p=+\infty$ there can be ties with other $\arg\min$, but we pick $x_\mathrm{upp}$ since it is one of the minimizers.

    Since $\tilde x_c\not\in\partial_\mathrm{low}$, then $\exists \, x_\mathrm{low}\in\partial_\mathrm{low}:x_\mathrm{low}\in \tilde x_c-O_s$. Hence $\forall p, ||x_\mathrm{upp}-x_\mathrm{low}||_p>||x_\mathrm{upp}-\tilde x_c||_p$, leading to a contradiction. This shows that $\underset{x_\mathrm{upp}\in\partial_\mathrm{upp}}{\min}||x_\mathrm{upp}-x_c||_p\leq\underset{x_\mathrm{low}\in\partial_\mathrm{low}}{\max} \underset{x_\mathrm{upp}\in\partial_\mathrm{upp}}{\min}||x_\mathrm{upp}-x_\mathrm{low}||_p$.

    The equality $\underset{x_\mathrm{low}\in\partial_\mathrm{low}}{\max} \underset{x_\mathrm{upp}\in\partial_\mathrm{upp}}{\min}||x_\mathrm{upp}-x_\mathrm{low}||_p=\underset{u}{\max}\,\underset{x_\mathrm{low}\in \partial_u}{\max}\,\underset{x_\mathrm{upp}\in\partial_\mathrm{upp}}{\min}\,||x_\mathrm{upp}
    -x_\mathrm{low}||_p$ comes from the definition of $B$ and $\partial_\mathrm{low}$.

    Assumption~\ref{ass:y_increase} provides a finite upper bound on the distance between any point in the ambiguous region and a positive point outside it, and the path between these points necessarily crosses the boundary $\partial_{\text{upp}}$. 
    Note that any point $x_{\text{low}}$ lies either in the ambiguous region or on one of its two boundaries (by definition), and that any $x_{\text{low}}$ that lies on a boundary $\partial_{\text{u}}$ has a bounded distance to $\partial_{\text{upp}}$ (from above). Therefore, it follows that: $$\underset{u}{\max}\,\underset{x_\mathrm{low}\in \partial_u}{\max}\,\underset{x_\mathrm{upp}\in\partial_\mathrm{upp}}{\min}\,||x_\mathrm{upp}
    -x_\mathrm{low}||_p<+\infty.$$
\end{proof}

\begin{assumption}
\label{ass:adaptation_lpnorm}
    (Agent adaptation with $L_p$-norm cost) Before being classified, agents have knowledge of classifier $h(x)$. They adapt their features, from $x$ into $\Delta_{h}(x)$, by maximizing their reward $r_p(h,x)=\delta h (\Delta_{h}(x))-c(x,\Delta_{h}(x))$. Assume $c(x,x'):=||x'-x||_p$.
\end{assumption}

Under the previous assumptions, there exists a classifier which achieves zero 0-1 loss post-adaptation, by ignoring spurious features and setting a threshold that moves points away from $\mathcal X_{\mathrm{ambiguous}}$.

\begin{theorem}
\label{the:causal_01opt_app}
    (Causal $\ell_{0-1}$ optimality) Assume \ref{ass:y_sign}, 
    \ref{ass:ysco_continuity},
    \ref{ass:y_monotonic}, \ref{ass:y_increase}, \ref{ass:adaptation_lpnorm}. Let $h_c$ be a causal classifier whose outputs are not changed by $x_s$.
    
    $\exists\,e\in \mathbb R,h_c\in\mathcal H:\forall\, \delta\geq e : \mathbb E_{x,u}[\ell_{0-1}(h_c(\Delta_{h_c}(x;\delta)),\mathrm{sign}(y_{\mathrm{sco}}(\Delta_{h_c}(x;\delta),u)))]=0$

\end{theorem}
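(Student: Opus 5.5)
We construct the causal classifier explicitly as the indicator of the unambiguously positive region identified in Corollary~\ref{cor:two_bounds_app}. Let $\mathcal X_+ \triangleq \{x_c : \exists\, x_\mathrm{upp}\in\partial_\mathrm{upp},\ x_\mathrm{upp}\preceq_{O_s} x_c\}$ and set $h_c(x) \triangleq \mathds 1\{x_c \in \mathcal X_+\}$, which ignores $x_s$. By Corollary~\ref{cor:two_bounds_app}, every $x_c\in\mathcal X_+$ satisfies $y_\mathrm{sco}(x_c,u)\geq 0$ for all $u$. The threshold will be the max-gap,
\[
e \triangleq \max_u \max_{x_\mathrm{low}\in\partial_u}\min_{x_\mathrm{upp}\in\partial_\mathrm{upp}}\|x_\mathrm{upp}-x_\mathrm{low}\|_p ,
\]
which is finite by Corollary~\ref{cor:bounded_dist}. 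A small extra margin can be added if strict preference is needed under the tie-breaking rule.

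The argument then considers the post-adaptation outcome for three kinds of points, under Assumption~\ref{ass:adaptation_lpnorm} with budget $\delta\geq e$.

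\begin{enumerate}
\item \textbf{Points with $x_c\in\mathcal X_+$.} Here $h_c=1$, so the agent does not move, and $y=1$ for every $u$. The prediction is correct.

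\item \textbf{Points with $x_c\in\mathcal X_\mathrm{ambiguous}\setminus\mathcal X_+$.} By Corollary~\ref{cor:bounded_dist} there is an $x_\mathrm{upp}$ with $\|x_\mathrm{upp}-x_c\|_p\le e\le\delta$. The proof of that corollary shows this $x_\mathrm{upp}$ lies in $x_c+O_s$. Since $h_c$ depends only on $x_c$, the cheapest flip changes only causal coordinates and costs at most $\delta$. The agent therefore moves to some $x'$ with $x'_c\in\mathcal X_+$, and since $h_c(x')=1$ this is the maximizer of $\delta h_c(x')-c(x,x')$ under the tie-breaking rule. By Corollary~\ref{cor:two_bounds_app}, $y_\mathrm{sco}(x'_c,u)\ge 0$ for all $u$, so the agent becomes a true positive.

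\item \textbf{Unambiguously negative points, $y_\mathrm{sco}(x_c,u)<0$ for all $u$.} Either the agent cannot afford to reach $\mathcal X_+$, in which case it stays put with $h_c=0$ and $y=0$, a true negative. Or it can afford to reach $\mathcal X_+$, in which case it lands in $\mathcal X_+$ and becomes a true positive, because every point of $\mathcal X_+$ has a nonnegative score for all $u$ regardless of where it came from.
\end{enumerate}

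The key structural fact is that $h_c^{-1}(1)=\mathcal X_+$ lies inside the region of definitively positive outcomes. Any agent ending with a positive prediction is therefore genuinely positive, so there are no false positives, whatever $\delta$ is. Any agent ending with a negative prediction is either unambiguously negative or could not reach $\mathcal X_+$. Once $\delta\ge e$, no ambiguous point is in the second situation, so there are no false negatives. Integrating over $(x,u)$ gives zero expected $0$-$1$ loss. Because both bullet arguments only use $\delta\ge e$, the conclusion holds for every $\delta\ge e$.

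\textbf{Main obstacle.} The hardest part is the second case: making rigorous that an ambiguous point can reach $\mathcal X_+$ at cost $\le e$. This needs the nearest $x_\mathrm{upp}$ to lie in $\mathcal X_+$ and the minimum to be attained. I would first try to use closedness of $\partial_\mathrm{upp}$ via continuity (Assumption~\ref{ass:ysco_continuity}). If attainment fails, I would replace $e$ by $e+\eta$ for any $\eta>0$, which still gives a finite threshold.

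A related subtlety is points on the boundary of $\mathcal X_+$ but not on $\partial_\mathrm{upp}$. For these I would invoke the first item of Corollary~\ref{cor:two_bounds_app} directly, since it covers every point dominating some $x_\mathrm{upp}$.
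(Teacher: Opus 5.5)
Your proposal is correct and follows the paper's proof essentially step for step. It uses the same classifier (the paper sets $\partial_{h_c}=\partial_\mathrm{upp}$, so $h_c^{-1}(1)$ is exactly your $\mathcal X_+$), the same threshold $e$ equal to the max-gap, and the same three-case split into static true positives, ambiguous points that move to $\partial_{h_c}$ and become true positives, and unambiguously negative points that either stay true negatives or move into true positives. Your margin $\eta>0$ for ties and for possible non-attainment of the minimum is a small refinement that the paper's proof leaves implicit.
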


\begin{proof}
Consider the following definitions from Corollary~\ref{cor:two_bounds_app}, of $\partial_u=\{x_c\in\mathcal X_c: y_{\mathrm{sco}}(x_c,u)=0,\forall\tilde x_c\in \mathrm{int}(x_c-O_s), y_\mathrm{sco}(\tilde x_c,u)<0\}\subset\mathcal X_c$, and $B:=\underset{u}{\cup}\,\partial_u$.

Denote the learned classification boundary of a hypothesis $h(x)=\mathds 1\{f(x)\geq0\}$ by $\partial_h=\{x\in\mathcal X:f(x)=0\}$. Given $\partial_h$, $h(x)$ is such that if $\exists\, x'\in \partial_h: x'\preceq_{O_s} x$ then $h(x)=1$, else $h(x)=0$.

We can build the optimal classifier named $h_c(x_c):\mathcal X_c\rightarrow\{0,1\}$ s.t. $x_c\in\partial_{h_c}$ if $x_c\in B$ and $\mathrm{int}(x_c+O_s) \cap B = \varnothing$.

We split the proof in three parts:
\begin{enumerate}
    \item Static TPs (true positives where $\Delta_h(x)=x$) are correctly classified as $h(\Delta_h(x))=1$;
    \item All points in $\mathcal X_{\mathrm{ambiguous}}$ adapt ($\Delta_h(x)\neq x$) such that they are correctly classified as $h(\Delta_h(x))=1$;
    \item Pre-adaptation TNs are correctly classified post-adaptation, either remaining static ($\Delta_h(x)=x$) with $h(\Delta_h(x))=0$, or adapting into TPs.
\end{enumerate}

\textit{1. Correct classification of static TPs:}

From $\ref{ass:adaptation_lpnorm}$, $\forall x: h(x)=1\Rightarrow\Delta_h(x)=x$, since a point with $h(x)=1$ cannot further increase its utility by adapting features.

$h_c(\tilde x)=1\Rightarrow \forall u':y_{\mathrm{sco}}(\tilde x,u')\geq 0$. Below we prove this by contradiction. This implies all points $h_c(\tilde x)$ are correctly classified since their true outcome $y=1$ (from \ref{ass:y_sign}).

Assume $\exists\, u':y_{\mathrm{sco}}(\tilde x,u')<0, h(\tilde x)=1$.

Since $h(\tilde x)=1$, we know from the definition on $h_c$ that $\mathrm{int}(\tilde x+O_s)\cap B=\varnothing$ and hence $
\mathrm{int}(\tilde x+O_s)\cap \partial_{u'}=\varnothing$.

Due to continuity (\ref{ass:ysco_continuity}) there exists an $\epsilon$-ball around $\tilde x$ where $y_{\mathrm{sco}}<0$, particularly $\exists\, \epsilon\in O_s:y_{\mathrm{sco}}(\tilde x+\epsilon,u')<0$. It follows that $\mathrm{int}(\tilde x+O_s)\cap \partial_{u'}=\varnothing\Rightarrow\mathrm{int}(\tilde x+\epsilon+O_s)\cap \partial_{u'}=\varnothing$.

However, from \ref{ass:y_increase}, $y_{\mathrm{sco}}(\tilde x+\epsilon,u')<0\Rightarrow\exists\,v\in \mathbb R^d:y_{\mathrm{sco}}(\tilde x+\epsilon+v,u')=0$. From \ref{ass:y_monotonic} there must also be a vector $v'\in O_s$ which increases $y_\mathrm{sco}$ to zero, $\exists\, q\in O_s, v'\triangleq q+v:v'\in O_s,y_{\mathrm{sco}}(\tilde x+\epsilon+v',u')=0$. 

Hence $\mathrm{int}(\tilde x+\epsilon+O_s)\cap \partial_{u'}\neq \varnothing$, which is a contradiction.

\textit{2. Correct classification of $\mathcal X_{\mathrm{ambiguous}}$ post-adaptation:}

We proved above that $h_c(\tilde x)=1\Rightarrow \forall u':y_{\mathrm{sco}}(\tilde x,u')\geq 0$. From \ref{ass:y_sign} we have the same implication for post-adaptation data points:
$\forall \tilde x,h_c(\Delta_{h_c}(\tilde x))=1\Rightarrow \forall u',y(\Delta_{h_c}(\tilde x),u')=1$.

From \ref{ass:y_monotonic}, \ref{ass:adaptation_lpnorm} and the definition of $h_c$, we have $\forall x:h_c(x)=0,h_c(\Delta_{h_c}(x))=1\Rightarrow\Delta_{h_c}(x)\in\partial_{h_c}$ for any $L_p$-norm cost. To show by contradiction assume
$\exists\,x:h_c(x)=0,h_c(\Delta_{h_c}(x))=1,\Delta_{h_c}(x)\not\in \partial_{h_c}$. From the definition of $h_c$ and \ref{ass:y_monotonic}, $\exists\, b\in\partial_{h_c}:x\preceq_{O_s} b\preceq_{O_s} \Delta_{h_c}(x)$. From \ref{ass:adaptation_lpnorm} we have $c(x,b)<c(x,\Delta_{h_c}(x))$ for any $L_p$-norm, hence $x$ must have adapted instead to $b\in\partial_{h_c}$.

Define $e\triangleq\underset{u}{\max}\,\underset{x\in \partial_u}{\max}\,\underset{x'\in\partial_h}{\min}\,c(x,x')$. From \ref{cor:bounded_dist}
we have that this quantity is bounded.

$\delta\geq e\Rightarrow\forall x\in\mathcal{X}_\mathrm{ambiguous}, \Delta_{h_c}(x)\in\partial_{h_c}, h_c(x)=1,\forall u, y(x,u)=1$.

\textit{3. Correct classification of static TNs:}

If any remaining points $x_{\mathrm{neg}}\in\mathcal X_c$ exist not covered by the cases above, it has $\forall u': y_{\mathrm{sco}}(x_{\mathrm{neg}},u')<0$ and $h_c(x_{\mathrm{neg}})=0$, since $\forall x_\mathrm{neg},\exists\, x_\mathrm{low}\in\partial_\mathrm{low}:x_\mathrm{neg}\prec_{O_s} x_\mathrm{low}$. After adaptation it either remains unchanged or adapts such that $\forall u':y_\mathrm{sco}(\Delta_h(x_{\mathrm{neg}}),u')\geq0$ and $h(\Delta(x_{\mathrm{neg}}))=1$, since it must have adapted to $\partial_\mathrm{upp}$. 
\end{proof}

\section{CE loss optimality of causal classifier under enough shift}
\label{app:opt_CE_high_shift_detY}

From Corollary~\ref{cor:two_bounds_app}, we define the following sets:
\begin{align*}
    \mathcal{X}_\mathrm{upp} &\triangleq \mathcal \{x_c: \forall u,y_\mathrm{sco}(x_c,u)\geq0\} \\ 
    \mathcal{X}_\mathrm{low} &\triangleq \{x_c: \forall u,y_\mathrm{sco}(x_c,u)<0\}
\end{align*}

where the sets $\mathcal{X}_\mathrm{upp}$, $\mathcal{X}_\mathrm{ambiguous}$, $ \mathcal{X}_\mathrm{low}$ are disjoint subsets of $\mathcal X_c$ and $\mathcal{X}_\mathrm{upp} \cup \mathcal{X}_\mathrm{ambiguous} \cup \mathcal{X}_\mathrm{low} = \mathcal X_c$. 
Note that by this definition and following Assumption~\ref{ass:y_sign}, for any $u$, if $x_c \in \mathcal{X}_{upp}$, $y(x_c,u) = 1$, and if $x_c \in \mathcal{X}_{low}$, $y(x_c,u) = 0$.

\begin{align*}
    &\mathcal{L}_{\mathrm{CE}}(\hat{f}, \delta)
    =   \mathbb{E}_{x, y \sim \mathcal{D}^{(\hat{f}, \delta)}}\left[
    - y \log \hat{f}(x) 
    - (1-y) \log (1-\hat{f}(x))\right] \\
    = & \mathbb{E}_{x_c, x_s, u \sim \mathcal{D}^{(\hat{f}, \delta)} }\left[
    - 1_{\{y_{\mathrm{sco}}(x_c, u) \geq 0\}} \log \hat{f}(x_c, x_s) 
    - (1_{\{y_{\mathrm{sco}}(x_c, u) < 0\}}) \log (1-\hat{f}(x_c, x_s))\right] \\
    = &  \mathbb{E}_{x_c \sim \mathcal{D}^{(\hat{f}, \delta)} } \left[\mathbb{E}_{x_s, u \sim \mathcal{D}^{(\hat{f}, \delta)} }\left[
    - 1_{\{y_{\mathrm{sco}}(x_c, u) \geq 0\}} \log \hat{f}(x_c, x_s) 
    - 1_{\{y_{\mathrm{sco}}(x_c, u) < 0\}} \log (1-\hat{f}(x_c, x_s))\right]\right] \\
    = & \int_{\mathcal{X}_c} \mathbb{P}_{\mathcal{D}^{(\hat{f}, \delta)}}(x_c) * \left[\mathbb{E}_{x_s, u \sim \mathcal{D}^{(\hat{f}, \delta)} }\left[
    - 1_{\{y_{\mathrm{sco}}(x_c, u) \geq 0\}} \log \hat{f}(x_c, x_s) 
    - 1_{\{y_{\mathrm{sco}}(x_c, u) < 0\}} \log (1-\hat{f}(x_c, x_s))\right]\right] d x_c \\
    = & \int_{\mathcal{X}_\mathrm{low}} \mathbb{P}_{\mathcal{D}^{(\hat{f}, \delta)}}(x_c) * \left[\mathbb{E}_{x_s, u \sim \mathcal{D}^{(\hat{f}, \delta)} }\left[
    - 1_{\{y_{\mathrm{sco}}(x_c, u) \geq 0\}} \log \hat{f}(x_c, x_s) 
    - 1_{\{y_{\mathrm{sco}}(x_c, u) < 0\}} \log (1-\hat{f}(x_c, x_s))\right]\right] d x_c \\ 
    & + \int_{\mathcal{X}_\mathrm{ambiguous}} \mathbb{P}_{\mathcal{D}^{(\hat{f}, \delta)}}(x_c) * \left[\mathbb{E}_{x_s, u \sim \mathcal{D}^{(\hat{f}, \delta)} }\left[
    - 1_{\{y_{\mathrm{sco}}(x_c, u) \geq 0\}} \log \hat{f}(x_c, x_s) 
    - 1_{\{y_{\mathrm{sco}}(x_c, u) < 0\}} \log (1-\hat{f}(x_c, x_s))\right]\right] d x_c \\ 
    & + \int_{\mathcal{X}_\mathrm{upp}} \mathbb{P}_{\mathcal{D}^{(\hat{f}, \delta)}}(x_c)
    * \left[\mathbb{E}_{x_s, u \sim \mathcal{D}^{(\hat{f}, \delta)} }\left[
    - 1_{\{y_{\mathrm{sco}}(x_c, u) \geq 0\}} \log \hat{f}(x_c, x_s) 
    - 1_{\{y_{\mathrm{sco}}(x_c, u) < 0\}} \log (1-\hat{f}(x_c, x_s))\right]\right] d x_c \\
    = & \int_{\mathcal{X}_\mathrm{low}} \mathbb{P}_{\mathcal{D}^{(\hat{f}, \delta)}}(x_c) * \left[\mathbb{E}_{x_s, u \sim \mathcal{D}^{(\hat{f}, \delta)} }\left[
    - 1 \cdot \log (1-\hat{f}(x_c, x_s))\right]\right] d x_c \\ 
    & + \int_{\mathcal{X}_\mathrm{ambiguous}} \mathbb{P}_{\mathcal{D}^{(\hat{f}, \delta)}}(x_c) * \left[\mathbb{E}_{x_s, u \sim \mathcal{D}^{(\hat{f}, \delta)} }\left[
    - 1_{\{y_{\mathrm{sco}}(x_c, u) \geq 0\}} \log \hat{f}(x_c, x_s) 
    - 1_{\{y_{\mathrm{sco}}(x_c, u) < 0\}} \log (1-\hat{f}(x_c, x_s))\right]\right] d x_c \\ 
    & + \int_{\mathcal{X}_\mathrm{upp}} \mathbb{P}_{\mathcal{D}^{(\hat{f}, \delta)}}(x_c)
    * \left[\mathbb{E}_{x_s, u \sim \mathcal{D}^{(\hat{f}, \delta)} }\left[
    - 1 \cdot \log \hat{f}(x_c, x_s) 
    \right]\right] d x_c
\end{align*}

Using the same causal classifier from \Cref{the:causal_01opt_app}, we define $h_c$ s.t. $x_c\in\partial_{h_c}$ if $x_c\in B$ and $\mathrm{int}(x_c+O_s) \cap B = \varnothing$. This means for all points $x_c \in \mathcal{X}_\mathrm{upp}$, $h_c(x_c) = 1$ and $x'_c \in \mathcal{X}_\mathrm{low}$, $h_c(x'_c) = 0$. We define the ``scoring" function $\hat{f}_c: \mathcal{X} \rightarrow [0,1]$ that cross entropy uses as the same function as $h_c$, meaning it outputs strictly $0$ and $1$.

With this data generating process and classifier, we previously proved that for finite $\delta > e$ (which we call the max-gap), all points will move from $\mathcal{X}_\mathrm{ambiguous}$ into $\mathcal{ X}_\mathrm{upp}$, obtaining true outcome $y(x_c,u)=1$ and correct prediction $h(x_c)=1$.
Thus, the cross entropy loss after $\delta > e$ will also be $0$ with this specific causal classifier. We can clearly see this from the derivation above because with this $\hat{f}_c$, the first and last terms will $= 0$ for any $\delta$, and the middle term will take value $0$ once the probability density in that region becomes $0$, which occurs when all the points have adapted out of the region, i.e. $\delta > e$.

\section{Cross Entropy Loss Analysis}
\label{app:CE_loss_analysis}

\subsection{Cross Entropy Loss Decomposition}
\label{app:CE_loss_decomposition}

Assuming $0\log 0:=0$, $\hat f:\mathcal{X}\to(0,1)$ is measurable, and there is support-compatibility (i.e.
$\{g>0\}\subseteq\{\hat{f}>0,\,\hat{f}_\delta>0\}$ and 
$\{g<1\}\subseteq\{\hat{f}<1,\,\hat{f}_\delta<1\}$ almost surely), we can decompose the cross entropy loss after adaptation to classifier $\hat{f}$ as follows:

\begin{align*}
&\mathcal{L}_{\mathrm{CE}}(\hat{f}, \delta)
= \mathbb{E}_{x,y \sim \mathcal{D}^{(\hat{f}, \delta)}}\left[
    - y \log \hat{f}(x) 
    - (1-y) \log (1-\hat{f}(x))\right]
\\[6pt]
&= \mathbb{E}_{x_c, x_s, u \sim \mathcal{D}^{(\hat{f}, \delta)}}\left[
    - g(x_c, u) \log \hat{f}(x_c, x_s) 
    - (1-g(x_c, u)) \log (1-\hat{f}(x_c, x_s))\right]
\\[6pt]
&= \mathbb{E}_{\mathcal{D}^{(\hat{f}, \delta)}}\Big[
    - g(x_c, u)\log \hat{f}(x_c, x_s)  - (1-g(x_c,u))\log(1-\hat{f}(x_c, x_s))
   \\ &  \ \ \ \  + (1-1) * \Big(g(x_c, u)\log g(x_c, u)  + (1-g(x_c,u))\log(1-g(x_c,u))\Big)
  \Big]
\\[6pt]
&= \mathbb{E}_{\mathcal{D}^{(\hat{f}, \delta)}} \left[
     g(x_c, u) \log\frac{g(x_c, u)}{\hat{f}(x_c, x_s)}
    + (1-g(x_c, u)) \log\frac{1-g(x_c, u)}{1-\hat{f}(x_c, x_s)}
   \right]
  \\ &  \ \ \ \ - 
  \mathbb{E}_{\mathcal{D}^{(\hat{f}, \delta)}} \left[
     g(x_c, u) \log g(x_c, u)
    + (1-g(x_c, u)) \log(1-g(x_c, u))
   \right]
\\[6pt]
&= \underbrace{\mathbb{E}_{\mathcal{D}^{(\hat{f}, \delta)}} \left[\mathrm{KL} \left(g(x_c, u) \| \hat{f}(x_c, x_s) \right) \right]}_{\text{(KL divergence)}} 
   + 
\underbrace{\mathbb{E}_{\mathcal{D}^{(\hat{f}, \delta)}} \left[H \left(g(x_c, u) \right) \right]}_{\text{(entropy)}}.
\\[6pt]
&= \mathbb{E}_{\mathcal{D}^{(\hat{f}, \delta)}} \left[
     g(x_c, u) \log\frac{g(x_c, u)}{\hat{f}(x_c, x_s)}
    + (1-g(x_c, u)) \log\frac{1-g(x_c, u)}{1-\hat{f}(x_c, x_s)}
   \right]
   \\[3pt] &  \ \ \ \ - 
  \mathbb{E}_{\mathcal{D}^{(\hat{f}, \delta)}} \left[
     g(x_c, u) \log g(x_c, u)
    + (1-g(x_c, u)) \log(1-g(x_c, u))
   \right]
   \\[3pt] &  \ \ \ \ + \mathbb{E}_{\mathcal{D}^{(\hat{f}, \delta)}} \left[
     (1 - 1) \cdot g(x_c, u) \log \hat{f}^*_{(\hat{f}, \delta)}(x_c, x_s) 
    + (1 - 1) \cdot (1-g(x_c, u)) \log(1-\hat{f}^*_\delta(x_c, x_s))
   \right]
\\[6pt]
&= \mathbb{E}_{\mathcal{D}^{(\hat{f}, \delta)}} \left[
     g(x_c, u) \log\frac{g(x_c, u)}{\hat{f}^*_\delta(x_c, x_s)}
    + (1-g(x_c, u)) \log\frac{1-g(x_c, u)}{1-\hat{f}^*_\delta(x_c, x_s)}
   \right]
   \\[3pt] &  \ \ \ \ +
   \mathbb{E}_{\mathcal{D}^{(\hat{f}, \delta)}} \left[
     g(x_c, u) \log\frac{\hat{f}^*_\delta(x_c, x_s)}{\hat{f}(x_c, x_s)}
    + (1-g(x_c, u)) \log\frac{1-\hat{f}^*_\delta(x_c, x_s)}{1-\hat{f}(x_c, x_s)}
   \right]
   \\[3pt] &  \ \ \ \ - 
  \mathbb{E}_{\mathcal{D}^{(\hat{f}, \delta)}} \left[
     g(x_c, u) \log g(x_c, u)
    + (1-g(x_c, u)) \log(1-g(x_c, u))
   \right]
\\[6pt]
&= \mathbb{E}_{\mathcal{D}^{(\hat{f}, \delta)}} \left[\mathrm{KL} \left(g(x_c, u) \| \hat{f}^*_\delta(x_c, x_s) \right) \right]
   +
   \mathbb{E}_{\mathcal{D}^{(\hat{f}, \delta)}} \left[
     g(x_c, u) \log\frac{\hat{f}^*_\delta(x_c, x_s)}{\hat{f}(x_c, x_s)}
    + (1-g(x_c, u)) \log\frac{1-\hat{f}^*_\delta(x_c, x_s)}{1-\hat{f}(x_c, x_s)}
   \right]
   \\[3pt] &  \ \ \ \ + 
  \mathbb{E}_{\mathcal{D}^{(\hat{f}, \delta)}} \left[H \left(g(x_c, u) \right) \right]
\end{align*}

If we define 
\begin{align*}
    \hat{f} &= \underset{f \in \mathcal{F}}{\arg\min}  \ \mathbb{E}_{\mathcal{D}} \left[\mathrm{KL} \left(g(x_c, u) \| f(x_c, x_s) \right) \right] \\
    \hat{f}^*_\delta &= \underset{f \in \mathcal{F}}{\arg\min}  \ \mathbb{E}_{\mathcal{D}^{(\hat{f}, \delta)}} \left[\mathrm{KL} \left(g(x_c, u) \| f(x_c, x_s) \right) \right]
\end{align*}

we can interpret the the decomposition as follows:

\begin{align*}
\mathcal{L}_{\mathrm{CE}}(\hat{f}, \delta) &= \underbrace{\mathbb{E}_{\mathcal{D}^{(\hat{f}, \delta)}} \left[\mathrm{KL} \left(g(x_c, u) \| \hat{f}^*_\delta(x_c, x_s) \right) \right]}_{\text{(incomplete information error)}}
   \\[3pt] &  \ \ \ \ +
   \underbrace{\mathbb{E}_{\mathcal{D}^{(\hat{f}, \delta)}} \left[
     g(x_c, u) \log\frac{\hat{f}^*_\delta(x_c, x_s)}{\hat{f}(x_c, x_s)}
    + (1-g(x_c, u)) \log\frac{1-\hat{f}^*_\delta(x_c, x_s)}{1-\hat{f}(x_c, x_s)}
   \right]}_{\text{(transfer error)}}
   \\[3pt] &  \ \ \ \ + 
  \underbrace{\mathbb{E}_{\mathcal{D}^{(\hat{f}, \delta)}} \left[H \left(g(x_c, u) \right) \right]}_{\text{(entropy)}}    
\end{align*}

\textbf{Note:} all terms are nonnegative.

\begin{itemize}
    \item \textbf{incomplete information error:} KL Divergence is always non-negative. 
    \item \textbf{transfer error:} since we define $\hat{f}^*_\delta = \underset{f \in \mathcal{F}}{\arg\min}  \ \mathbb{E}_{\mathcal{D}^{(\hat{f}, \delta)}} \left[\mathrm{KL} \left(g(x_c, u) \| f(x_c, x_s) \right) \right]$, we have the fact that transfer error is nonnegative (i.e. $\geq 0$). Since  $\mathbb{E}_{\mathcal{D}^{(\hat{f}, \delta)}} \left[\mathrm{KL} \left(g(x_c, u) \| {f}^*_\delta(x_c, x_s) \right) \right] \leq \mathbb{E}_{\mathcal{D}^{(\hat{f}, \delta)}} \left[\mathrm{KL} \left(g(x_c, u) \| \hat{f}(x_c, x_s) \right) \right]$ for all $\hat{f}(x_c, x_s) \in \mathcal{F}$, we have:

\begin{align*}  
\mathcal{L}_{\mathrm{CE}}(\hat{f}, \delta) &= 
    \mathbb{E}_{\mathcal{D}^{(\hat{f}, \delta)}} \left[\mathrm{KL} \left(g(x_c, u) \| \hat{f}(x_c, x_s) \right) \right] 
    + 
    \mathbb{E}_{\mathcal{D}^{(\hat{f}, \delta)}} \left[H \left(g(x_c, u) \right) \right] \\
    & \geq \mathbb{E}_{\mathcal{D}^{(\hat{f}, \delta)}} \left[\mathrm{KL} \left(g(x_c, u) \| \hat{f}^*_\delta(x_c, x_s) \right) \right] 
    + 
    \mathbb{E}_{\mathcal{D}^{(\hat{f}, \delta)}} \left[H \left(g(x_c, u) \right) \right]
    \\[6pt]
    \text{but from the derivation above,}&\text{  we have:}
     \\[6pt]
    \mathcal{L}_{\mathrm{CE}}(\hat{f}, \delta)& = \mathbb{E}_{\mathcal{D}^{(\hat{f}, \delta)}} \left[\mathrm{KL} \left(g(x_c, u) \| \hat{f}^*_\delta(x_c, x_s) \right) \right]
   + 
  \mathbb{E}_{\mathcal{D}^{(\hat{f}, \delta)}} \left[H \left(g(x_c, u) \right) \right] 
  \\ & \hspace{3mm}+
   \mathbb{E}_{\mathcal{D}^{(\hat{f}, \delta)}} \left[
     g(x_c, u) \log\frac{\hat{f}^*_\delta(x_c, x_s)}{\hat{f}(x_c, x_s)}
    + (1-g(x_c, u)) \log\frac{1-\hat{f}^*_\delta(x_c, x_s)}{1-\hat{f}(x_c, x_s)}
   \right]
\end{align*}

implying that the last term (which is transfer error) is nonnegative: $$\mathbb{E}_{ \mathcal{D}^{(\hat{f}, \delta)}} \left[
     g(x_c, u) \log\frac{\hat{f}^*_\delta(x_c, x_s)}{\hat{f}(x_c, x_s)}
    + (1-g(x_c, u)) \log\frac{1-\hat{f}^*_\delta(x_c, x_s)}{1-\hat{f}(x_c, x_s)}
   \right] \geq 0$$

    \item \textbf{entropy:} The entropy of a binary variable $Y$ with success probability $p = g(x_c, u)$ is nonnegative almost surely, since for all $p \in [0, 1]$, entropy $-p\log p - (1-p)\log(1-p)$ is nonnegative and thus the expectation must be nonnegative with probability $1$. 
\end{itemize}

\subsection{Extending to training data with adaptive shifts}
\label{app:ce_decomp_extend}

We can further extend this decomposition by considering that a classifier has access to some strategic behavior when training the initial classifier $\hat{f}$. This means we define the classifier $\hat{f}$ as the optimal classifier under $\delta'$ to some classifier $f'$ (which could be $\hat{f}$ itself), formally: 

\begin{align*}
    \hat{f} &= \underset{f \in \mathcal{F}}{\arg\min}  \ \mathbb{E}_{\mathcal{D}^{(f', \delta')}} \left[\mathrm{KL} \left(g(x_c, u) \| f(x_c, x_s) \right) \right] \\
    \hat{f}^*_\delta &= \underset{f \in \mathcal{F}}{\arg\min}  \ \mathbb{E}_{\mathcal{D}^{(\hat{f}, \delta)}} \left[\mathrm{KL} \left(g(x_c, u) \| f(x_c, x_s) \right) \right]
\end{align*}

The decomposition still holds with nonnegative terms. Incomplete information and entropy are trivially nonnegative from the same reasoning as before. Transfer error also remains nonnegative since we are still selecting $\hat{f}^*_\delta$ to be the optimal classifier on the shifted data, so we still have for all $\hat{f}(x_c, x_s) \in \mathcal{F}$:

\[\mathbb{E}_{\mathcal{D}^{(\hat{f}, \delta)}} \left[\mathrm{KL} \left(g(x_c, u) \| \hat{f}^*_\delta(x_c, x_s) \right) \right] \leq \mathbb{E}_{\mathcal{D}^{(\hat{f}, \delta)}} \left[\mathrm{KL} \left(g(x_c, u) \| \hat{f}(x_c, x_s) \right) \right]\]

This allows us to conclude that even if the learned classifier is able to anticipate some strategic shift or only has access to training data with some shift already present, causal classifiers still provide robustness while spurious classifiers can have arbitrarily large error due to the transfer term.

\subsection{CE Error of Causal Classifier Family}
\label{app:ce_error_analysis_causal}

\subsubsection{Incomplete Information Error}
\label{app:ii_error_causal}

First, we consider families of causal classifiers only, $\mathcal{F}_{causal}$, i.e. classifiers that do not use spurious features for prediction. We defined $\hat{f}^*_\delta$ such that $\hat{f}^*_\delta = \underset{f \in \mathcal{F}_{causal}}{\arg\min}  \ \mathbb{E}_{\mathcal{D}^{(\hat{f}, \delta)}} \left[\mathrm{KL} \left(g(x_c, u) \| f(x_c) \right) \right]$, so the incomplete information error is the minimum value of this objective, and when trying to bound this term, we actually want to bound the minimum. 

If we assume that $\mathcal{F}_{causal}$ includes the minimum of this objective, which is the function $\mathbb{E}_{\mathcal{D}^{(\hat{f}, \delta)}}[Y | X_c]$ (for example a family of classifiers that includes logistic functions must include any linear combination of logistic functions), then: 

\begin{align*}
    &\underset{f \in \mathcal{F}_{causal}}{\min}  \ \mathbb{E}_{x_c, u \sim \mathcal{D}^{(\hat{f}, \delta)}} \left[\mathrm{KL} \left(g(x_c, u) \ \parallel \hat{f}^*(x_c) \right) \right] \\ &= \mathbb{E}_{x_c, u \sim \mathcal{D}^{(\hat{f}, \delta)}} \left[\mathrm{KL} \left(\mathbb{E}[Y | X_c, U] \parallel \mathbb{E}[Y | X_c] \right) \right] 
    \\ &= \mathbb{E}_{x_c,,u \sim \mathcal{D}^{(\hat{f}, \delta)} }\left[ \mathbb{E}[Y | X_c, U] \log\frac{\mathbb{E}[Y | X_c, U]}{\mathbb{E}[Y | X_c]} + (1-\mathbb{E}[Y | X_c, U]) \log\frac{1-\mathbb{E}[Y | X_c, U]}{1-\mathbb{E}[Y | X_c])}
   \right]
   \\ &= \mathbb{E}_{x_c,,u \sim \mathcal{D}^{(\hat{f}, \delta)} }\left[ \mathbb{P}(Y = 1 | X_c, U) \log\frac{\mathbb{P}(Y = 1 | X_c, U)}{\mathbb{P}(Y = 1 | X_c)} + \mathbb{P}(Y = 0 | X_c, U) \log\frac{\mathbb{P}(Y = 0 | X_c, U)}{\mathbb{P}(Y = 0 | X_c)}
   \right]
    \\ &=  \mathbb{E}_{x_c,u \sim \mathcal{D}^{(\hat{f}, \delta)}} \left[ \mathbb{E}_{y | x_c, u \sim \mathcal{D}^{(\hat{f}, \delta)}} \left[\log\frac{\mathbb{P}(Y | X_c, U)}{\mathbb{P}(Y | X_c)}\right]\right]
    \\ &= \mathbb{E}_{x_c \sim \mathcal{D}^{(\hat{f}, \delta)}} \left[ \sum_{u \in U}  \mathbb{P}(U | X_c) \cdot \ \mathbb{E}_{y | x_c, u \sim \mathcal{D}^{(\hat{f}, \delta)}} \left[\log\frac{\mathbb{P}(Y | X_c, U)}{\mathbb{P}(Y | X_c)}\right]\right]
    \\ &= \mathbb{E}_{x_c \sim \mathcal{D}^{(\hat{f}, \delta)}} \left[ \sum_{u \in U, y \in Y} \mathbb{P}(U) * \mathbb{P}(Y | X_c , U) \left[\log\frac{\mathbb{P}(Y | X_c, U)}{\mathbb{P}(Y | X_c)}\right]\right]
    \\ &= \mathbb{E}_{x_c \sim \mathcal{D}^{(\hat{f}, \delta)}} \left[ \sum_{u \in U, y \in Y} \mathbb{P}(Y, U | X_c) \left[\log\frac{\mathbb{P}(Y, U | X_c)/\mathbb{P}(U | X_c)}{\mathbb{P}(Y | X_c)}\right]\right]
    \\ &= \mathbb{E}_{x_c \sim \mathcal{D}^{(\hat{f}, \delta)} }\left[ \sum_{u \in U, y \in Y} \mathbb{P}(Y, U | X_c) \left[\log\frac{\mathbb{P}(Y, U | X_c)}{\mathbb{P}(Y | X_c) \ \mathbb{P}(U | X_c)}\right]\right]
    \\ &= \mathbb{E}_{\mathcal{D}^{(\hat{f}, \delta)} }\Big[\mathrm{KL} \Big( \mathbb{P}(Y, U | X_c)  \parallel \mathbb{P}(Y | X_c) \mathbb{P}(U | X_c)\Big)\Big]
    \\ &= I(Y ; U | X_c)
    \\ &= H(U | X_c) - H(U | X_c, Y)
    \\ &\leq H(U | X_c) = H(U) 
\end{align*}

Therefore, the transfer bias of a causal classifier is equal to the conditional mutual information between $Y$ and $U$ given $X_c$ and can be upper-bounded by the entropy of $U$.

\subsubsection{Transfer Error}
\label{app:transfer_error_causal}

We are only considering families of causal classifiers. We previously defined:  
\begin{align*}
    \hat{f} &= \underset{\hat{f} \in \mathcal{F}}{\arg\min} \ \mathbb{E}_{\mathcal{D}}\left[\mathrm{KL} \left(g(x_c, u) \| \hat{f}(x_c) \right) \right] \\ \hat{f}^*_\delta &= \underset{\hat{f}^* \in \mathcal{F}}{\arg\min} \ \mathbb{E}_{\mathcal{D}^{(\hat{f}, \delta)}}\left[\mathrm{KL} \left(g(x_c, u) \| \hat{f}^*(x_c) \right) \right]
\end{align*}

which are minimized at $\mathbb{E}_{\mathcal{D}}[Y | X_c]$ and $\mathbb{E}_{\mathcal{D}^{(\hat{f}, \delta)}}[Y | X_c]$ respectively.  $U$ cannot be intervened on and causal mechanisms are invariant, so the only difference is a covariate shift of the distribution, meaning they are both equal to $\mathbb{E}[Y | X_c] =\mathbb{E}\left[\mathbb{E}_U\left[Y | X_c, U\right]\right]$. Since we already assumed this function to be part of the classifier family, $\hat{f} =\hat{f}^*$ and therefore transfer error is $0$ when training with a causal classifier family. 

\subsubsection{Entropy Error}
\label{app:entropy_error_causal}

The outcome $Y$ is a binary variable i.e. $\in [0,1]$ so:
\[
H(Y) = \mathbb{E}[-\log P(Y)] \leq \max_x \left(-\log P(Y = y)\right) \leq \left(-\log 0.5\right) = 1.
\]
Hence, \( H(Y) \leq 1 \), (or entropy of any binary variable) with equality iff \( P(X = 0) = P(X = 1) = 0.5 \).

\subsection{CE Error of Spurious Classifier Family}
\label{app:ce_error_analysis_spurious}

\subsubsection{Incomplete Information Error}
\label{app:ii_error_spurious}

Repeating the same logic as before with a causal classifier family, when considering a family of classifiers $\mathcal{F}$ that now uses all features, including spurious ones, $\hat{f}^*_\delta$ is defined such that $\hat{f}^*_\delta = \underset{f \in \mathcal{F}_{all}}{\arg\min}  \ \mathbb{E}_{\mathcal{D}^{(\hat{f}, \delta)}} \left[\mathrm{KL} \left(g(x_c, u) \| f(x_c, x_s) \right) \right]$, thus the incomplete information error term is again the minimum value of this objective.  

If we similarly assume that $\mathcal{F}$ includes the function $\mathbb{E}_{\mathcal{D}^{(\hat{f}, \delta)}}[Y | X_c, X_s]$ : 

\begin{align*}
    \underset{f \in \mathcal{F}}{\min}  \ \mathbb{E}&_{x_c,x_s, u \sim \mathcal{D}^{(\hat{f}, \delta)}} \left[\mathrm{KL} \left(g(x_c, u) \| \hat{f}^*(x_c, x_s) \right) \right] = \mathbb{E}_{x_c, x_s, u \sim \mathcal{D}^{(\hat{f}, \delta)}} \left[\mathrm{KL} \left(\mathbb{E}[Y | X_c, U] \| \mathbb{E}[Y | X_c, X_s] \right) \right] 
    \\ &= \mathbb{E}_{x_c, x_s, u \sim \mathcal{D}^{(\hat{f}, \delta)}} \left[ \mathbb{E}[Y | X_c, U] \log\frac{\mathbb{E}[Y | X_c, U]}{\mathbb{E}[Y | X_c, X_s]}
    + (1-\mathbb{E}[Y | X_c, U]) \log\frac{1-\mathbb{E}[Y | X_c, U]}{1-\mathbb{E}[Y | X_c, X_s])}
   \right]
   \\ &= \mathbb{E}_{x_c, x_s, u \sim \mathcal{D}^{(\hat{f}, \delta)}} \left[ \mathbb{P}(Y = 1 | X_c, U) \log\frac{\mathbb{P}(Y = 1 | X_c, U)}{\mathbb{P}(Y = 1 | X_c, X_s)}
    + \mathbb{P}(Y = 0 | X_c, U) \log\frac{\mathbb{P}(Y = 0 | X_c, U)}{\mathbb{P}(Y = 0 | X_c, X_s))}
   \right]
    \\ &=  \mathbb{E}_{x_c, x_s, u \sim \mathcal{D}^{(\hat{f}, \delta)}} \left[ \mathbb{E}_{y | x_c, u \sim \mathcal{D}^{(\hat{f}, \delta)}} \left[\log\frac{\mathbb{P}(Y | X_c, U)}{\mathbb{P}(Y | X_c, X_s)}\right]\right]
    \\ &=  \mathbb{E}_{x_c, x_s, u \sim \mathcal{D}^{(\hat{f}, \delta)}} \left[ \mathbb{E}_{y | x_c, u \sim \mathcal{D}^{(\hat{f}, \delta)}} \left[\log\frac{\mathbb{P}(Y | X_c, U, X_s)}{\mathbb{P}(Y | X_c, X_s)}\right]\right]
    \\ &= \mathbb{E}_{x_c, x_s \sim \mathcal{D}^{(\hat{f}, \delta)} }\left[ \sum_{u \in U}  \mathbb{P}(U | X_s) \cdot \ \mathbb{E}_{y | x_c, u \sim \mathcal{D}^{(\hat{f}, \delta)}} \left[\log\frac{\mathbb{P}(Y | X_c, U, X_s)}{\mathbb{P}(Y | X_c, X_s)}\right]\right]
    \\ &= \mathbb{E}_{x_c, x_s \sim \mathcal{D}^{(\hat{f}, \delta)} }\left[ \sum_{u \in U, y \in Y} \mathbb{P}(U | X_s) * \mathbb{P}(Y | X_c, U, X_s) \left[\log\frac{\mathbb{P}(Y | X_c, U, X_s)}{\mathbb{P}(Y | X_c, X_s)}\right]\right]
    \\ &= \mathbb{E}_{x_c, x_s \sim \mathcal{D}^{(\hat{f}, \delta)} }\left[ \sum_{u \in U, y \in Y} \mathbb{P}(Y, U | X_c, X_s) \left[\log\frac{\mathbb{P}(Y, U | X_c, X_s)/\mathbb{P}(U | X_c, X_s)}{\mathbb{P}(Y | X_c, X_s)}\right]\right]
    \\ &= \mathbb{E}_{x_c, x_s \sim \mathcal{D}^{(\hat{f}, \delta)} }\left[ \sum_{u \in U, y \in Y} \mathbb{P}(Y, U | X_c, X_s) \left[\log\frac{\mathbb{P}(Y, U | X_c, X_s)}{\mathbb{P}(Y | X_c, X_s) \ \mathbb{P}(U | X_c, X_s)}\right]\right]
    \\ &= \mathbb{E}_{\mathcal{D}^{(\hat{f}, \delta)} }\Big[\mathrm{KL} \Big( \mathbb{P}(Y, U | X_c, X_S) \parallel \mathbb{P}(Y | X_c, X_s) \mathbb{P}(U | X_c, X_s)\Big)\Big]
    \\ &= I(Y ; U | X_c, X_s)
    \\ &= H(U | X_c, X_s) - H(U | X_c, X_s, Y)
    \\ &\leq H(U | X_c, X_s) = H(U | X_s) \leq H(U) 
\end{align*}

Therefore, the transfer bias of a spurious classifier is equal to the conditional mutual information of $Y$ and $U$ given $X_c$ and $X_s$ i.e. all features. It can also be upper-bounded by the conditional entropy of $U$ given $X_s$, or more loosely bounded by entropy of $U$.

\subsubsection{Transfer Error}
\label{app:transfer_error_spurious}

Now considering families of classifiers $\mathcal{F}$ that use all features, we have: 
\begin{align*}
\hat{f} &= \underset{f \in \mathcal{F}}{\arg\min} \ \mathbb{E}_{\mathcal{D}}\left[\mathrm{KL} \left(g(x_c, u) \| f(x_c, x_s) \right) \right] \\ \hat{f}^*_\delta &= \underset{f \in \mathcal{F}}{\arg\min} \ \mathbb{E}_{\mathcal{D}^{(\hat{f}, \delta)}}\left[\mathrm{KL} \left(g(x_c, u) \| f(x_c, x_s) \right) \right]    
\end{align*} 
which are minimized at $\mathbb{E}_{D}[Y | X_s, X_c]$ and $\mathbb{E}_{\mathcal{D}^{(\hat{f}, \delta)}}[Y | X_s, X_c]$ respectively. Only causal mechanisms are invariant, so these conditional expectations can be arbitrarily different based on the distribution shift. Thus, transfer error can be arbitrarily large, since KL divergence can range from $0$ to $\infty$ (for example if there are $x \in \mathcal{X}$ where $g(x_c, u)$ is very high and $\hat{f}(x_c, x_s)$ is very small, this will have a large KL value). 

\subsubsection{Analysis for unbounded transfer error with thresholded classifier}
\label{app:transfer_error_spurious_analysis}

To make this more concrete, consider the following setting: given that minimizing cross-entropy loss learns a probability estimator $\hat{f}(x): \mathcal{X} \rightarrow [0, 1]$,  institutions may require candidates to exceed a certain probability threshold for a positive prediction (which we previously assumed was $0.5$ or $50\%$); for instance, accepting a loan application only when the estimated repayment probability exceeds $\tau \in [0,1]$:
$$\hat y = h(x) = \mathbf{1}_{\{\hat{f}(x) \geq \tau\}}$$
The transfer error is then affected by three key factors: the cost function $c(x, x')$, which determines which features agents adapt and their relative cost; the budget $\delta$, which bounds how much effort agents can spend; and the threshold $\tau$, which determines the decision boundary agents adapt towards.

We can further analyze the transfer error by partitioning the input space as $\mathcal{X} = \mathcal{X}^{\mathrm{adapt}} \cup \mathcal{X}^{\mathrm{stay}}$, where $\mathcal{X}^{\mathrm{adapt}} = \{x : P_D(x) \neq P_{D^{(\hat{f},\delta)}}(x)\}$ contains points whose distribution changed under adaptation, and $\mathcal{X}^{\mathrm{stay}}$ its complement. Let $p_a = P(x \in \mathcal{X}^{\mathrm{adapt}})$ and $p_s = 1 - p_a$. The transfer error splits as:
\begin{align}
    \text{transfer error} \;
    &= \; p_a \, \mathbb{E}_{D^{(\hat{f},\delta)}}\!\left[\left. g \log \frac{\hat{f}^*_\delta}{\hat{f}} + (1-g) \log \frac{1-\hat{f}^*_\delta}{1-\hat{f}} \;\right|\; x \in \mathcal{X}^{\mathrm{adapt}} \right] \notag \\
    &\quad + \; p_s \, \mathbb{E}_{D}\!\left[\left. g \log \frac{\hat{f}^*_\delta}{\hat{f}} + (1-g) \log \frac{1-\hat{f}^*_\delta}{1-\hat{f}} \;\right|\; x \in \mathcal{X}^{\mathrm{stay}} \right],
\end{align}
where the stay term uses $D$ since $D$ and $D^{(\hat{f},\delta)}$ agree on $\mathcal{X}^{\mathrm{stay}}$ by construction. On $\mathcal{X}^{\mathrm{adapt}}$, all points have been moved to the decision boundary $\partial_h$, so $\hat{f}(x_c, x_s) = \tau$. Since $\tau$ is constant on this region, the adapt term separates as:
\begin{equation}
    \mathbb{E}_{D^{(\hat{f},\delta)}}\!\left[\left. g \log \hat{f}^*_\delta + (1-g) \log(1-\hat{f}^*_\delta) \;\right|\; x \in \mathcal{X}^{\mathrm{adapt}}\right] - \bar{g}\log\tau - (1-\bar{g})\log(1-\tau),
    \label{eq:transfer-adapt-tau}
\end{equation}
where $\bar{g} = \mathbb{E}_{D^{(\hat{f},\delta)}}[g(x_c, u) \mid x \in \mathcal{X}^{\mathrm{adapt}}]$. 

The transfer error on adapted points is amplified under two conditions that expand $\mathcal{X}^{\mathrm{adapt}}$: 
\begin{itemize}
    \item $\boldsymbol{\delta}$: increasing the adaptation budget $\delta$ allows more points to reach the decision boundary, increasing $p_a$ and giving this term more weight.
    \item $\mathbf{c(x, x')}$: since the cost function is a weighted $\ell_p$-norm, reducing the cost of perturbing spurious features $\mu_s$ has a compounding effect: for a fixed $\delta$, more agents can cross the boundary by modifying spurious features, increasing $p_a$; simultaneously these agents allocate proportionally more of their budget to changing $X_s$ rather than $X_c$, so the causal features of adapted points are perturbed less, keeping $\bar{g}$ bounded away from $1$.
\end{itemize} 

Furthermore, as $\tau \to 1^-$, the transfer error on adapted points diverges entirely, driven by the $-(1-\bar{g})\log(1-\tau)$ component, whenever $\bar{g} < 1$; that is, whenever any adapted point has nonzero negative-class probability. Since $g(x_c, u)$ depends only on causal features and the unobserved variable, and if agents entering $\mathcal{X}^{\mathrm{adapt}}$ do so by significantly changing spurious features, the expanding region includes points with diverse values of $g$, ensuring $\bar{g} < 1$. This illustrates why classifiers in $\mathcal{F}_{\mathrm{all}}$ that rely on spurious features risk unbounded transfer error when strategic behavior is unknown: $\delta$ and $\mu_s$ control how much agents adapt and how much of that adaptation targets spurious features, which leads to gaming, while more demanding $\tau$ makes the consequences of this gaming increasingly severe for the spurious classifier.

\subsubsection{Entropy Error}
\label{app:entropy_error_spurious}

Same as with the causal classifier case, the outcome $Y$ is a binary variable i.e. $\in [0,1]$ so \( H(Y) \leq 1 \), (or entropy of any binary variable).

\section{Additional Experimental Results}
\label{app:additional_experiments}

\subsection{Robustness under varying ambiguity.}

\label{app:robustness_add_exp}

We highlight a trade-off in post-adaptation cross-entropy loss in \Cref{the:robustness}: causal classifiers incur incomplete information loss by ignoring the informative spurious features while spurious classifiers incur transfer loss due to changing optimal predictive distributions. 
The theory allows us to form another prediction, related to our findings in \Cref{section_01_opt}: if we further bound how influential the latent $U$ is on the value of the outcome $Y$ (limiting the incomplete information loss further), a causal classifier trained on pre-adaptation data should achieve lower post-adaptation CE loss compared to their spurious counterparts. 
To implement this idea, we minimally modify the outcome model to be:
\begin{align}
    y_{\mathrm{sco}}  \triangleq  w_c X_c + w_u U\cdot\mathbf{1_{\{X_c \in (-\mathrm{max}\text{-}\mathrm{gap}, 0)\}} } + b
\end{align}
where max-gap ensures that latent $U$ impacts $Y$ in a bounded region.  
\Cref{fig:max_gap_exp} visualizes the post-adaptation CE loss difference between the pre-adaptation optimal causal and spurious classifiers (blue is better i.e. causal has an advantage) as the adaptation budget $\delta$ and max-gap vary.  We simulated the data using the setting described in \Cref{sec:xps} with a modified $y_{\mathrm{sco}}$, and used sklearn's Logistic Regression model to train the optimal classifiers with static data (both pre and post-strategic shift). As expected, for larger values of $\delta$ (inducing a bigger distribution shift) and max-gap, transfer loss of a spurious classifier is worse than information loss from masking spurious features, giving causal classifiers the advantage. 

\begin{figure}[H]
\centering
\includegraphics[width=0.45\linewidth]{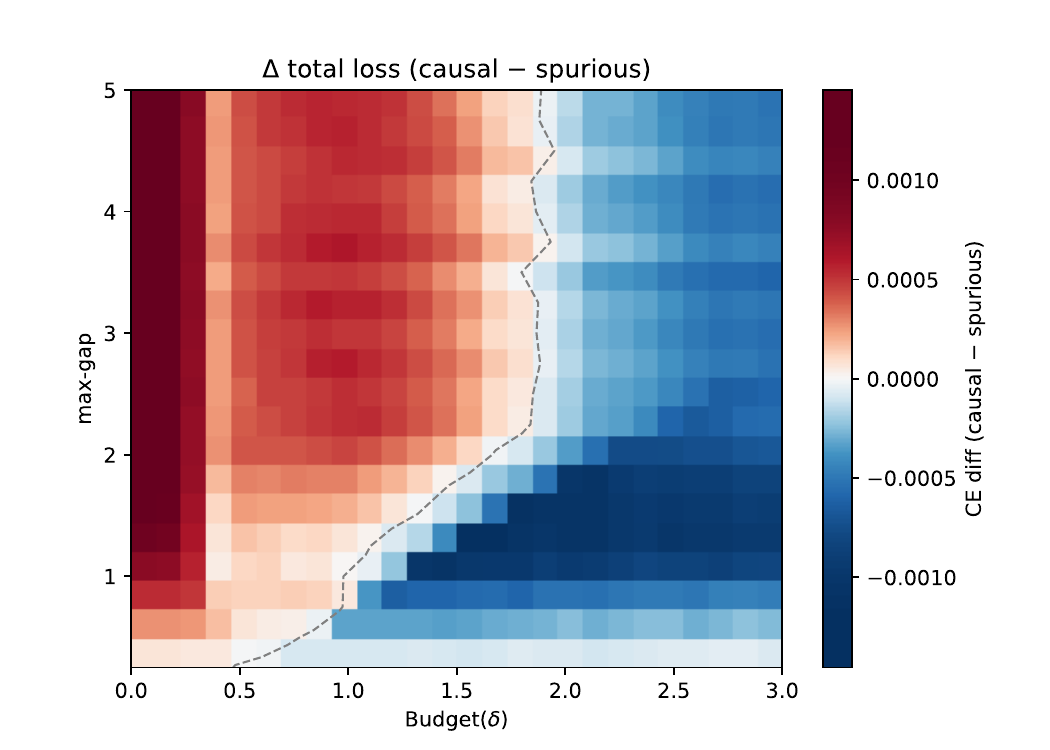}
\caption{Simulations with varying max-gap. When there is a bounded ambiguous region through max-gap in the $y_{\mathrm{sco}}$, a causal classifier trained on pre-adaptation data can have an advantage after sufficiently large $\delta$ (denoted by the blue region, grey dashed line represents regret $= 0$).}
\label{fig:max_gap_exp}
\end{figure}

However, in order to see the effect of incurring a significant amount of \emph{transfer error} when exploiting spurious features, we need to alter the adaptation incentives, specifically such that agents' are encouraged to adapt their spurious features more when a classifier puts weight on it. While large enough $\delta$ allows for agents to make larger strategic shifts, this may not be enough if they do not shift adversarially. 
We therefore explicitly model the agents' cost function of adapting features. We previously assumed the cost of adapting features is a standard $L_p$-norm, but we can instead consider \emph{weighted} $L_p$-norm in order to control how much agents' adapt their spurious features. By lowering cost of changing $x_s$, agents will be incentivized more drastic shifts along $x_s$, which can be adversarial for even slightly spurious classifiers. Based on \Cref{the:robustness}, this leads to \emph{significant} transfer error for spurious classifiers trained on pre-adaptation data, even with some incomplete information from settings like stochastic $Y$ with an unbounded $max\textit{-}gap$. This is because the agents will be incentivized to game and adapt to regions of $\mathcal{X}_s$ where the pre-adaptation classifier $\hat{f}$ very poorly estimates the post-adaptation $\hat{f} = \mathbb{E}_{\mathcal{D}^{(\hat{f}, \delta)}}[Y | X]$.
This leads the post-adaptation transfer error term to dominate the causal incomplete information term when there is an incentive to significantly game. 

We empirically show that the pre-adaptation causal classifier leads to robustness in these settings by reducing the cost of $x_s$. We simulate data using the described setting in \Cref{sec:xps}, and use sklearn's Logistic Regression model to train the classifiers on static data (both pre and post-strategic shift). We consider the cost function to be weighted $L_2$-norm of the two features in our experiments: \begin{align}c(x, x') = \sqrt{\sum_j \mu_j\, (x'_j - x_j)^2} = \sqrt{\mu_s\, (x'_s - x_s)^2 + \mu_c\, (x'_c - x_c)^2}.\end{align} This allows us to confirm our hypothesis that as the cost of changing the spurious feature, $\mu_s$, decreases, then the spurious transfer error can increases arbitrarily, which can lead spurious classifiers to have significantly more error in the worst case. On the other hand, causal classifiers will have relatively stable error, even when the cost of changing the spurious feature is larger. 
To ensure causal classifier actually provides robustness, we also look at the performance of the classifiers when the costs are switched, meaning the cost of changing $x_c$ is much less than $x_s$.

\begin{figure}[H]
\centering
\begin{subfigure}[!ht]{0.85\linewidth}
\centering
\includegraphics[width=\linewidth]{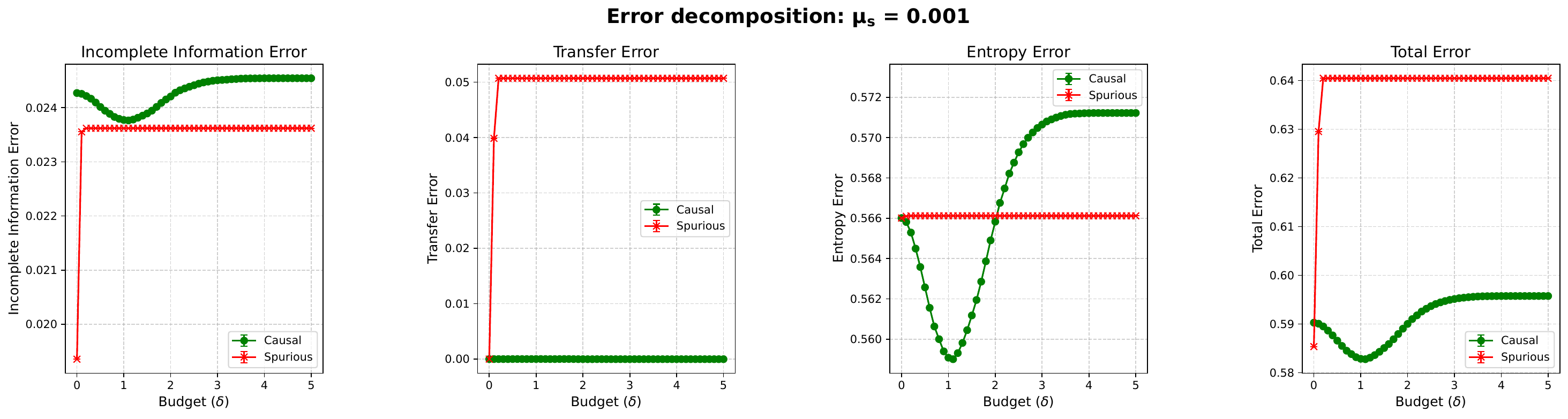}
\caption{Cost function weights: $(\mu_s = 0.001, \mu_c = 1)$}
\label{fig:CE_robust_mu_s}
\end{subfigure}

\vspace{1em}

\begin{subfigure}[!ht]{0.85\linewidth}
\centering
\includegraphics[width=\linewidth]{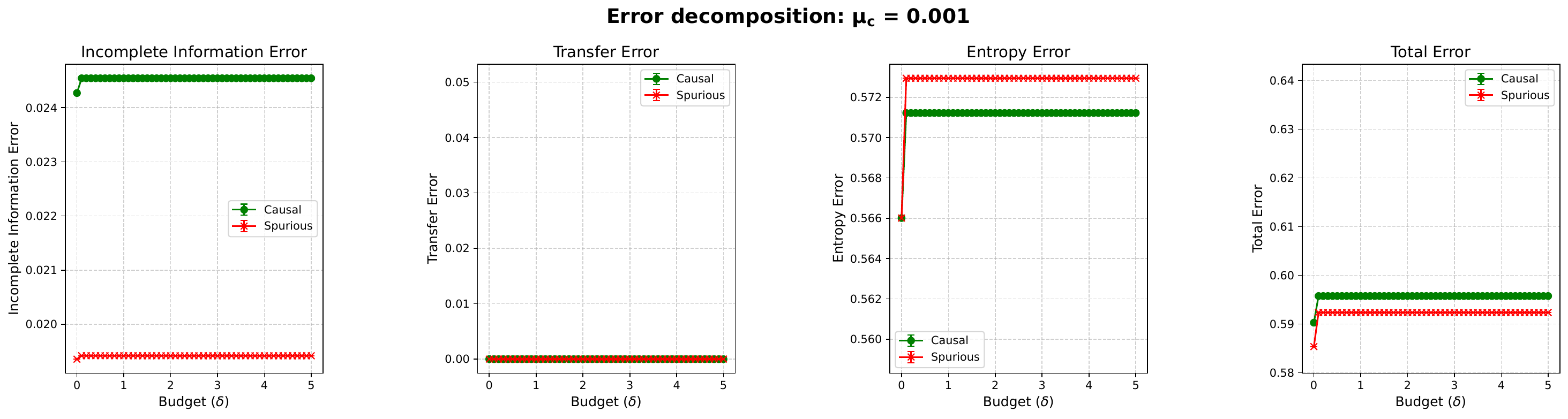}
\caption{Cost function weights: $(\mu_s = 1, \mu_c = 0.001)$}
\label{fig:CE_robust_mu_c}
\end{subfigure}

\caption{Simulations with varying cost. Top: When the cost of changing spurious feature is significantly less that the causal feature (i.e. $\mu_s \ll \mu_c$), transfer error leads the spurious classifier to have a \emph{significant} disadvantage after strategic shifts. Bottom: On the other hand when $\mu_c \ll \mu_s$, transfer error leads the causal classifier to have a \emph{slight} disadvantage after strategic shifts due to the incomplete information error difference. }
\label{fig:spur_cost}
\end{figure}

From \Cref{fig:spur_cost}, we observe that when $\mu_c \ll \mu_s$, the spurious advantage is relatively minor compared to the causal advantage  when the cost is reversed, due to the transfer error incurred by the spurious classifer. More specifically, when $\mu_c \ll \mu_s$, both classifier achieve $0$ transfer error because agents are not incentivized to game significantly, meaning the spurious feature remains a good proxy of the unobserved causal feature $U$. This leads the same spurious classifier to be optimal even after strategic shifts. However, when $\mu_s \ll \mu_c$, the spurious feature is cheaper and easier to modify, which incentivizes agents to game more and \emph{intervene} on their spurious feature. This results in $X_s$ becoming a bad proxy of $U$ after strategic shifts, and thus the optimal post-adaptation spurious classifier changes. This leads to significant transfer error for the spurious classifier, giving the pre-adaptaion causal classifier an advantage overall. 

The incomplete information also shows the effect of the spurious feature becoming a bad proxy of the unobserved causal feature. In the left-most graph of \Cref{fig:CE_robust_mu_s}, the  incomplete information error of the spurious classifier increases after strategic shifts. This behavior results from $X_s$ becoming a weaker signal for $U$, and thus the optimal spurious classifier post-adaptation must mainly rely on the causal feature, the same as the causal classifier. When $X_s$ remains a good proxy of $U$, which we see in the left graph of \Cref{fig:CE_robust_mu_c}, the spurious classifier has a stable advantage due to the stable incomplete information error gap. 
From the second to last graphs, there are some slight variations in entropy due to the strategic shifts to the causal and spurious classifiers, but neither are large enough ($\approx 10^-3$) to affect which classifier has an advantage; they in fact just make the ``advantage" gap in both cases slightly smaller.  

Overall, this confirms our interpretation of the decomposition in \Cref{sec:ce_robust} and robustness of causal classifiers. The causal advantage in \Cref{fig:CE_robust_mu_s} is due to the spurious classifier's transfer error, even though the spurious incomplete information error simultaneously increases slightly. On the other hand in  \Cref{fig:CE_robust_mu_c}, the slight advantage of the spurious classifier comes from the small advantage of using the spurious feature and resulting difference in incomplete information error. 

\subsection{Semi-synthetic Data}
\label{app:semi_synthetic_add_exp}

Because it is impossible to observe true post-adaptation data after agents strategically modify their features, our main experiments are conducted on fully synthetic data, allowing us to precisely control the data-generating process and ensure that the assumptions required by our theory hold. In particular, our theoretical results depend on knowledge of the causal structure and on the ability to compute counterfactual outcomes under strategic shifts, making experiments on purely real-world data infeasible for our purposes.

To incorporate real-world feature distributions while retaining these guarantees, we additionally conduct semi-synthetic experiments. In this setting, the observed features $X$ are taken from the real world Credit Risk dataset from Kaggle, while the remaining variables (namely $U$ and $Y$) are generated synthetically according to a known causal model \citep{semisynthetic_dataset}. This allows us to evaluate our results under realistic feature distributions without introducing ambiguity about the underlying causal structure or counterfactual outcomes. 

Specifically, 
\begin{itemize}
    \item From the dataset, we define the causal feature $X^C$ to be the applicant’s income and spurious feature $X^S$ to be their age (which should not directly cause whether someone will pay back a loan but is likely to be correlated, for example confounded by whether the applicant has a savings account). 
    \item We generate $U$ as a Bernoulli variable with probability $p = \sigma(X^S)$ so that $U$ and $X^S$ are correlated. 
    \item We generate $Y$ as a function of $X^C$ and $U$ as described in the fully synethetic experiments. 
\end{itemize}

Across all experimental settings, we observe the same qualitative behavior as in the fully synthetic case. In \Cref{fig:01_robust_semi}, the optimal post-adaptation classifier again converges to a causal classifier once the adaptation budget $\delta$ is sufficiently large, while classifiers relying on spurious features degrade as $\delta$ increases when trained on data with limited adaptation. In \Cref{fig:incentives_align_semi}, the incentive alignment experiments similarly show that the incentives of the institution and agents align for sufficiently large $\delta_2$ (the boundary of the region is denoted by the grey dashed line). Finally, the robustness experiments in \Cref{fig:CE_robust_semi} reproduce the same tradeoff between transfer error and incomplete-information error, which we see through both varying max-gap and different cost functions for adaptation. 

\begin{figure}[H]
\begin{minipage}[c]{0.64\linewidth}
        \centering
          \includegraphics[width=\linewidth]{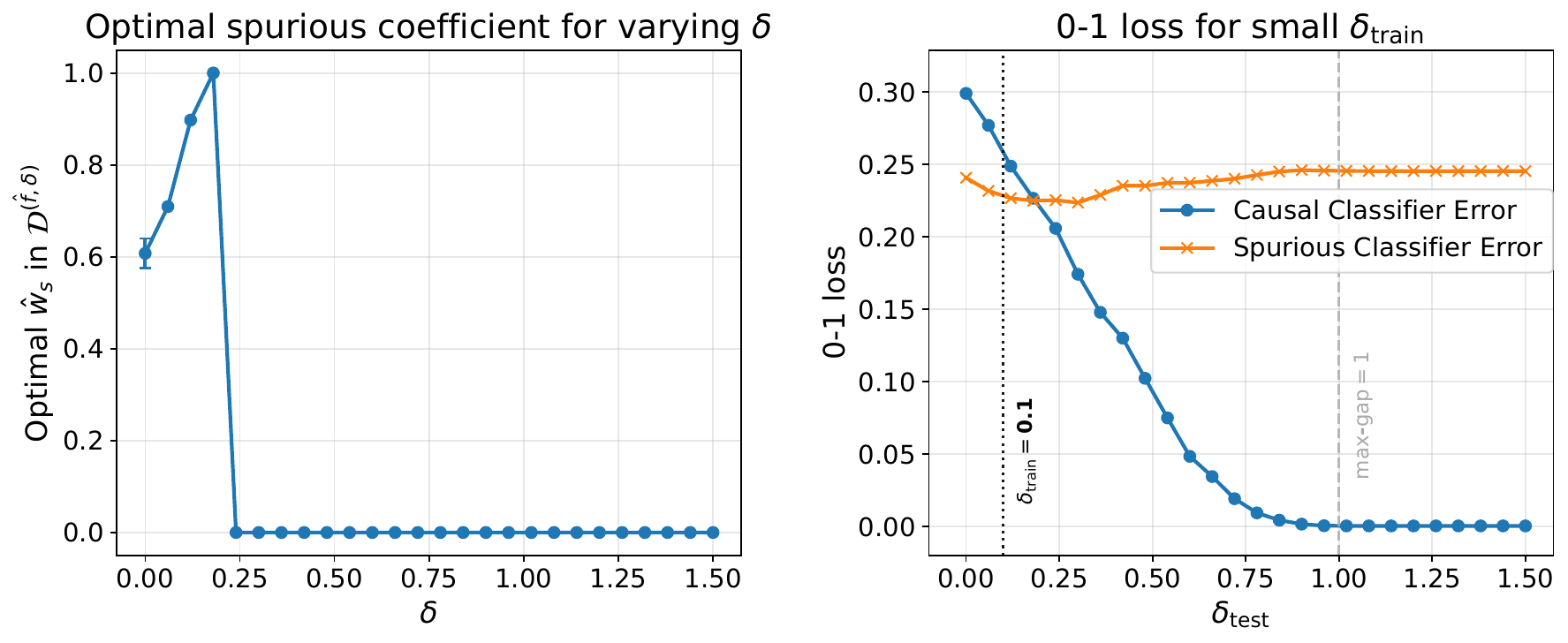}
          \caption{0-1 Loss (semi-synthetic data, cf. \Cref{fig:01_robust}). 
         To obtain error bars we resample $U$ five times, obtaining datasets with varying $Y$ and train-test splits on the entire dataset.  
          }
          \label{fig:01_robust_semi}
\end{minipage}
\hspace{2em}
\begin{minipage}[c]{0.3\linewidth}
    \centering
    \includegraphics[width=\linewidth]{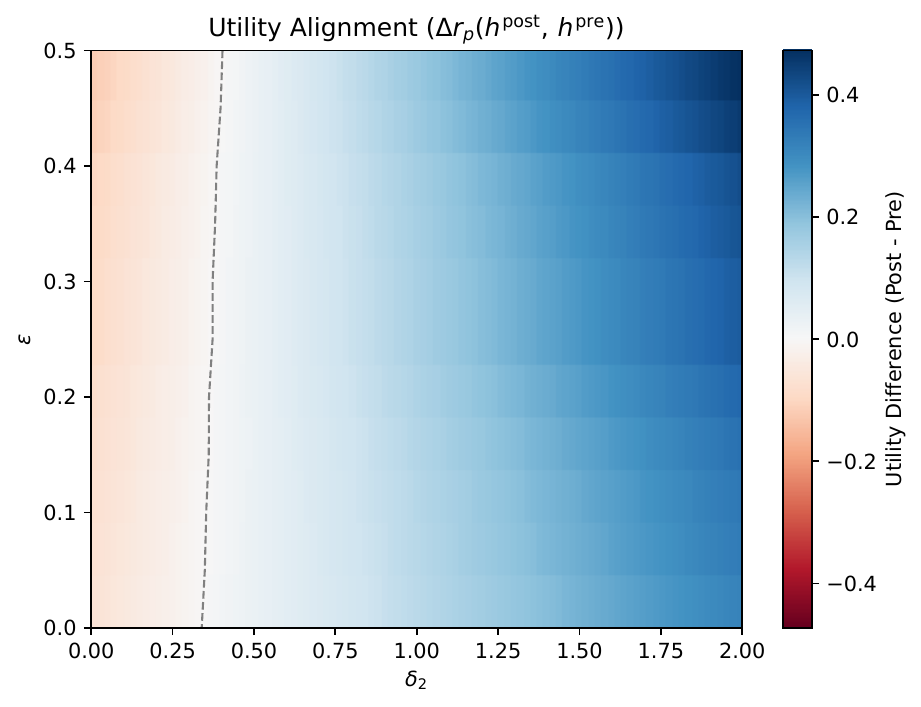}
    \caption{Incentive Alignment (semi-syntheic data, cf. \Cref{fig:incentives_align}).
    }
    \label{fig:incentives_align_semi} 
\end{minipage}
\end{figure}

\begin{figure}[H]
\centering
\begin{minipage}[c]{0.33\linewidth}
    \centering
    \includegraphics[width=\linewidth]{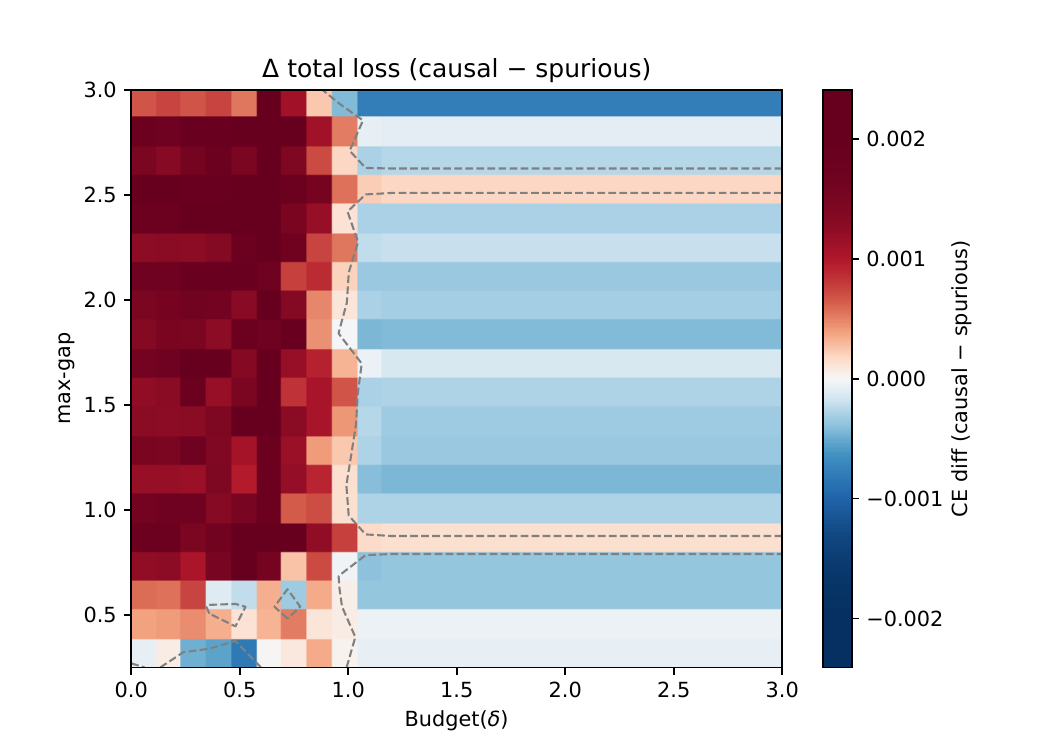}
    \subcaption{Varying max-gap (cf.\ \Cref{fig:max_gap_exp}).}
    \label{fig:max_gap_exp_semi}
\end{minipage}
\hfill
\begin{minipage}[c]{0.64\linewidth}
    \centering
    \includegraphics[width=\linewidth]{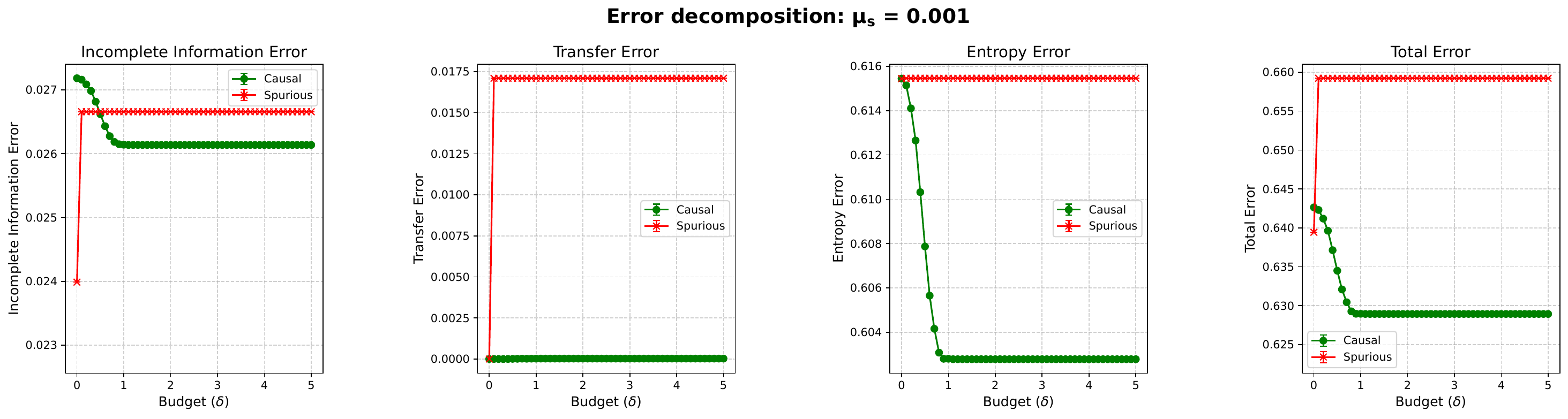}
    \subcaption{Cost weights: $(\mu_s = 0.001, \mu_c = 1)$}
    \label{fig:CE_robust_mu_s_semi}
    \vspace{0.5em}
    \includegraphics[width=\linewidth]{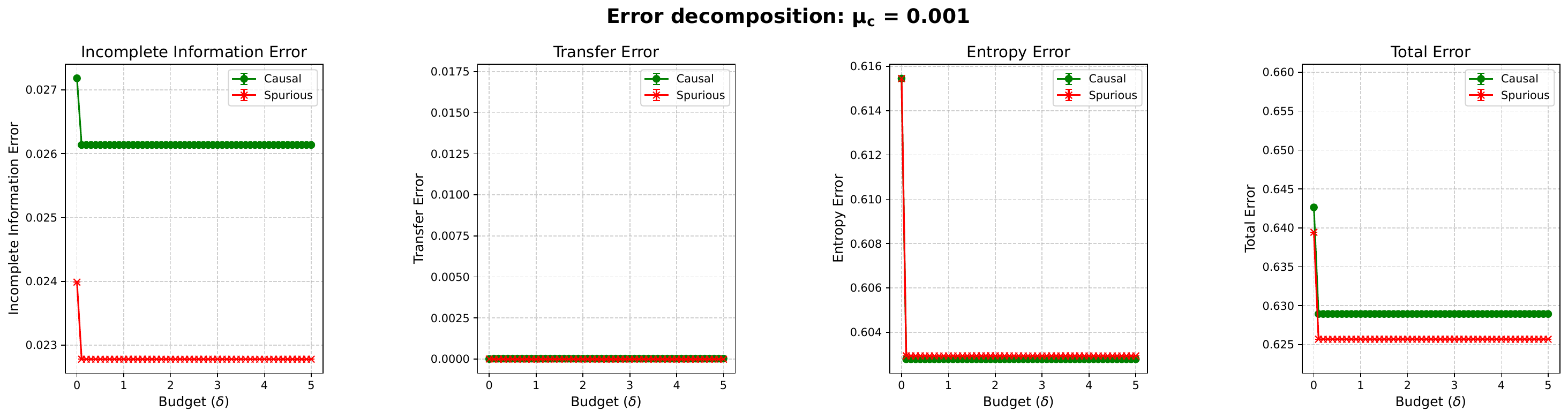}
    \subcaption{Cost weights: $(\mu_s = 1, \mu_c = 0.001)$}
    \label{fig:CE_robust_mu_c_semi}
\end{minipage}
\caption{Robustness of causal classifiers ross-entropy loss (semi-synthetic data). (a) Varying max-gap (cf.\ \Cref{fig:max_gap_exp}). (b)--(c) Varying cost function weights (cf.\ \Cref{fig:spur_cost}).}
\label{fig:CE_robust_semi}
\end{figure}

\section{Example Setting}
\label{app:example_section}

Consider the data generating process below, where $X$ is sampled from a uniform distribution.

\begin{align*}
  U &\sim \operatorname{Bern}(0.5) \\
  X_c &\sim \mathcal{U}(-1,1) \\
  X_s &\sim \mathcal{U}(-1+U,1+U) \\
    y&=\mathds 1\{ X_c + 0.5 U + b\geq0\}
\end{align*}

Under this setting, with knowledge of $U$ it would be possible to obtain zero $\ell_{0-1}$, by the following classifier:

$h(x,u)=
\begin{cases}
    \mathds 1\{x_c\geq0\} & \mathrm{if} \,U=0\\
    \mathds 1\{x_c\geq-0.5\} & \mathrm{if} \,U=1
\end{cases}$

\begin{figure}[!h]
    \centering
    \begin{minipage}[t]{0.48\linewidth}
        \centering
        \includegraphics[height=5cm]{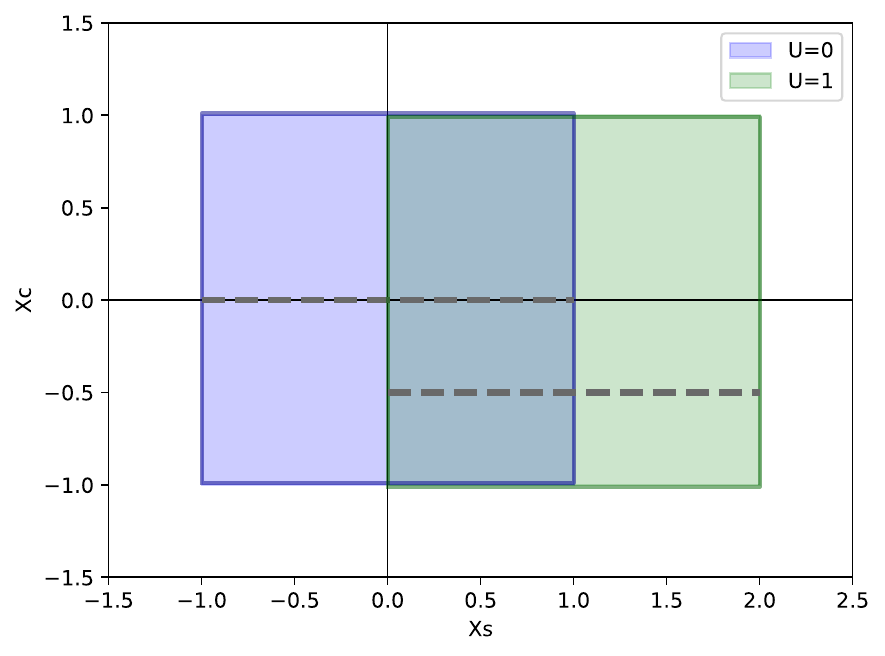}
    \end{minipage}
    \hspace{-0.2\linewidth}
    \begin{minipage}[t]{0.48\linewidth}
        \centering
        \includegraphics[height=5cm, trim={0 -33pt 0 -3pt}, clip]{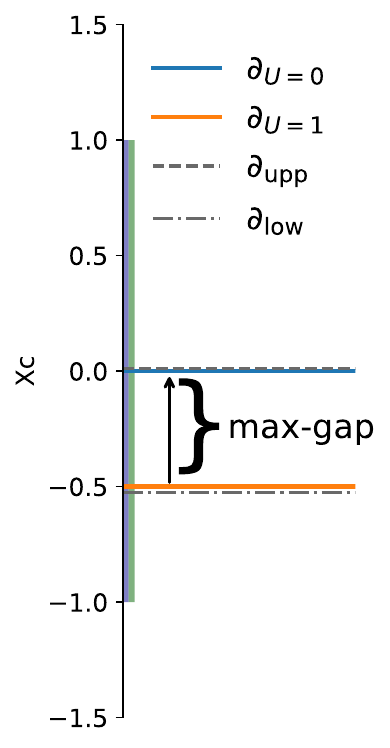}
    \end{minipage}
    \caption{Data generating process where $X$ is sampled from a uniform distribution, and $u\in\{0,1\}$. Dashed grey lines are the correct decision boundaries for $U=0$ and $U=1$. On the right-hand side we have the analogous of Figure~\ref{fig:bounds} for this example, where $\mathcal{X}_c\subset \mathbb R$ has one dimension. As long as $\delta\geq0.5$, where max-gap$=0.5$, our theory predicts the causal classifier $h_c(x_c)\triangleq \mathds 1\{x_c\geq0\}$ obtains zero $\ell_{0-1}$ post-adaptation.}
    \label{fig:dgp_xunif}
\end{figure}

However, without knowledge of $U$, there is an ambiguous region of $X$ pre-adpatation. We can see in Figure~\ref{fig:dgp_xunif} that, for $\delta\geq 0.5$, points can be moved out of this ambiguous region to obtain zero $\ell_{0-1}$ (Theorem~\ref{the:causal_01opt}). For $\delta<$max-gap, optimality of a causal $h_c(x_c)$ is more nuanced, and
we study the optimal classifier for this particular example in the next section.

\subsection{Characterizing post-adaptation \texorpdfstring{$\ell_{0-1}$}{l0-1} minimizers for \texorpdfstring{$\delta<$}{delta<}max-gap}

When $\delta<$ max-gap, existing theory does not predict whether a causal classifier will remain optimal post-adaptation. In this example, when $\delta<0.5$ we lose the guarantee of Theorem~\ref{the:causal_01opt}. However, we show that the same causal classifier remains optimal for any $\delta\geq \frac{1}{3}$, among linear classifiers.

We study the minimizer of $\ell_{0-1}$, among a family of linear classifiers defined as $h_\mathrm{lin}(x_c,x_s)=\mathds 1 \{x_c\geq ax_s+b\}$. We are interested in minimizing post-adaptation loss: $\underset{a,b}{\min} \underset{x,u}{\mathbb E}[\ell_{0-1}(h_\mathrm{lin}(\Delta_{h_\mathrm{lin}}(x;\delta)),y(x,u))]$.

The closed-form expression for $\ell(a,b)$ depends on how the boundary $\partial_{h_\mathrm{lin}}$ intersects the edges of the support over $X$. Here we assume adaptation cost is $L_1$-norm, resulting in the adaptation boundary $x_c = ax_s+b-\delta$, where agents are indifferent between adapting or not. Other $L_p$-norms consist simply of shifting this boundary. We split the loss in four regions: FN for $U=0$, FN for $U=1$, FP for $U=0$ and FP for $U=1$.

For FN, the shape of the error region depends on where the left and right edges of the support $\mathcal X$ for $U=u$ are intersected by the adaptation boundary $x_c = ax_s+b-\delta$. Denote this boundary as $h_{\delta\mathrm{\textit{-}lin}}(x_s) \triangleq ax_s+b-\delta$. For FN in $U=0$, we have 9 different possible shapes for the loss region of given $a,b$ parameters. Which of the 9 geometrical figures we get from a certain $a,b$ depends on whether $h_{\delta\mathrm{\textit{-}lin}}(-1)\leq0$, $h_{\delta\mathrm{\textit{-}lin}}(-1)\in(0,1)$ or $h_{\delta\mathrm{\textit{-}lin}}(-1)\geq1$, and $h_{\delta\mathrm{\textit{-}lin}}(1)\leq0$, $h_{\delta\mathrm{\textit{-}lin}}(1)\in(0,1)$ or $h_{\delta\mathrm{\textit{-}lin}}(1)\geq1$, resulting in $3\times3$ combinations. Similarly for FN in $U=1$, the error shape depends on whether $h_{\delta\mathrm{\textit{-}lin}}(-0.5)\leq0$, $h_{\delta\mathrm{\textit{-}lin}}(0)\in(-0.5,1)$ or $h_{\delta\mathrm{\textit{-}lin}}(0)\geq1$, and $h_{\delta\mathrm{\textit{-}lin}}(2)\leq-0.5$, $h_{\delta\mathrm{\textit{-}lin}}(2)\in(-0.5,1)$ or $h_{\delta\mathrm{\textit{-}lin}}(2)\geq1$.

For FP, the shape of the error region depends on the classification boundary. We denote the classification boundary by $h_{0\mathrm{\textit{-}lin}}(x_s)$, where $\delta=0$. We also obtain 9 possible shapes, which for $U=0$ depends on whether $h_{0\mathrm{\textit{-}lin}}(-1)\geq0$, $h_{0\mathrm{\textit{-}lin}}(-1)\in(0,-1+\delta)$ or $h_{0\mathrm{\textit{-}lin}}(-1)\leq-1+\delta$, and $h_{0\mathrm{\textit{-}lin}}(1)\geq0$, $h_{0\mathrm{\textit{-}lin}}(1)\in(0,-1+\delta)$ or $h_{0\mathrm{\textit{-}lin}}(1)\leq-1+\delta$. Similarly for $U=1$, the area of error of FP depends on whether $h_{0\mathrm{\textit{-}lin}}(0)\geq-0.5$, $h_{0\mathrm{\textit{-}lin}}(0)\in(-0.5,-1+\delta)$ or $h_{0\mathrm{\textit{-}lin}}(0)\leq-1+\delta$, and $h_{0\mathrm{\textit{-}lin}}(2)\geq-0.5$, $h_{0\mathrm{\textit{-}lin}}(2)\in(-0.5,-1+\delta)$ or $h_{0\mathrm{\textit{-}lin}}(2)\leq-1+\delta$. 

To circumvent the combinatorial explosion presented above, we resort to grid-search in this example to identify the shape of error induced by optimal $a,b$ parameters, for a given $\delta$. Let a spurious classifier be an $h(x_c,x_s)$ such that $x_s$ can change the output ($a\neq0$ in this parameterization). We identify a condition among the possible $9^4$ described above, which is optimal for at least $\delta\in[0.2,\frac{1}{3})$, arriving at a spurious classifier with the following description for $\ell_{0-1}$:

\begin{gather*}
    \ell_\mathrm{spur}(a,b)=\frac{1}{4}[
\underbrace {
(b-a-\delta)(1+\frac{\delta-b}{a})}_{\text{FN for }U=0}
+\underbrace{(0.5+\delta-b)(\frac{-0.5+\delta-b}{a})}_{\text{FN for }U=1}
+\underbrace{(1+\frac{b}{a})(2\delta-b-a)}_{\text{FP for }U=0 }
+\underbrace{0}_{\text{FP for }U=1}
]
\end{gather*}

By solving for $\frac{\partial\cdot}{\partial a}=0$ and $\frac{\partial\cdot}{\partial b}=0$ we get: $a_\mathrm{spur}=-\sqrt{1/12-0.5\delta^2}, b_\mathrm{spur}=\delta-1/6$.

\begin{figure}[thbp]
    \centering
    \begin{subfigure}{0.45\textwidth}
        \centering
        \includegraphics[width=\linewidth]{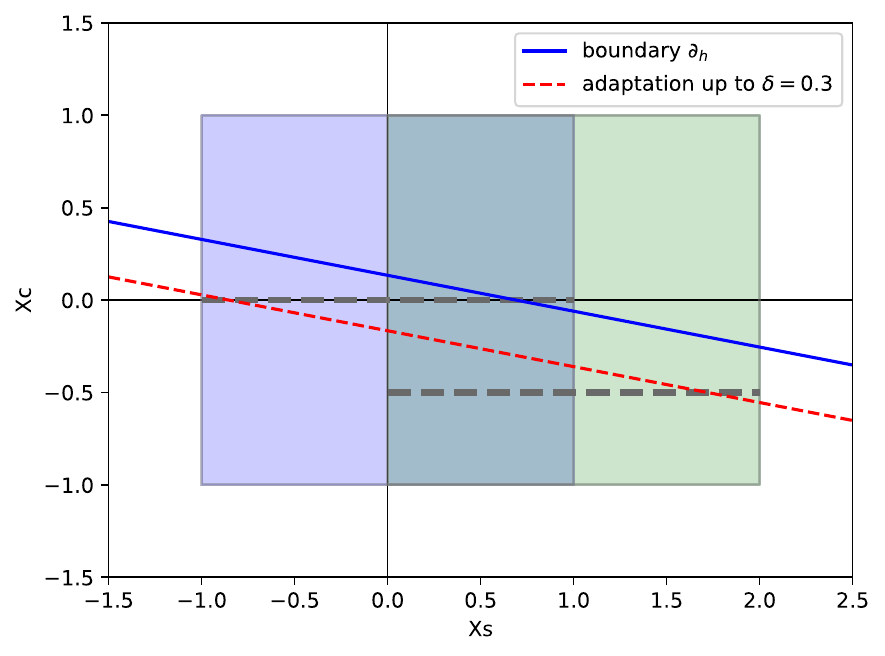}
        \caption{Optimal classifier for $\delta=0.3$.}
        \label{fig:x_unif_spurious}
    \end{subfigure}
    \hfill
    \begin{subfigure}{0.45\textwidth}
        \centering
        \includegraphics[width=\linewidth]{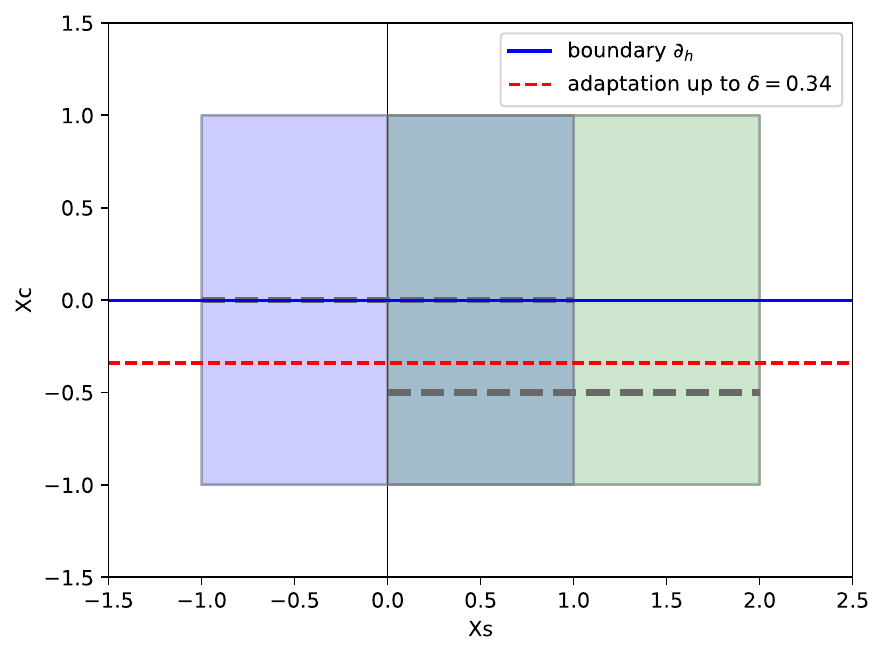}
        \caption{Optimal classifier for $\delta=0.34$.}
        \label{fig:x_unif_causal}
    \end{subfigure}
    
    \caption{Optimal classifiers for different $\delta$, near the transition between causal and spurious solutions.}
    \label{fig:x_unif_optimal}
\end{figure}

This analytical solution matches grid-search results and is depicted in Figure~\ref{fig:x_unif_spurious}. As $\delta$ increases, we see though grid-search a sudden transition to the causal classifier $a_\mathrm{caus}=b_\mathrm{caus}=0$, with loss $\ell_\mathrm{caus}(a,b)=2(0.5-\delta)$.

By setting $\ell_\mathrm{caus}(a=0,b=0)=\ell_\mathrm{spur}(a=-\sqrt{1/12-0.5\delta^2},b=\delta-1/6)$ and solving for $\delta$, we arrive at $\delta=\frac{1}{3}$. This identifies a phase transition, as $\delta$ increases, switching suddenly from an optimal spurious classifier into a causal one, when $\delta$ increases beyond $\frac{1}{3}$.

Interestingly, our theory predicted this causal classifier to be optimal for $\delta>\frac{1}{2}$, which is our max-gap for this setting. When $\delta<$max-gap it becomes challenging to charaterize the optimal solution generally. However, we provide here an example where the same causal solution remains optimal even for $\delta<$max-gap, particularly for any $\delta\in(\frac{1}{3},+\infty)$ despite max-gap=$\frac{1}{2}$.

\subsection{Including negative impact of FP's in population's utility}
\label{sec:utility_example_app}

Consider the generative process described in the beginning of this section, and $\delta=0.5$.
We compute below the population's utility (Definition~\ref{def:longterm_pred_util}) after deploying 2 classifiers: the one optimizing post-adaptation $\ell_{0-1}$ (which is causal, since $\delta\geq$max-gap) and the one optimizing pre-adaptation $\ell_{0-1}$ (which uses the spurious feature). We then characterize alignment in this example, for varying $\delta_2$.

\subsubsection{Best \texorpdfstring{$\ell_{0-1}$}{l0-1} post-adapt}

Assume $\delta=0.5$. The causal classifier $a=0, b=0$ obtains zero $\ell_{0-1}$ post-adaptation (Theorem~\ref{the:causal_01opt}).

\begin{figure}[!h]
    \centering
    \includegraphics[width=0.6\linewidth]{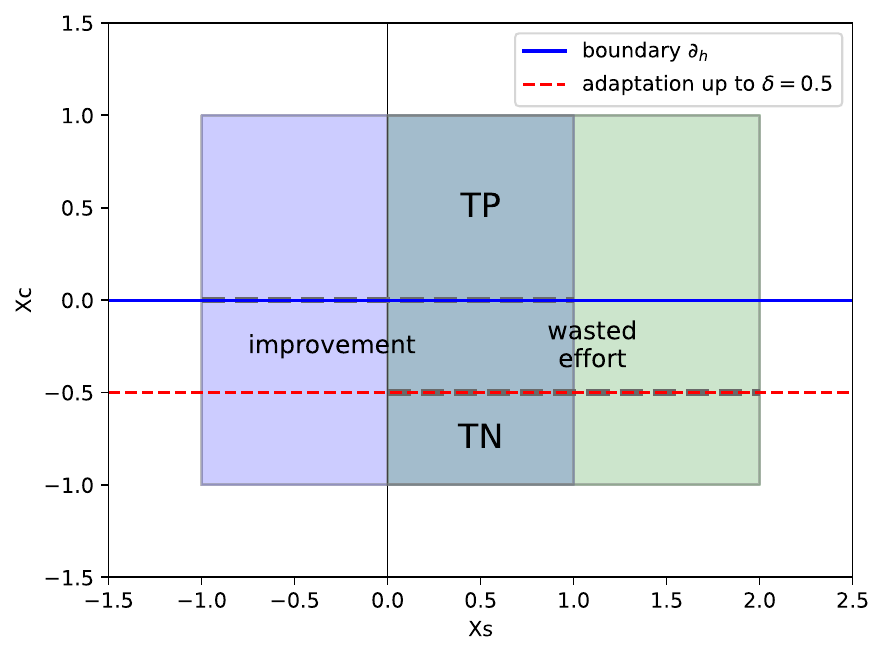}
    \caption{Best $\ell_{0-1}$ post-adaptation classification boundary $\partial_h$, for $\delta=0.5$, under the data generating process of section \ref{app:example_section}. Regions where points adapted are labeled following Definition~\ref{def:all_adaptations_app}.}
    \label{fig:best_post_unif}
\end{figure}

This yields:

\begin{itemize}
    \item 75\% positive labels
    \item Zero FPs
    \item Expected cost of adapting $\mathbb{E}_x[c(x,\Delta_{\hat f}(x))]=0.0625$. This comes from $\int_{x_c,x_s} c(x,\Delta_{\hat f}(x)) \mathbb{P}(x_c,x_s)=\frac{0.5}{2}*\frac{1}{4}+0*\frac{3}{4}$, where $\frac{0.5}{2}$ is the average cost of adapting for those who move. (area of a triangle with base $0.5$ and height $0.5$ — those who move from furthest spend $0.5$, the ones touching the boundary spend $0$).
\end{itemize}

The expected utility of the predicted agents $\mathbb E[r_p(h^\mathrm{post},x,u)]$ is then $.75*.5-.0625-0\delta_2=0.3125$.

\subsubsection{Best \texorpdfstring{$\ell_{0-1}$}{l0-1} pre-adapt}

The parameters for the linear classifier that minimizes pre-adaptation $\ell_{0-1}$ are $a=-0.224, b=-0.138$, which uses the spurious feature since $a\neq0$ (obtained through grid-search).

\begin{figure}[thbp]
    \centering
    \begin{subfigure}{0.45\textwidth}
        \centering
        \includegraphics[width=\linewidth]{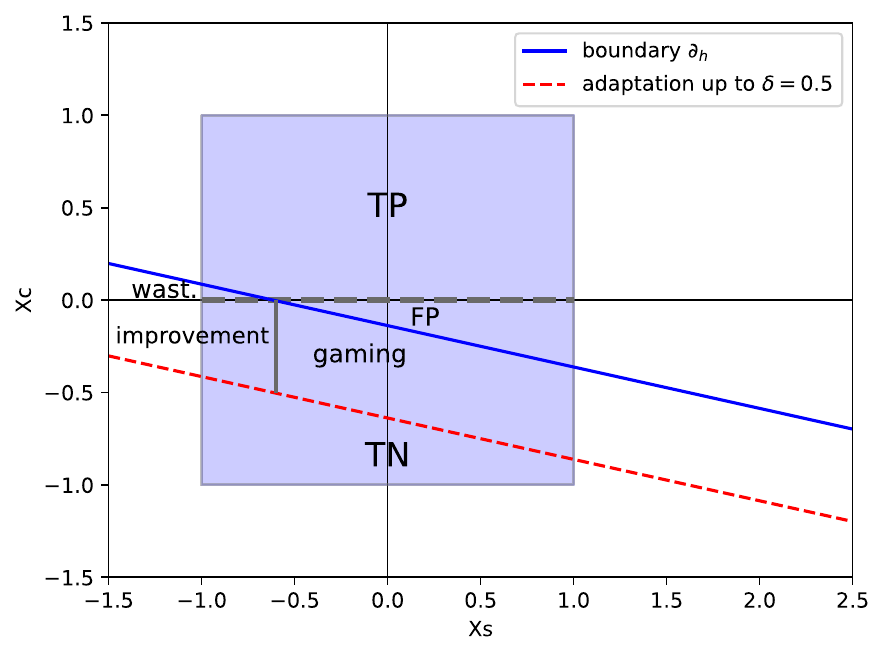}
        \caption{Support for $U=0$.}
        \label{fig:best_pre_u0}
    \end{subfigure}
    \hfill
    \begin{subfigure}{0.45\textwidth}
        \centering
        \includegraphics[width=\linewidth]{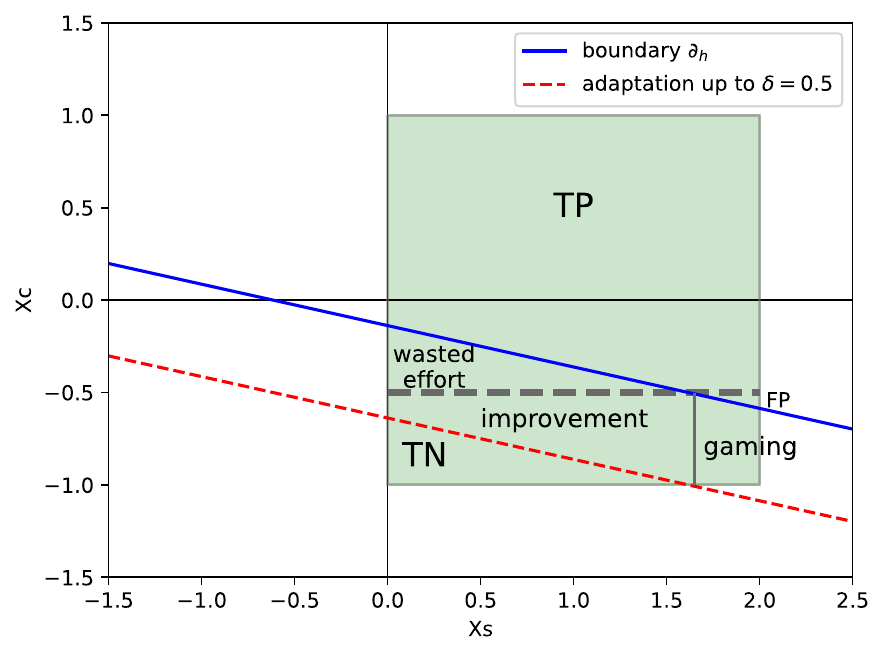}
        \caption{Support for $U=1$.}
        \label{fig:best_pre_u1}
    \end{subfigure}
    
    \caption{Best $\ell_{0-1}$ pre-adaptation classification boundary $\partial_h$, under the data generating process of section \ref{app:example_section}. Regions where points adapted are labeled following Definition~\ref{def:all_adaptations_app}.}
    \label{fig:best_pre_unif}
\end{figure}

\begin{itemize}
    \item 87.25\% positive labels
    \item 16.16\% FPs
    \item Expected cost of adapting is $0.0542<\mathbb{E}_x[c(x,\Delta_{\hat f}(x))]<0.0557$. It is $0.0625-d$ where $0.0068<d<0.0083$ is the small triangle outside support $\mathcal X$ (with area $.384*.086/2$) multiplied by its cost (which varies between .5 and .414 inside the triangle).
\end{itemize}

The expected utility for the predicted agents $\mathbb E[r_p(h^\mathrm{pre},x,u)]$ is then upper-bounded by $.8725*.5-0.0542-.1616\delta_2$ and lower-bounded by $.8725*.5-0.0557-.1616\delta_2$, or $0.38055-.1616\delta_2<\mathbb E[r_p(h,x,u)]<0.38205-.1616\delta_2$.

\subsubsection{Comparing predicted agents' utility \texorpdfstring{$r_p$}{rp} under \texorpdfstring{$h^\mathrm{pre}$}{h-pre} or \texorpdfstring{$h^\mathrm{post}$}{h-post}}
\label{app:longterm_alignment_example}

Agent utility from post-adaptation minimizer: $\mathbb E[r_p(h^\mathrm{post},x,u)]$ = 0.3125.

Agent utility from pre-adaptation minimizer: $0.38055-.1616\delta_2<\mathbb E[r_p(h^\mathrm{pre},x,u)]<0.38205-.1616\delta_2$.

The lowest $\delta_2^*$ which guarantees alignment ($\Delta r_p(h^{\text{post}},h^{\text{pre}})\geq0$) is then upper-bounded by $\delta_2^*<0.43$, for $\delta=0.5$.

Even if $\delta>\delta_2$ (i.e., FPs still provide a small gain) we have alignment, $\Delta r_p(h^{\text{post}},h^{\text{pre}})\geq0$. In words, the population still prefers on average a strategic institution that prevents gaming, even though this would bring a small benefit to agents compared to being classified as TN.

\section{Incentive Alignment}

Consider the following definition of $h$-change:

\begin{definition}
\label{def:cond_h_change}
    (Conditional $h$-change) Expected change in utility when switching from classifier $h$ into $h'$ for role $k$, where $k\in\{p,i\}$ is either predicted agent $p$ or institution $i$. 
For the subset of agents where $Y=y$, we have $\Delta r_{p|Y=y}$ and $\Delta r_{i|Y=y}$:
\begin{align*}
    \\ 
    \Delta r_{k|Y=y}(h',h)= \mathbb E_{x,u|y}[r_{k}(h',\Delta_{h'}(x),u)]
    -\mathbb E_{x,u|y}[r_{k}(h,\Delta_{h}(x),u)]
\end{align*}
\end{definition}

\subsection{Static Alignment}

\begin{proposition}
\label{pro:static_align}
    (Static alignment) Assume $\Delta_h(x)=x$ and $\delta_2=\epsilon=0$. For any pair of classifiers $(\hat f',\hat f)$, institution's goals match the goals of agents whose $y=1$ but not of $y=0$, in the following sense:
\begin{align*}
    \Delta r_{p|Y=1}(h',h)>0\iff \Delta r_{i|Y=1}(h',h)>0
    \\
    \Delta r_{p|Y=0}(h',h)<0\iff \Delta r_{i|Y=0}(h',h)>0
\end{align*}
\end{proposition}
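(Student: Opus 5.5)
Under the assumptions $\Delta_h(x)=x$ and $\delta_2=\epsilon=0$, the plan is to write both utilities in closed form and compare them subgroup by subgroup. Since agents never move, the cost term is $c(x,x)=0$. The outcome $y(\Delta_h(x),u)=y(x,u)$ is fixed, so conditioning on $Y=y$ selects a fixed subpopulation that is the same under $h$ and $h'$. Definition~\ref{def:longterm_pred_util} then reduces to $r_p(h,x,u)=\delta\, h(x)$. Definition~\ref{def:longterm_inst_util} reduces to $r_i(h,x,u)=-\mathds 1\{h(x)\neq y(x,u)\}$. Taking conditional expectations as in Definition~\ref{def:cond_h_change} gives, for each $y\in\{0,1\}$,
\begin{align*}
\Delta r_{p|Y=y}(h',h)&=\delta\,\big(\mathbb P(h'(X)=1\mid Y=y)-\mathbb P(h(X)=1\mid Y=y)\big),\\
\Delta r_{i|Y=y}(h',h)&=\mathbb P(h'(X)=y\mid Y=y)-\mathbb P(h(X)=y\mid Y=y).
\end{align*}
Here the joint law of $(X,U)$ given $Y=y$ is the same for both classifiers, because there is no adaptation.

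Next, I will treat the two subgroups separately. For $Y=1$, the two expressions coincide up to the factor $\delta$. Since $\delta>0$ (agents gain positive utility from a positive prediction), $\Delta r_{p|Y=1}=\delta\,\Delta r_{i|Y=1}$, and the first equivalence follows at once. For $Y=0$, I will use $\mathbb P(h(X)=0\mid Y=0)=1-\mathbb P(h(X)=1\mid Y=0)$. This gives $\Delta r_{i|Y=0}=-\big(\mathbb P(h'(X)=1\mid Y=0)-\mathbb P(h(X)=1\mid Y=0)\big)=-\Delta r_{p|Y=0}/\delta$. Hence $\Delta r_{p|Y=0}<0$ if and only if $\Delta r_{i|Y=0}>0$. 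In words, the institution gains exactly when false positives are converted into true negatives, and each such conversion costs the agent $\delta$.

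There is no real obstacle here; the whole argument is bookkeeping. The one point that needs care is justifying that conditioning on $Y=y$ yields the same subpopulation under $h$ and $h'$. This holds because $\Delta_h(x)=x$ makes the counterfactual outcome $y(\Delta_h(x),u)$ independent of the deployed classifier. A second point is stating explicitly that $\delta>0$, since the strict equivalences degenerate if $\delta=0$.
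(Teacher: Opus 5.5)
Your proof is correct and follows the paper's argument. The paper notes that with static agents a change of classifier can only swap FN$\leftrightarrow$TP within $Y=1$, which moves $r_p$ and $r_i$ in the same direction, or FP$\leftrightarrow$TN within $Y=0$, which moves them in opposite directions; your identities $\Delta r_{p|Y=1}=\delta\,\Delta r_{i|Y=1}$ and $\Delta r_{p|Y=0}=-\delta\,\Delta r_{i|Y=0}$ are that case analysis summed over the population. Your version is slightly tighter in two ways: the exact per-point ratio $\delta:1$ is what keeps the signs correct when agents switch in both directions, which the paper's pointwise argument leaves implicit, and your caveat that $\delta>0$ is needed for the strict equivalences is a minor condition the paper also leaves unstated.
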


\begin{definition}
    (FN, FP, TN, TP) Consider the following auxiliary definitions:
    
    $\texttt{FN}(h(x),y)=\mathds{1}\{h(x)=0\,\mathrm{and}\,y=1\}$

    $\texttt{FP}(h(x),y)=\mathds{1}\{h(x)=1\,\mathrm{and}\,y=0\}$

    $\texttt{TN}(h(x),y)=\mathds{1}\{h(x)=0\,\mathrm{and}\,y=0\}$

    $\texttt{TP}(h(x),y)=\mathds{1}\{h(x)=1\,\mathrm{and}\,y=0\}$
\end{definition}

\begin{proof}
    With static $(y,x)$ pairs, the only changes which can occur from a change in $h$ are:
    \begin{enumerate}
        \item for $y=1$, FN $\leftrightarrow$ TP
        \item for $y=0$, FP $\leftrightarrow$ TN
    \end{enumerate}
    From the definitions of $r_p$ (\ref{def:longterm_pred_util}) and $r_i$ (\ref{def:longterm_inst_util}), we see that in case 1. both $r_p$ and $r_i$ increase with TP and decrease with FN. For case 2. TN increases $r_i$ and reduces $r_p$, while FP reduces $r_i$ and increases $r_p$. \end{proof}

This follows from the fact that agents always gain when a negative prediction ($h(\Delta_{h}(X))=0$) is changed into positive ($h'(\Delta_{h'}(X))=1$), while the institution only benefits from this change if true outcome is also positive ($y=1$). It is illustrated by Table~\ref{tab:static_alignment}. 

\definecolor{lightgreen}{RGB}{220, 255, 220}
\definecolor{lightred}{RGB}{255, 230, 230}

\begin{table}[ht!]
\centering
\begin{tabular}{|c|c|c|}
\hline
& \textbf{Institution} & \textbf{Predicted Agents} \\
&                      & \textbf{$\delta_2=0$ ($\delta_2>\delta$)} \\
\hline
\rowcolor{lightgreen}
\textbf{TP} & \Checkmark & \Checkmark\,\,\,\,\,\,\,\,\,\,\\
\hline
\rowcolor{lightgreen}
\textbf{FN} & X & X\,\,\,\,\,\,\,\,\,\,\,\,\, \\
\hline
\rowcolor{lightred}
\textbf{TN} & \Checkmark & X (\Checkmark) \\
\hline
\rowcolor{lightred}
\textbf{FP} & X & \Checkmark (X) \\
\hline
\end{tabular}
\caption{Contribution of static points ($\Delta_h(x)=x$) to utility of institution ($r_i$) and of predicted agents ($r_p$). \Checkmark denotes a non-negative contribution, and X a decrease in utility. Green rows indicate similar impact for institution and agents (contributing to alignment), and red otherwise. Parentheses in predicted agents indicate contribution when $\delta_2>\delta$, if it does not match contribution when $\delta_2=0$.}
\label{tab:static_alignment}
\end{table}

\subsection{Dynamic Alignment}
\label{app:dynamic_align}

To understand how $h^{\mathrm{post}}$ may differ from $h^{\mathrm{pre}}$, we can ask what additional information the institution receives when it knows the correct adaptation model $\Delta_{h}(x;\delta)$, instead of wrongly assuming static behavior $\Delta_{h}(x)=x$. Agents adapt to increase their utility, which is only possible when, pre-adaptation, they would be assigned a negative label $h(x)=0$. This allows enumerating all four kinds of additional information for the institution when it becomes strategic (i.e., when it anticipates agents' adaptation). For a given classifier $h^{\text{pre}}$, a population of agents may respond through (some of) the following adaptations:

\begin{definition}
\label{def:all_adaptations_app}
    (All possible adaptations) For any $x:\Delta_h(x)\neq x$, its adaptation must fall in one of four categories, which we name below. This is because a point only moves to switch from negative $h(x)=0$ into positive $h(\Delta_h(x))=1$. We also characterize their impact on $r_i$.

    \begin{itemize}
    \item TN$\rightarrow$ TP (\textit{improvement}) $r_i\rightarrow$
    \item FN$\rightarrow$ TP (\textit{wasted effort}) $r_i\nearrow$
    \item TN$\rightarrow$ FP (\textit{gaming}) $r_i\searrow$
    \item FN$\rightarrow$ FP (\textit{reversed incentive}) $r_i\rightarrow$
\end{itemize}
\end{definition}

The arrows describe whether $r_i$ goes up ($\nearrow$), down ($\searrow$), or stays constant ($\rightarrow$) after agent-adaptation, for a fixed classifier, assuming short-term goals ($\delta_2=\epsilon=0$). Table~\ref{tab:dyn_alignment} also depicts this. An illustration is provided in the running example of \S~\ref{sec:utility_example_app} for intuition.

\begin{table}[ht!]
\centering
\begin{tabular}{|c|c|c|}
\hline
& \textbf{Institution} & \textbf{Predicted Agents} \\
& \textbf{$\epsilon=0$ ($\epsilon>0$)} & \textbf{$\delta_2=0$ ($\delta_2>\delta$) } \\
\hline
\rowcolor{gray!20}
Improvement (\textbf{TN$\rightarrow$ TP}) & = (\Checkmark) & \Checkmark \,\,\,\,\,\,\,\,\, \\
\hline
\rowcolor{lightgreen}
Wasted effort (\textbf{FN$\rightarrow$ TP}) & \Checkmark\,\,\,\,\,\,\,\,\,\, & \Checkmark \,\,\,\,\,\,\,\,\, \\
\hline
\rowcolor{lightred}
Gaming (\textbf{TN$\rightarrow$FP}) & X\,\,\,\,\,\,\,\,\,\, & \Checkmark (X)\\
\hline
\rowcolor{lightred}
Reversed incentive (\textbf{FN$\rightarrow$FP}) & X\,\,\,\,\,\,\,\,\,\, & \Checkmark (X)  \\
\hline
\end{tabular}
\caption{Contribution of dynamic points ($\Delta_h(x)\neq x$) to utility of institution ($r_i$) and of predicted agents ($r_p$), when they switch from pre-adaptation $x$ into post-adaptation $\Delta_h(x)$. \Checkmark denotes an increase in utility, and X a decrease. Green rows indicate similar impact for institution and agents (contributing to alignment), red indicates opposite impact, and grey indicates indifferent for the institution. Parentheses in predicted agents indicate contribution when $\delta_2>\delta$, if it does not match contribution when $\delta_2=0$. Parentheses in institution indicate contribution when $\epsilon>0$, if it does not match contribution when $\epsilon=0$.}
\label{tab:dyn_alignment}
\end{table}

We can then study which changes an institution can make when switching from $h ^{\text{pre}}(x)$ to $ h ^{\text{post}}(x)$ (when \textit{becoming strategic}). Note that, if our family of hypotheses $\mathcal H$ (where $h\in\mathcal H$) is sufficiently expressive, we can independently estimate $\mathbb P(Y|X)$ for each $X$, and then threshold it to obtain classifier $h(x)$.

\subsubsection{Short-term Alignment}

\begin{lemma} (Support over positive predictions)
\label{lem:positive_y_support_app}
    Let the learned classifier be $h(x)=\mathds 1\{f(x)\geq0\}$, where $f(x)$ estimates $\mathbb P(Y|X)$, and its boundary be $\partial_h=\{x\in\mathcal X:f(x)=0\}$. Assume the family of hypotheses $\mathcal H$ is flexible enough that it can estimate independently $\mathbb P(Y|X)$ for each $x$. Assume full support over points that can adapt towards $h^\mathrm{pre}$ and $h^\mathrm{post}$: $\Delta^{-1}_{h^{\mathrm{pre}}}(\partial_{h^{\mathrm{pre}}}) \cup \Delta^{-1}_{h^{\mathrm{post}}}(\partial_{h^{\mathrm{post}}})\subset \mathcal X$, also $\delta_2=\epsilon=0$, $h^\mathrm{post}\neq h^\mathrm{pre}$ and $y_\mathrm{sco}$ is O-nondecreasing with respect to $x_c$.
    Then, we have that points obtaining a positive prediction under $h^\mathrm{post}$ are a subset of those obtaining a positive prediction under $h^\mathrm{pre}$: $\{x\in\mathcal X: h^{\mathrm{post}}(x)=1\}\subset 
        \{x\in\mathcal X: h^{\mathrm{pre}}(x)=1\} $.
\end{lemma}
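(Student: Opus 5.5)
The plan is to compare the two maximization problems pointwise, using the flexibility of $\mathcal H$ to reduce everything to local decisions. Since $\delta_2=\epsilon=0$, the institution's utility is $-\ell_{0-1}$. With $\mathcal H$ flexible enough, $h^\mathrm{pre}$ is the Bayes classifier on the static distribution:
\[
h^\mathrm{pre}(x)=\mathds 1\{\mathbb P(Y=1\mid X=x)\geq 1/2\}.
\]
For $h^\mathrm{post}$, the first step is to rewrite the post-adaptation 0-1 loss as a sum of three contributions. Points that already have $h(x)=1$ stay where they are. Points with $h(x)=0$ at cost distance above $\delta$ from $\{h=1\}$ also stay. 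The remaining points, those in $\Delta_h^{-1}(\partial_h)$, move onto the boundary $\partial_h$, where they are labelled positive and receive counterfactual outcome $y(\Delta_h(x),u)$.

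The second step is to show that $h^\mathrm{post}$ never labels positive a point that $h^\mathrm{pre}$ labels negative. Argue by contradiction. Suppose $S=\{h^\mathrm{post}=1,\,h^\mathrm{pre}=0\}$ is nonempty; there $\mathbb P(Y=1\mid x)<1/2$. Consider shrinking the positive region of $h^\mathrm{post}$ by removing $S$, i.e.\ moving the boundary in the $O_s$-increasing direction. This has two effects.
\begin{itemize}
\item The static points of $S$ become negatives, or move to the new boundary. Those that stay flip from majority-FP to majority-TN, which strictly helps.
\item The points that used to adapt onto the old boundary near $S$ now either adapt further onto the new boundary, or stop adapting.
\end{itemize}
For the movers, $O_s$-monotonicity of $y_\mathrm{sco}$ means a boundary point higher in the $O_s$ order has weakly larger $y_\mathrm{sco}$ for every $u$. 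So agents who still reach the new boundary have weakly better counterfactual outcomes, turning potential FPs into TPs. Agents who can no longer reach it become static negatives. The full-support assumption $\Delta^{-1}_{h^{\mathrm{pre}}}(\partial_{h^{\mathrm{pre}}})\cup\Delta^{-1}_{h^{\mathrm{post}}}(\partial_{h^{\mathrm{post}}})\subset\mathcal X$ guarantees these mover sets are genuinely populated and well defined, so the accounting is over real mass.

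The third step is the comparison itself. I would show that the modified classifier weakly improves the post-adaptation loss, and strictly improves it whenever $S$ has positive mass. That contradicts the optimality of $h^\mathrm{post}$. Hence $\{h^\mathrm{post}=1\}\subseteq\{h^\mathrm{pre}=1\}$. Strictness of the inclusion then follows from $h^\mathrm{post}\neq h^\mathrm{pre}$.

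The main obstacle is the second bullet above: agents who previously reached $\partial_{h^\mathrm{post}}$ but cannot afford the new boundary. Under short-term goals, someone who was an FP becomes a TN, which is fine. But someone who was a TP, because their $x_c$ reached the unambiguous region, may now become an FN, which costs the institution. To handle this, I would try choosing the modification only on the part of $S$ lying at or below the static Bayes threshold. I would then use pointwise flexibility and monotonicity to argue that any mover landing in $S$ has $\mathbb P(Y=1\mid \Delta_h(x))\le \mathbb P(Y=1\mid x')<1/2$ for the boundary point $x'$ it lands on. The removed positive mass would therefore be majority-negative even after counting movers. If this local argument fails, I would restrict the claim to the region where adaptation cannot change labels, or add an assumption that counterfactual outcomes on $S$ remain majority-negative.
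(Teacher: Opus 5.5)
Your argument depends on the third step: deleting $S=\{h^\mathrm{post}=1,\,h^\mathrm{pre}=0\}$ from the positive region must weakly lower the post-adaptation loss. This step is never established. You flag the obstacle yourself, and neither tool you invoke removes it.

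\emph{Monotonicity.} $O_s$-monotonicity of $y_\mathrm{sco}$ only helps a displaced mover if its new landing point dominates its old one in the $O_s$ order on $x_c$. The nearest point of the shrunken positive region may be reached by a quite different move. It may change $x_s$ rather than $x_c$, or land at an $x_c$ incomparable with the old one. So outcomes need not improve, and a former TP can become a gaming FP.

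\emph{Your fallback.} Bounding movers by the static conditional $\mathbb P(Y=1\mid X=x')<1/2$ at the landing point also fails in general. A mover's counterfactual outcome $y(\Delta_h(x),u)$ carries the $u$ of its \emph{origin}, and that $u$ is associated with the origin's features. Hence the post-adaptation conditional at $x'$ is not the static one; this is exactly the transfer effect of Section~\ref{sec:ce_robust}. The wasted-effort agents (origin with $y(x,u)=1$) reach $S$ as TPs and become FNs once $S$ is deleted. Nothing you prove shows this loss is dominated by the FPs you remove.

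\emph{The mass criterion.} Even a working exchange only excludes $S$ of positive mass, while the conclusion needs $S=\varnothing$. Since agents move to the \emph{nearest} positive point, even a null $S$ can redirect a positive mass of movers. So the mass of $S$ is not the right criterion.

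The paper does not use an exchange argument. It relies on the four-way classification of adaptations (improvement, wasted effort, gaming, reversed incentive). With $\epsilon=0$, improvement and reversed incentive leave $r_i$ unchanged. So under a flexible $\mathcal H$, the only reasons for $h^\mathrm{post}$ to depart from the static Bayes rule $h^\mathrm{pre}$ are to block gaming or to exploit wasted effort. The paper argues that both call for pushing the boundary along $O_s$.

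This gives the ``more demanding'' ordering: every $x\in\partial_{h^\mathrm{post}}\setminus\partial_{h^\mathrm{pre}}$ lies in $x'+O_s\setminus\{\vec 0\}$ for some $x'\in\partial_{h^\mathrm{pre}}\setminus\partial_{h^\mathrm{post}}$. The inclusion follows directly from this. Full support is used only for strictness: agents at cost exactly $\delta$ from $\partial_{h^\mathrm{pre}}$ can no longer reach $\partial_{h^\mathrm{post}}$.

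The paper takes this ordering as its key step, supported by the gaming/wasted-effort reasoning rather than derived. Your exchange argument is an attempt to actually prove it. To make it rigorous you would need an additional hypothesis controlling movers' counterfactual outcomes on $S$, along the lines of your own fallback. One example is $U\perp X_c$, with adaptation near $S$ changing only $x_c$ along $O_s$, so that landing and static conditionals coincide and new landing points dominate old ones.
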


\begin{proof}
    Any agent's adaptation must fall in one of the four categories defined in \ref{def:all_adaptations_app}, since points only adapt to switch from $h(x)=0$ into $h(\Delta_h(x))=1$.

    Given $\mathcal{H}$ is flexible enough, $h^\mathrm{post}\neq h^\mathrm{pre}$ only happens as a response to wasted effort or gaming, since the two remaining adaptations do not impact $r_i$.
    
    $y_\mathrm{sco}$ is O-nondecreasing with respect to $x_c$. Hence, to improve utility as a response to gaming or wasted effort, the boundary should move from $h^\mathrm{pre}$ to $h^\mathrm{post}$ along orthant $O_s$. To prevent gaming, it should increase along the direction where $y_\mathrm{sco}$ increases (along $O_s$). Awareness of wasted effort allows to incur in error by being overly demanding (also along $O_s$) knowing that points will self-correct, by switching, from $h^\mathrm{post}(x)$ yielding FN, into $h^\mathrm{post}(\Delta_{h^\mathrm{post}}(x))$ yielding TP. $h^\mathrm{post}$ may gain from increasing wasted effort, if it avoids other errors made by $h^\mathrm{pre}$, such as FPs. Formally, we have that $\forall x_{\mathrm{post\textit{-}}\partial}\in\partial_\mathrm{post}\setminus\partial_\mathrm{pre},\exists\,x_{\mathrm{pre\textit{-}\partial}}\in\partial_\mathrm{pre}\setminus\partial_\mathrm{post}:x_{\mathrm{post\textit{-}}\partial}\in O_s\setminus\vec{0}+x_{\mathrm{pre\textit{-}\partial}}$.

    From the full support assumption:
    
    $\Delta^{-1}_{h^{\mathrm{pre}}}(\partial_{h^{\mathrm{pre}}}) \subset \mathcal X\Rightarrow \forall x_\mathrm{pre\textbf{-}\partial}\in\partial_{h^\mathrm{pre}} \exists\, \tilde x\in \Delta^{-1}_{h^\mathrm{pre}}(x_\mathrm{pre\textbf{-}\partial})\subset\mathcal{X}:\delta=\underset{x'}{\min}\, c(\tilde x,x') \,\mathrm{s.t.}\,(h^\mathrm{pre}(x')=1)$.

    Combining both results above, we have that $\exists\, \tilde x: h^\mathrm{pre}(\Delta_{h^\mathrm{pre}}(\tilde x))=1, h^\mathrm{post}(\Delta_{h^\mathrm{post}}(\tilde x))=0 $, since any point $\tilde x$ whose cost of adapting to $h^\mathrm{pre}$ was $\delta$ and $\Delta_{h^\mathrm{pre}}(\tilde x)\not\in \partial_{h^\mathrm{post}}$, has cost of adapting to $h^\mathrm{post}$ as $\delta'>\delta$, hence will not adapt.

    Analogously, since the post-adapt boundary became more demanding, no point that is positively classified by $h^\mathrm{post}$ is negatively classified by $h^\mathrm{pre}$:
    
    $\forall x_\mathrm{post\textbf{-}\partial}\in \partial_\mathrm{post}\setminus \partial_\mathrm{pre}, \exists\, x_\mathrm{pre\textbf{-}\partial}\in \partial_\mathrm{pre}:x_\mathrm{post\textbf{-}\partial}\in x_\mathrm{pre\textbf{-}\partial}+O_s\Rightarrow\not\exists\, \tilde x: h^\mathrm{post}(\Delta_{h^\mathrm{post}}(\tilde x))=1, h^\mathrm{pre}(\Delta_{h^\mathrm{pre}}(\tilde x))=0 $

    Since we have points $\tilde x$ that lose their positive label post-adapt, and no point that gains one, then:
    
    $\{x\in\mathcal X: h^{\mathrm{post}}(x)=1\}\subset 
        \{x\in\mathcal X: h^{\mathrm{pre}}(x)=1\} $    
\end{proof}

Under short term goals $\delta_2=\epsilon=0$, for $h^\mathrm{post}\neq h^\mathrm{pre}$, a flexible enough $\mathcal{H}$ and O-nondecreasing $y_\mathrm{sco}$, we do not have aligned incentives.

\begin{proposition}
    (Short-term misalignment) 
    Let $\Delta r_{p\mathrm{\textit{-}short}}$ be $h$-change for $\delta_2=0$, and $h^\mathrm{post\textit{-}short}$ be optimal for $\epsilon=0$. Under the assumptions of Lemma~\ref{lem:positive_y_support_app}, we have:

    \begin{align*}
        \Delta r_{p\mathrm{\textit{-}short}}(h^\mathrm{post\textit{-}short},h^\mathrm{pre})<0
    \end{align*}
\end{proposition}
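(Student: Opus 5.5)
The plan is to write the agents' expected short-term utility under each classifier and show that every term can only get worse when we switch from $h^\mathrm{pre}$ to $h^\mathrm{post\textit{-}short}$. With $\delta_2=0$, Definition~\ref{def:longterm_pred_util} gives
\[
\mathbb E_{x,u}[r_p(h,x,u)]=\delta\,\mathbb P\big(h(\Delta_h(x))=1\big)-\mathbb E_x\big[c(x,\Delta_h(x))\big].
\]
So $\Delta r_{p\mathrm{\textit{-}short}}(h^\mathrm{post\textit{-}short},h^\mathrm{pre})$ splits into two differences. The first is a difference in post-adaptation positive mass, scaled by $\delta$. The second is a difference in expected adaptation cost. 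We show the first is strictly negative and the second is non-negative.

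\emph{Per-agent comparison.} By Lemma~\ref{lem:positive_y_support_app}, $\{h^\mathrm{post}=1\}\subset\{h^\mathrm{pre}=1\}$, and the inclusion is strict. We prove pointwise in $x$ that $r_p(h^\mathrm{post},x,u)\le r_p(h^\mathrm{pre},x,u)$. Each agent's short-term utility is the value of its own best response, $\max_{x'}\{\delta h(x')-c(x,x')\}$. Because the positive region only shrinks, the feasible set of positively classified targets shrinks. Taking the maximum over a smaller set cannot increase its value, so each agent's best-response value is weakly lower under $h^\mathrm{post}$. This revealed-preference argument covers both the cost term and the positive-prediction term at once. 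Integrating over $x$ gives $\Delta r_{p\mathrm{\textit{-}short}}\le 0$.

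\emph{Strictness.} We reuse the construction from the proof of Lemma~\ref{lem:positive_y_support_app}. There is a point $\tilde x$ that adapts to the boundary of $h^\mathrm{pre}$ at cost exactly $\delta$ and receives a positive label. Under $h^\mathrm{post}$, the same point would need cost $\delta'>\delta$, so it does not adapt and receives $0$. Its utility therefore drops from $\delta-c=0^+$-type values for nearby points to $0$. To get strict loss on a positive-measure set, we take the points of $\{h^\mathrm{pre}=1\}\setminus\{h^\mathrm{post}=1\}$. These are static positives under $h^\mathrm{pre}$, each with utility $\delta$. Under $h^\mathrm{post}$, any positive outcome for them requires a strictly positive cost, so their utility is strictly below $\delta$. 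The full-support assumption $\Delta^{-1}_{h^\mathrm{pre}}(\partial_{h^\mathrm{pre}})\cup\Delta^{-1}_{h^\mathrm{post}}(\partial_{h^\mathrm{post}})\subset\mathcal X$, together with strictness of the inclusion, is used to argue that this set has positive probability.

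\emph{Main obstacle.} The hardest step is turning the set inclusion into a strict inequality in expectation. This requires the difference set of positive regions to carry positive probability mass, not merely to be nonempty. First, try to use continuity of the $L_p$ cost and the fact that the post boundary is shifted along $\mathrm{int}(O_s)$ relative to the pre boundary. Together these should yield an open neighbourhood of losing points, which has positive mass under the support assumption. Apart from this step, the argument is the monotonicity of a maximum over a shrinking feasible set.
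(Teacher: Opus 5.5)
Your argument is correct, up to a positive-mass step that the paper also leaves implicit, and it takes a different route from the paper's.

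\textbf{How the two routes differ.} The paper uses the boundary-shift property from the proof of Lemma~\ref{lem:positive_y_support_app}: $\partial_{h^{\mathrm{post}}}$ is $\partial_{h^{\mathrm{pre}}}$ pushed along $O_s\setminus\{\vec 0\}$. From this it argues, via $L_p$ geometry, that agents who stay positive but change target pay strictly more. It then cites the Lemma for ``fewer agents receive $\delta$'' and combines the two. You instead note that for $\delta_2=0$ an agent's utility is the value $\max_{x'}\{\delta h(x')-c(x,x')\}$ of its own best response. This value is pointwise monotone in $h$, so $h^{\mathrm{post}}\le h^{\mathrm{pre}}$ pointwise gives a per-agent weak decrease for any cost function.

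Take that pointwise inequality from the boundary-shift property, together with the rule that $h(x)=1$ iff some boundary point lies $\preceq_{O_s}$-below $x$. Do not take it from the Lemma's displayed inclusion, which its proof derives from post-adaptation labels.

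\textbf{What each route buys.} Your route is tighter. It also handles agents who adapt under $h^{\mathrm{pre}}$ but give up under $h^{\mathrm{post}}$: their cost falls while they lose $\delta-c\ge 0$, a case the paper's ``higher cost'' step does not literally cover. The paper's decomposition instead exposes the mechanisms (maintained positives pay more, some positives are lost) that it reuses in the long-term alignment analysis. Your strictness witnesses are also cleaner than the paper's label-losing agents. These are the static positives in $\{h^{\mathrm{pre}}=1\}\setminus\{h^{\mathrm{post}}=1\}$, each of which drops from utility $\delta$ to strictly below $\delta$.

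\textbf{Three fixes.}
\begin{enumerate}
\item Drop the opening claim that the expected-cost difference is non-negative. Aggregate cost can fall, because agents near the edge of the adaptation region under $h^{\mathrm{pre}}$ stop paying. Your per-agent argument never uses this claim.
\item Discard the borrowed point that adapts to $\partial_{h^{\mathrm{pre}}}$ at cost exactly $\delta$. Its utility is $0$ under both classifiers, so it gives no strict loss.
\item For positive mass you do not need a shift along $\mathrm{int}(O_s)$, which the Lemma does not provide anyway. Points of the gap region within cost $\delta$ of $\partial_{h^{\mathrm{post}}}$ adapt onto it, so they lie in $\Delta^{-1}_{h^{\mathrm{post}}}(\partial_{h^{\mathrm{post}}})$, which the support assumption covers. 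It then suffices that this part of the gap region has nonempty interior. That non-degeneracy is left implicit in the paper too, which simply asserts that the number of positively labelled agents decreases.
\end{enumerate}
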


\begin{proof}
    From the proof of Lemma~\ref{lem:positive_y_support_app} we have that $h^\mathrm{post}$ is more demanding than $h^\mathrm{pre}$, in the sense that $\forall x_{\mathrm{post\textit{-}}\partial}\in\partial_\mathrm{post}\setminus\partial_\mathrm{pre},\exists\,x_{\mathrm{pre\textit{-}\partial}}\in\partial_\mathrm{pre}\setminus\partial_\mathrm{post}:x_{\mathrm{post\textit{-}}\partial}\in x_{\mathrm{pre\textit{-}\partial}}+O_s\setminus\vec{0}$.

    Since $y_\mathrm{sco}$ is O-nondecreasing with respect to $x_c$, a point $\tilde x$ is classified as $h(\tilde x)=1$ if $\exists\, x_b\in\partial_h: x_b\prec_{O_s} \tilde x$.
    
    We know $\forall \tilde x, \Delta_{h^\mathrm{pre}}(\tilde x)\neq \tilde x\Rightarrow h(\tilde x)=0$, so when the boundary changes by summing $v\in O_s$ its distance to $\tilde x$ increases in $L_p$-norm.

    Hence $\forall \tilde x,\Delta_{h^\mathrm{post}}(\tilde x)\neq \Delta_{h^\mathrm{pre}}(\tilde x), h(\Delta_{h^\mathrm{post}}(\tilde x))=1:c(\tilde x,\Delta_{h^\mathrm{post}}(\tilde x))>c(\tilde x,\Delta_{h^\mathrm{pre}}(\tilde x))$.

    From Lemma~\ref{lem:positive_y_support_app}, the number of points receiving $\delta$ from $h(\Delta_h(x))=1$ will reduce.

    From the definition of $r_p$ (Definition~\ref{def:longterm_pred_util}), higher cost and reduced $\delta$ determine lower $\underset{x,u}{\mathbb E}[r_p(h,x,u)]$, since we are assuming $\delta_2=0$.

    Therefore $\Delta r_{p\mathrm{\textit{-}short}}(h^\mathrm{post\textit{-}short},h^\mathrm{pre})<0$.
\end{proof}

\subsubsection{Long-term Alignment}
\label{app:long-term-align}

Assuming $\delta_2>0$ and $\epsilon=0$, we have the same $h^\mathrm{post}$ as when $\delta_2=\epsilon=0$, since institution's utility $r_i$ does not change. From Lemma~\ref{lem:positive_y_support_app}, we have that the boundary $\partial_{h^\mathrm{post}}$ becomes more demanding than $\partial_{h^\mathrm{pre}}$, by increasing along $O_s$. Then for any $\tilde x$, if $\Delta_{h^\mathrm{post}}(\tilde x)\neq\Delta_{h^\mathrm{pre}}(\tilde x)$, its change from $h^\mathrm{pre}$ to $h^\mathrm{post}$ must have been one of the categories below:

\begin{itemize}
    \item (maint) Maintained FP or TP at higher cost;
    \item (impr) Switched from gaming to improvement (FP$\rightarrow$ TP);
    \item (TP$\rightarrow$ N) Switched from TP into FN or TN;
    \item (FP$\rightarrow$ TN) Switched from FP into TN.
\end{itemize}

Denote a point's change in adaptation cost as:

$$ \Delta c(x)\triangleq c(x,\Delta_{h^{\mathrm{post}}}(x))-c(x,\Delta_{h^{\mathrm{pre}}}(x))$$

To compute $\Delta r_{\text{p}}( h^{\text{pre}},h^{\text{post}})$ we must then consider how each group's utility $r_p$ changed:
\begin{itemize}
    \item Increased cost of maintaining FP or TP: $-\Delta c(x)$
    \item Gain of improving (FP$\rightarrow$ TP): $\delta_2-\Delta c(x)$
    \item Cost of TP$\rightarrow$ N: $-\delta-\Delta c(x)$
    \item Cost of FP$\rightarrow$ FN: $-(\delta-\delta_2)-\Delta c(x)$
\end{itemize}

For each of the previous groups, denote their densities as: $\mathbb P(\mathrm{maint}):=\int_{x\in \{\mathrm{maint}\}} \mathbb P(x)$, and similarly $\mathbb P(\mathrm{impr})$, $\mathbb P(\mathrm{TP\rightarrow N})$, $\mathbb P(\mathrm{FP\rightarrow TN})$. Their total costs are $c(\mathrm{all}):=\int_{x\in \mathcal X} \Delta c(x)\mathbb P(x)$.

\begin{proposition}
    (Long-term alignment) Considering a flexible enough $\mathcal H$ and $\epsilon=0$, having alignment requires:

    $$\delta_2>\frac{c(\mathrm{all}) 
     +\delta(\mathbb P(\mathrm{TP\rightarrow N})
     +\mathbb P(\mathrm{FP\rightarrow TN}))}
     {\mathbb P(\mathrm{impr})+\mathbb P(\mathrm{FP\rightarrow TN})}$$
    
\end{proposition}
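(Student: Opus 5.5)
The plan is to write $\Delta r_p(h^{\mathrm{post}},h^{\mathrm{pre}})$ as an integral over $\mathcal X$ (and $u$) and split that integral according to the case analysis given just before the statement. With $\epsilon=0$, the institution's utility is the same as in the short-term setting, so $h^{\mathrm{post}}$ coincides with $h^{\mathrm{post}\textit{-}\mathrm{short}}$. By Lemma~\ref{lem:positive_y_support_app}, $\partial_{h^{\mathrm{post}}}$ is obtained from $\partial_{h^{\mathrm{pre}}}$ by moving along $O_s$. Hence every $x$ with $\Delta_{h^{\mathrm{post}}}(x)\neq\Delta_{h^{\mathrm{pre}}}(x)$ falls into exactly one of four groups: (maint), (impr), (TP$\rightarrow$N), (FP$\rightarrow$TN). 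I also need to rule out any transition from a negative prediction to a positive one. This follows from the second half of the proof of Lemma~\ref{lem:positive_y_support_app}: no point gains a positive label under $h^{\mathrm{post}}$. Points whose adaptation is unchanged contribute $\Delta c(x)=0$ and no change in prediction or outcome, so they contribute zero.

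Next I compute the per-point change in $r_p$ from Definition~\ref{def:longterm_pred_util}, using the three terms $\delta h$, $-c$, and $-\delta_2 h(1-y)$.
\begin{itemize}
\item (maint): $-\Delta c(x)$.
\item (impr): the FP penalty disappears, giving $\delta_2-\Delta c(x)$.
\item (TP$\rightarrow$N): the gain $\delta$ is lost, giving $-\delta-\Delta c(x)$.
\item (FP$\rightarrow$TN): the gain $\delta$ is lost but so is the penalty $\delta_2$, giving $\delta_2-\delta-\Delta c(x)$.
\end{itemize}
Integrating against $\mathbb P(x)$ and collecting the cost terms into $c(\mathrm{all})$ gives
\[
\Delta r_p=\delta_2\big(\mathbb P(\mathrm{impr})+\mathbb P(\mathrm{FP\rightarrow TN})\big)-\delta\big(\mathbb P(\mathrm{TP\rightarrow N})+\mathbb P(\mathrm{FP\rightarrow TN})\big)-c(\mathrm{all}).
\]
Requiring $\Delta r_p>0$, i.e.\ alignment in the strict sense used in the figures, and dividing by $\mathbb P(\mathrm{impr})+\mathbb P(\mathrm{FP\rightarrow TN})$ gives the displayed threshold on $\delta_2$. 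If the denominator is zero, then $\Delta r_p\le -c(\mathrm{all})\le 0$, because costs only increase by the proof of the short-term misalignment proposition. In that case alignment is impossible, which is consistent with reading the condition as necessary.

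The main obstacle is justifying that the four groups exhaust all changes and that their outcome labels are as claimed. For (TP$\rightarrow$N) and (FP$\rightarrow$TN), the agent does not adapt under $h^{\mathrm{post}}$. Its outcome is then $y(x,u)$, and I must check that this is consistent with the labels: for the FP$\rightarrow$TN case, $y(x,u)=0$ follows from $O_s$-monotonicity, since the pre-adaptation point lies below a point where $y=0$. For (impr), I need that the post-adaptation point under $h^{\mathrm{post}}$ has $y=1$ while under $h^{\mathrm{pre}}$ it had $y=0$. I would handle this by restricting the case labels to their definitions, so the proposition is really an accounting identity. Measurability of these sets and strictness of the inequality (versus $\ge$ in Definition~\ref{def:aligned_incentives}) are minor points I would note in passing.
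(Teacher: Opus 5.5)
Your proposal is correct and is essentially the paper's proof. The paper assigns the same per-group changes $-\Delta c$, $\delta_2-\Delta c$, $-\delta-\Delta c$ and $-(\delta-\delta_2)-\Delta c$ to (maint), (impr), (TP$\rightarrow$N) and (FP$\rightarrow$TN), sums them to $\delta_2\mathbb P(\mathrm{impr})-\delta\mathbb P(\mathrm{TP\rightarrow N})-(\delta-\delta_2)\mathbb P(\mathrm{FP\rightarrow TN})-c(\mathrm{all})$ and rearranges; your explicit use of the strict inequality is tidier than the paper's mix of $\geq$ and $>$.

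Two side remarks need correcting, though the identity itself survives:
\begin{itemize}
\item Agents already positive under $h^{\mathrm{pre}}$ whose cost to reach $\{h^{\mathrm{post}}=1\}$ exceeds $\delta$ keep the same point under both classifiers yet lose their positive label. They must be counted in (TP$\rightarrow$N) or (FP$\rightarrow$TN) with $\Delta c=0$, not treated as contributing zero.
\item $c(\mathrm{all})\ge 0$ does not follow from the short-term proof, because agents who stop adapting have $\Delta c(x)<0$. Your zero-denominator conclusion still holds, since every remaining group contributes $\le 0$; for example, on (TP$\rightarrow$N), $-\delta-\Delta c(x)=c(x,\Delta_{h^{\mathrm{pre}}}(x))-\delta\le 0$.
\end{itemize}
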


\begin{proof}
    $\Delta r_p(h^{\text{post}},h^{\text{pre}})\geq0 \Leftrightarrow$

    $ \delta_2\mathbb P(\mathrm{impr}) 
    - \delta \mathbb P(\mathrm{TP\rightarrow N})
    -(\delta-\delta_2) \mathbb P(\mathrm{FP\rightarrow TN})
    - c(\mathrm{all})\geq0 \Leftrightarrow$

    $\delta_2>\frac{c(\mathrm{all}) 
     +\delta(\mathbb P(\mathrm{TP\rightarrow N})
     +\mathbb P(\mathrm{FP\rightarrow TN}))}
     {\mathbb P(\mathrm{impr})+\mathbb P(\mathrm{FP\rightarrow TN})}$    
\end{proof}

As long as enough agents benefit from improvement, $\delta_2>\delta$ is not required. One example of long-term alignment with $\delta_2<\delta$ is in \S~\ref{app:longterm_alignment_example} in the appendix.

\subsection{Simulations}
\label{app:align_simulations}

To understand Figure~\ref{fig:incentives_align}, we provide additional heatmaps in Figure~\ref{fig:allheatmaps}. In (c) we have the same plot as in Figure~\ref{fig:incentives_align}, but for a wider range of $\epsilon$. In (a) and (b) we have expected agent utilities under $h^\mathrm{pre}$ and  $h^\mathrm{post}$, respectively. Note (c) is generated by subtracting (b)$-$(a). In (b), for low values of $\epsilon$, the classifier $h^\mathrm{post}$ is always the same and does not incur any FPs, hence $r_p$ is not affected by variations in $\delta_2$. Since $h^\mathrm{pre}$ for low $\epsilon$ incurs some FPs, as $\delta_2$ increases agents progressively prefer $h^\mathrm{post}$, as seen in (c). As epsilon increases to intermediate values (around $\epsilon\in(0.3, 0.65)$), both $h^\mathrm{pre}$ and $h^\mathrm{post}$ incur in more FPs, as a consequence of avoiding TNs. This makes agents prefer higher $\epsilon$ when $\delta_2$ is low. However, increasing $\delta_2$ reverses this trend, and agents are more harmed by $h^\mathrm{pre}$ than by $h^\mathrm{post}$, since $h^\mathrm{pre}$ incurs more FPs. For high values of $\epsilon$, both $h^\mathrm{pre}$ and $h^\mathrm{post}$ arrive at the same solution.

\begin{figure}[htbp]
    \centering
    
    \begin{subfigure}{\textwidth}
        \centering
        \begin{subfigure}{0.3\textwidth}
            \centering
            \includegraphics[width=\linewidth]{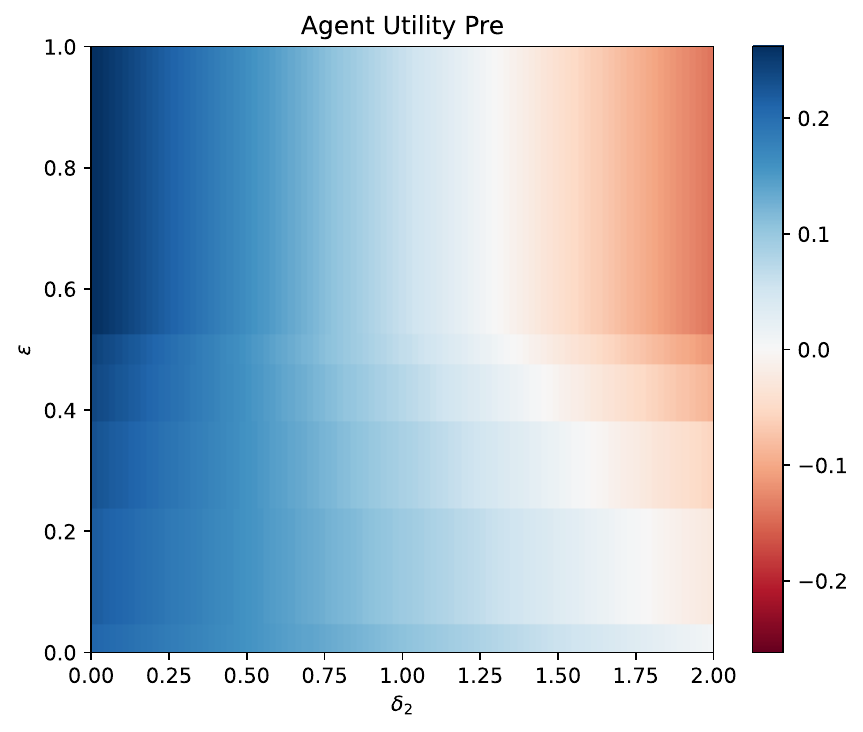}
            \caption{$\underset{x,u}{\mathbb E}[r_p(h^\mathrm{pre},x,u)]$, expected agent utility when minimizing $\ell_{0-1}$ unaware of agent adaptation.}
        \end{subfigure}
        \hfill
        \begin{subfigure}{0.3\textwidth}
            \centering
            \includegraphics[width=\linewidth]{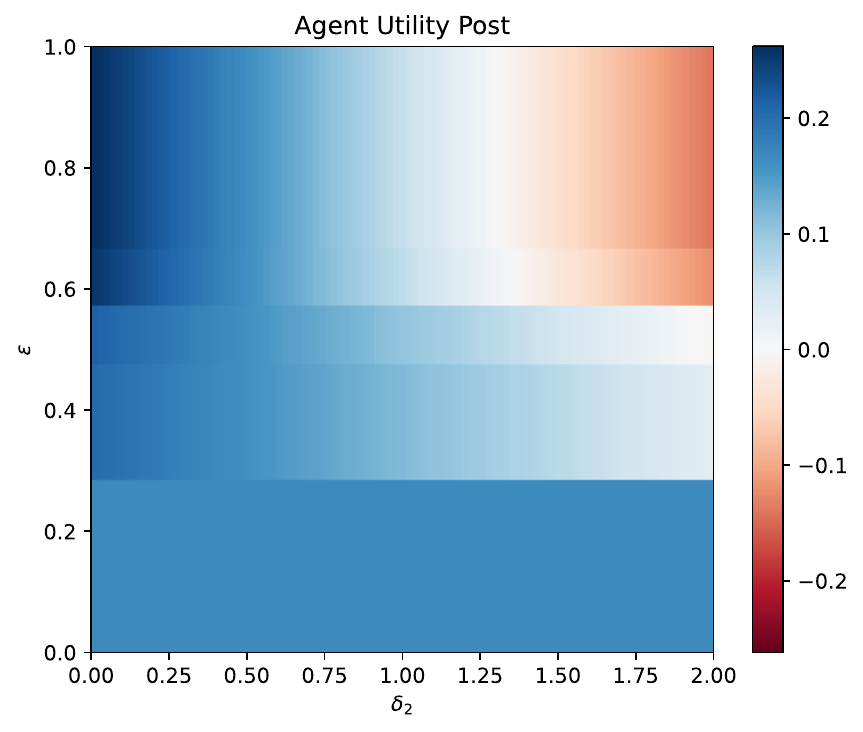}
            \caption{
            $\underset{x,u}{\mathbb E}[r_p(h^\mathrm{post},x,u)]$, expected agent utility when minimizing $\ell_{0-1}$ aware of agent adaptation.}
        \end{subfigure}
        \hfill
        \begin{subfigure}{0.3\textwidth}
            \centering
            \includegraphics[width=\linewidth]{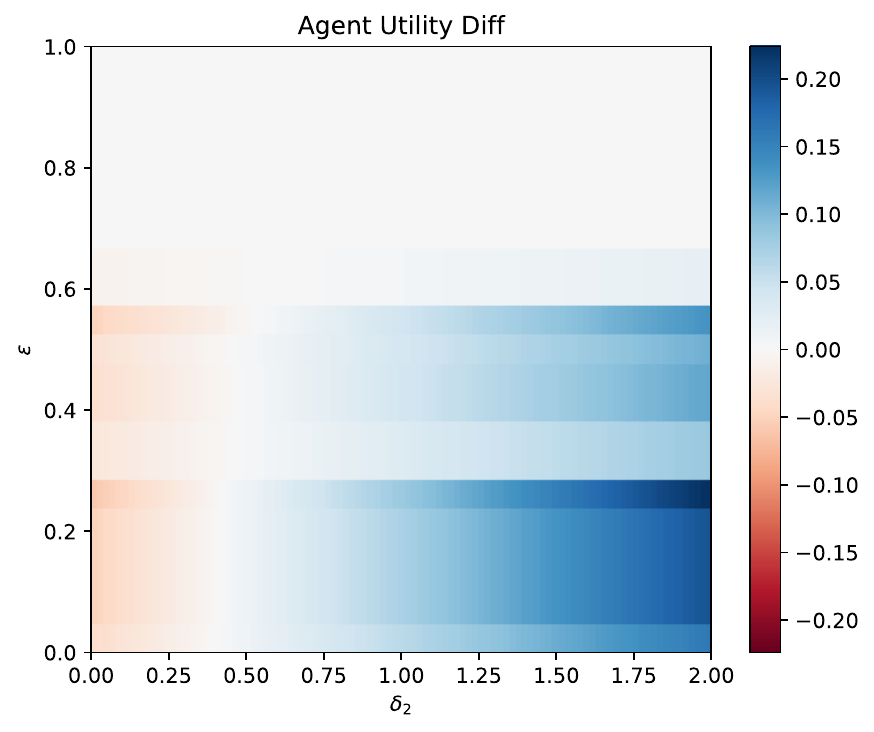}     \caption{Alignment $\underset{}{}\Delta r_p(h^\mathrm{post},h^\mathrm{pre})$, difference between both heatmaps to the left. ($h^\mathrm{post}-h^\mathrm{pre}$)}
            \label{fig:heatmap_diff_eps}
        \end{subfigure}
        \caption{All three plots above consider $\epsilon$ when minimizing pre-adaptation $\ell_{0-1}$}
    \end{subfigure}
    
    \vspace{0.5cm}

    \caption{Detail version of Figure~\ref{fig:incentives_align}, also using $\delta=0.3$. (c) is alignment $\Delta r_p(h^\mathrm{post}-h^\mathrm{pre})$, equal to Figure~\ref{fig:incentives_align} but for a wider range of $\epsilon$. (a) and (b) are expected agent utilities, where the subtraction (b)$-$(a) leads to plot (c).
    }
    \label{fig:allheatmaps}
\end{figure}

\end{document}